
\documentclass{article}

\usepackage{microtype}
\usepackage{graphicx}
\usepackage{booktabs} 

\usepackage{verbatim}
\usepackage{enumitem}
\usepackage{algorithmic}
\usepackage{amsmath,amssymb,amsthm}
\usepackage{mathrsfs}
\usepackage{bbm}
\usepackage{xcolor}
\usepackage{thm-restate}
\usepackage{subfig}

\usepackage{wrapfig}

\DeclareMathOperator*{\argmin}{\arg\!\min}

\usepackage{hyperref}


\usepackage[accepted]{icml2019}


\newtheorem{theorem}{Theorem}[section]
\newtheorem{lemma}[theorem]{Lemma}

\newtheorem{example}[theorem]{Example}
\newtheorem{definition}[theorem]{Definition}

\DeclareMathOperator*{\argmax}{arg\,max}

\newcommand{\statespace}{\mathcal{X}}
\newcommand{\actionspace}{\mathcal{A}}
\newcommand{\state}{x}
\newcommand{\action}{a}
\newcommand{\reward}{r}
\newcommand{\statevar}{X}
\newcommand{\actionvar}{A}
\newcommand{\discountrate}{\gamma}
\newcommand{\rewarddist}[2]{\mathcal{R}(#1, #2)}
\newcommand{\statistic}{s}
\newcommand{\gendist}{\mu}
\newcommand{\genrv}{Z}

\newcommand{\policy}{\pi}
\newcommand{\expectileval}{\epsilon}
\newcommand{\expectilestat}{e}
\newcommand{\numstats}{K}
\newcommand{\statix}{k}
\newcommand{\numsamples}{N}
\newcommand{\sample}{z}
\newcommand{\sampleix}{n}

\newcommand{\rewardvar}{R}

\newcommand{\returndist}[1]{\eta_{#1}}
\newcommand{\BellmanOp}[2]{\mathcal{T}^{#1}_{#2}}
\newcommand{\statval}{\sigma}

\newcommand{\cdrlDepthIx}{L}
\newcommand{\lemmaindex}{m}
\newcommand{\approxstatistic}{\hat{\statistic}}
\newcommand{\approxdist}{\eta}

\newcommand{\expected}{\mathbb{E}}
\newcommand{\supportbound}{R_{\text{max}}}
\newcommand{\support}{\text{supp}}
\newcommand{\supportwidth}{I}

\icmltitlerunning{Statistics and Samples in Distributional Reinforcement Learning}

\begin{document}

\twocolumn[
\icmltitle{Statistics and Samples in Distributional Reinforcement Learning}




\begin{icmlauthorlist}
\icmlauthor{Mark Rowland}{dm}
\icmlauthor{Robert Dadashi}{gb}
\icmlauthor{Saurabh Kumar}{gb}
\icmlauthor{R\'emi Munos}{dm}
\icmlauthor{Marc G. Bellemare}{gb}
\icmlauthor{Will Dabney}{dm}
\end{icmlauthorlist}

\icmlaffiliation{dm}{DeepMind}
\icmlaffiliation{gb}{Google Brain}

\icmlcorrespondingauthor{Mark Rowland}{markrowland@google.com}

\icmlkeywords{Machine Learning, ICML, Reinforcement Learning, Distributional, Distributional Reinforcement Learning}

\vskip 0.3in
]



\printAffiliationsAndNotice{\icmlEqualContribution} 

\begin{abstract}
We present a unifying framework for designing and analysing distributional reinforcement learning (DRL) algorithms in terms of recursively estimating statistics of the return distribution. Our key insight is that DRL algorithms can be decomposed as the combination of some statistical estimator and a method for imputing a return distribution consistent with that set of statistics. With this new understanding, we are able to provide improved analyses of existing DRL algorithms as well as construct a new algorithm (EDRL) based upon estimation of the \textit{expectiles} of the return distribution. We compare EDRL with existing methods on a variety of MDPs to illustrate concrete aspects of our analysis, and develop a deep RL variant of the algorithm, ER-DQN, which we evaluate on the Atari-57 suite of games.
\end{abstract}

\section{Introduction}\label{sec:intro}

In reinforcement learning (RL), a central notion is the \textit{return}, the sum of discounted rewards. Typically, the average of these returns is estimated by a value function and used for policy improvement. Recently, however, approaches that attempt to learn the distribution of the return have been shown to be surprisingly effective \citep{MorimuraNonparametric,MorimuraParametric,C51,QRDQN,IQN,gruslys2018the}; we refer to the general approach of learning return distributions as \emph{distributional RL} (DRL).

Despite impressive experimental performance \citep{C51,D4PG,IQN} and fundamental theoretical results \citep{AnalysisCDRL,qu2018nonlinear}, it remains challenging to develop and analyse DRL algorithms. In this paper, we propose to address these challenges by phrasing DRL algorithms in terms of recursive estimation of sets of statistics on the return distribution. We observe that DRL algorithms can be viewed as combining a statistical estimator with a procedure we refer to as an \textit{imputation strategy}, which generates a return distribution consistent with the set of statistical estimates. This highly general approach (see Figure~\ref{fig:idrl_intro}) requires a precise treatment of the differing roles of statistics and samples in distributional RL.

Using this framework we are able to provide new theoretical results for existing DRL algorithms as well as demonstrate the derivation of a new algorithm based on the expectiles of the return distribution. More importantly, our novel approach immediately applies to a large class of statistics and imputation strategies, suggesting several avenues for future research.
Specifically, we are able to provide answers to the following questions:

\begin{enumerate}[label=(\roman*),leftmargin=0.6cm,topsep=-1pt,itemsep=-1ex,partopsep=1ex,parsep=1ex]
    \item Can we describe existing DRL algorithms in a unifying framework, and could such a framework be used to develop new algorithms?
    \item What return distribution statistics can be learnt \emph{exactly} through Bellman updates?
    \item If certain statistics cannot be learnt exactly, how can we estimate them in a principled manner, and give guarantees on their approximation error relative to the true values of these statistics?
\end{enumerate}

After reviewing relevant background material, we begin with (i) by presenting a new framework for understanding DRL, that is, in terms of a set of \emph{statistics} to be learnt, and an \emph{imputation strategy} for specifying a dynamic programming update.
We then
formalise (ii) by
introducing the notion of \emph{Bellman closedness} for collections of statistics, and show that in a wide class of statistics, the only properties of return distributions that can be learnt exactly through Bellman updates are moments. Interestingly, this rules out statistics such as quantiles that have formed the basis of successful existing DRL algorithms.
However, we then
address (iii) by
showing that the framework allows us to give guarantees on the approximation error introduced in learning these statistics, through the notion of \emph{approximate Bellman closedness}. 
We apply the framework developed in answering these questions to the case of \emph{expectile} statistics to develop a new distributional RL algorithm, which we term Expectile Distributional RL (EDRL).  
Finally, we test these new insights on a variety of MDPs
and larger-scale environments
to illustrate and expand on the theoretical contributions developed earlier in the paper.

\begin{figure}
    \centering
    \includegraphics[keepaspectratio,width=0.48\textwidth]{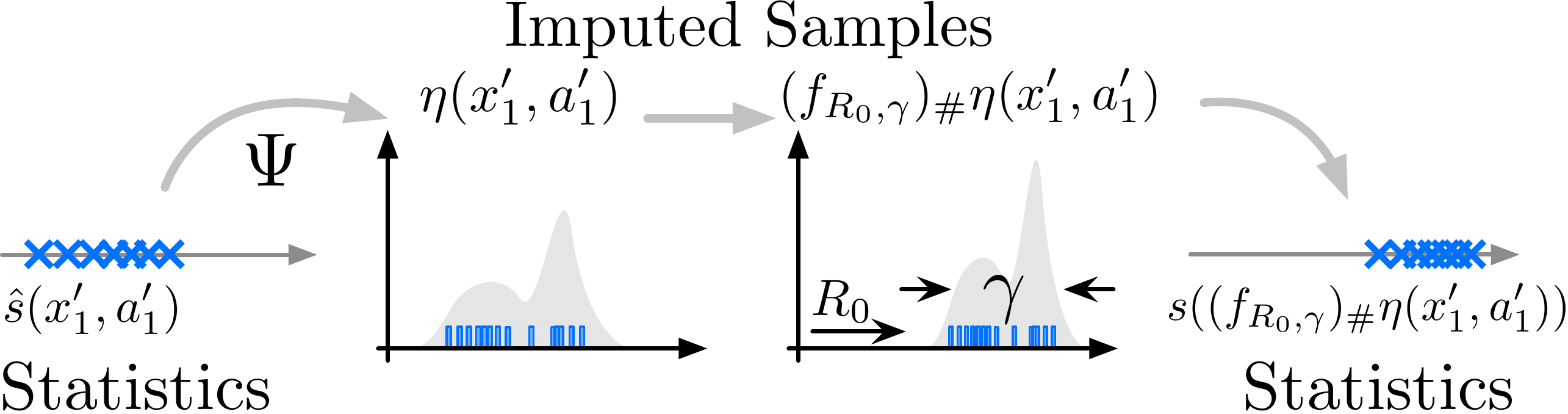}
    \caption{Illustration of learning with imputed samples from sets of statistics. Left: A distribution is imputed from the current statistical estimate. Middle: The distributional Bellman operator is applied to the imputed distribution. Right: New statistics are estimated based upon samples from the imputed distribution.}
    \label{fig:idrl_intro}
\end{figure}

\section{Background}\label{sec:background}
Consider a Markov decision process $(\statespace, \actionspace, p, \gamma, \mathcal{R})$ with finite state space $\statespace$, finite action space $\actionspace$, transition kernel $p: \statespace \times \actionspace \rightarrow \mathscr{P}(\statespace)$, discount rate $\discountrate \in [0, 1)$, and reward distributions $\rewarddist{\state}{\action} \in \mathscr{P}(\mathbb{R})$ for each $(\state, \action) \in \statespace \times \actionspace$. Thus, if an agent is at state $\statevar_t \in \statespace$ at time $t \in \mathbb{N}_0$, and an action $\actionvar_t \in \actionspace$ is taken, the agent transitions to a state $\statevar_{t+1} \sim p(\cdot| \statevar_{t}, \actionvar_{t})$ and receives a reward $\rewardvar_t \sim \rewarddist{\statevar_t}{\actionvar_t}$.
We now briefly review two principal goals in reinforcement learning.

Firstly, given a Markov policy $\policy : \statespace \rightarrow \mathscr{P}(\actionspace)$, \emph{evaluation} of $\policy$ consists of computing the expected returns $Q^\policy(\state, \action) = \mathbb{E}_\policy\left\lbrack \sum_{t=0}^\infty \gamma^t \rewardvar_t | \statevar_0 = \state, \actionvar_0 = \action \right\rbrack$, where $\mathbb{E}_\pi$ indicates that at each time step $t \in \mathbb{N}$, the agent's action $A_t$ is sampled from $\pi(\cdot | \statevar_t)$. Secondly, the task of \emph{control} consists of finding a policy $\policy : \statespace \rightarrow \mathscr{P}(\actionspace)$ for which the expected returns are maximised.

\subsection{Bellman equations}

The classical \emph{Bellman equation} \citep{Bellman} relates expected returns at each state-action pair $(x, a) \in \statespace \times \actionspace$ to the expected returns at possible next states in the MDP by:
\begin{align}\label{eq:meanBellman}
    Q^\pi\!(\state, \action)\! =\! \mathbb{E}_{\policy}\!\left\lbrack \rewardvar_0 \!+\! \gamma Q^\policy(\statevar_1, \actionvar_1 ) | \statevar_0 \!=\! \state, \actionvar_0 \!=\! \action \right\rbrack .
\end{align}
This gives rise to the following fixed-point iteration scheme
\begin{align}\label{eq:meanBackup}
    Q(\state, \action)\! \leftarrow\! \mathbb{E}_{\policy}\!\left\lbrack \rewardvar_0 + \gamma Q(\statevar_1, \actionvar_1 ) | \statevar_0 = \state, \actionvar_0 = \action \right\rbrack ,
\end{align}
for updating a collection of \emph{approximations} $(Q(\state, \action) | (\state, \action) \in \statespace \times \actionspace)$ towards their true values.
This fundamental algorithm, together with techniques from approximate dynamic programming and stochastic approximation, allows expected returns in an MDP to be learnt and improved upon, forming the basis of all value-based RL \citep{sutton2018reinforcement}.

The \emph{distributional Bellman equation} describes a similar relationship to Equation \eqref{eq:meanBellman} at the level of probability distributions \citep{MorimuraNonparametric,MorimuraParametric,C51}. Letting $\returndist{\policy}(\state, \action) \in \mathscr{P}(\mathbb{R})$ be the \emph{distribution} of the random return $\sum_{t=0}^\infty \gamma^t \rewardvar_t \ | \statevar_0 = \state, \actionvar_0 = \action$ when actions are selected according to $\policy$, we have
\begin{align}\label{eq:distBellmanEq}
    \returndist{\policy}(\state, \action) &= (\mathcal{T}^\pi \returndist{\policy})(\state, \action),\\
   \nonumber &= \mathbb{E}_\pi \!\left\lbrack (f_{\rewardvar_0, \gamma})_\# \returndist{\policy}(\statevar_1, \actionvar_1) |  \statevar_0 \!=\! \state, \actionvar_0 \!=\! \action \right\rbrack ,
\end{align}
where the expectation gives a mixture distribution over next-states, $f_{r, \gamma} : \mathbb{R} \rightarrow \mathbb{R}$ is defined by $f_{r, \gamma}(x) = r + \gamma x$, and $g_\# \gendist \in \mathscr{P}(\mathbb{R})$ is the pushforward of the measure $\mu$ through the function $g$, so that for all Borel subsets $A \subseteq \mathbb{R}$, we have $g_\# \mu (A) = \mu(g^{-1}(A))$ \citep{AnalysisCDRL}.

Stated in terms of the random return $Z^\pi(\state, \action)$, distributed according to $\returndist{\policy}(\state, \action)$, this takes a more familiar form with

\begin{equation*}
    Z^\pi(\state, \action) \overset{D}{=} \rewardvar_0 + \gamma Z^\pi(\statevar_1, \actionvar_1) \, .
\end{equation*}

In analogy with Expression \eqref{eq:meanBackup}, an update operation could be defined from Equation \eqref{eq:distBellmanEq} to move a collection of approximate distributions $(\approxdist(\state, \action) | (\state, \action) \in \statespace \times \actionspace)$ towards the true return distributions.
However, since the space of distributions $\mathscr{P}(\mathbb{R})$ is infinite-dimensional, it is typically impossible to work directly with the distributional Bellman equation, and existing approaches to distributional RL generally rely on parametric approximations to this equation; we briefly review some important examples of these approaches below.

\subsection{Categorical and quantile distributional RL}\label{sec:CDRLQDRLDescriptions}

To date, the main approaches to DRL employed at scale have included learning discrete \emph{categorical} distributions \citep{C51,D4PG,qu2018nonlinear}, and learning distribution \emph{quantiles} \citep{QRDQN,IQN,zhang2018quota}; we refer to these approaches as CDRL and QDRL respectively. We give brief accounts of the dynamic programming versions of these algorithms here, with full descriptions of stochastic versions, related results, and visualisations given in Appendix Section~\ref{sec:olderalgorithms} for completeness. We note also that other approaches, such as learning mixtures of Gaussians, have been explored \citep{D4PG}.

\textbf{CDRL.} CDRL assumes a \emph{categorical} form for return distributions, taking $\approxdist(\state, \action) = \sum_{\statix=1}^\numstats p_\statix(\state, \action) \delta_{z_\statix}$, where $\delta_{z}$ denotes the Dirac distribution at location $z$. The values $z_1 < \cdots < z_\numstats$ are an evenly spaced, fixed set of supports, and the probability parameters $p_{1:\numstats}(\state, \action)$ are learnt. The corresponding Bellman update takes the form
\begin{align*}
    \approxdist(\state, \action) \leftarrow (\Pi_\mathcal{C} \mathcal{T}^\pi \approxdist) (\state, \action),
\end{align*}
where $\Pi_\mathcal{C} : \mathscr{P}(\mathbb{R}) \rightarrow \mathscr{P}(\{z_1,\ldots,z_\numstats\})$ is a projection operator which ensures the right-hand side of the expression above is a distribution supported only on $\{z_1,\ldots,z_\numstats\}$; full details are reviewed in Appendix Section~\ref{sec:olderalgorithms}.

\textbf{QDRL.} In contrast, QDRL assumes a parametric form for return distributions $\approxdist(\state, \action) = \frac{1}{\numstats} \sum_{\statix=1}^\numstats \delta_{z_\statix(\state, \action)}$, where now $z_{1:\numstats}(\state, \action)$ are learnable parameters. The Bellman update is given by moving the atom location $z_\statix(\state, \action)$ in $\approxdist(\state, \action)$ to the $\tau_k$-quantile (where $\tau_k =\frac{2\statix - 1}{2\numstats}$) of the target distribution $\mu := (\mathcal{T}^\pi \approxdist)(\state, \action)$, defined as the minimiser $q^* \in \mathbb{R}$ of the quantile regression loss
\begin{align}\label{eq:qr}
    \mathrm{QR}(q; \gendist, \tau_k) =
    \mathbb{E}_{\genrv \sim \gendist}\!\left\lbrack \left\lbrack \tau_\statix \mathbbm{1}_{\genrv > q} + (1 - \tau_\statix) \mathbbm{1}_{\genrv \leq q} \right\rbrack |\genrv - q| \right\rbrack  .
\end{align}

\section{The role of statistics in distributional RL}\label{sec:statsAndSamples}

In this section, we describe a new perspective on existing distributional RL algorithms, with a focus on learning sets of statistics, rather than approximate distributions. We begin with a precise definition.

\begin{definition}[\textbf{Statistics}]
    A \emph{statistic} is a function $s : \mathscr{P}(\mathbb{R}) \rightarrow \mathbb{R}$. We also allow statistics to be defined on subsets of $\mathscr{P}(\mathbb{R})$, in situations where an assumption (such as finite moments) is required for the statistic to be defined.
\end{definition}

The QDRL update described in Section~\ref{sec:CDRLQDRLDescriptions} is readily interpreted from the perspective of learning statistics; the update extracts the values of a finite set of quantile statistics from the target distribution, and all other information about the target is lost. It is less obvious whether the CDRL update can also be interpreted as keeping track of a finite set of statistics, but the following lemma shows that this is indeed the case.

\begin{restatable}{lemma}{CDRLStats}\label{lem:CDRLStats}
    CDRL updates, with distributions supported on $z_1 < \ldots < z_K$, can be interpreted as learning the values of the following statistics of return distributions:
    \begin{align*}
        s_{z_{\statix}, z_{\statix+1}}(\gendist) \!=\! \mathbb{E}_{\genrv \sim \gendist}\!\left\lbrack h_{z_{\statix}, z_{\statix+1}}(\genrv) \right\rbrack \textrm{\ for\ \ } \statix\!=\!1,\ldots,\numstats\!-\!1 \, ,
    \end{align*}
    where for $a<b$, $h_{a, b} : \mathbb{R} \rightarrow \mathbb{R}$ is a piecewise linear function defined so that $h_{a,b}(x)$ is equal to $1$ for $x \leq a$, equal to $0$ for $x \geq b$, and linearly interpolating between $h_{a,b}(a)$ and $h_{a,b}(b)$ for $x \in [a,b]$.
\end{restatable}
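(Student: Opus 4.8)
The plan is to show that the CDRL iteration, which updates categorical distributions on $\{z_1, \ldots, z_K\}$ via $\approxdist(\state, \action) \leftarrow (\Pi_{\mathcal{C}} \mathcal{T}^\pi \approxdist)(\state, \action)$, factors entirely through the $K-1$ statistics $s_{z_1, z_2}, \ldots, s_{z_{K-1}, z_K}$, in the sense that the updated value of each such statistic at $(\state, \action)$ is a function of the reward distributions and of the values of these same statistics on the distributions $\approxdist(\statevar_1, \actionvar_1)$ --- and of nothing else about $\approxdist$. I would do this in two steps: (a) show that a categorical distribution $\sum_{k=1}^K p_k \delta_{z_k}$ is in bijection with its vector of statistics $(s_{z_1, z_2}, \ldots, s_{z_{K-1}, z_K})$; and (b) show that for an arbitrary $\mu \in \mathscr{P}(\mathbb{R})$ the categorical projection preserves these statistics, $s_{z_k, z_{k+1}}(\Pi_{\mathcal{C}} \mu) = s_{z_k, z_{k+1}}(\mu)$ for each $k$.

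For step (a), evaluating $h_{z_k, z_{k+1}}$ at the atoms gives $h_{z_k, z_{k+1}}(z_j) = \mathbbm{1}_{j \leq k}$ (using the endpoint conventions $h_{z_k, z_{k+1}}(z_k) = 1$ and $h_{z_k, z_{k+1}}(z_{k+1}) = 0$), so $s_{z_k, z_{k+1}}(\sum_j p_j \delta_{z_j}) = \sum_{j=1}^k p_j$. These partial sums form the cumulative distribution function of $(p_j)_{j}$, hence determine and are determined by $(p_j)_{j}$ by differencing, which is the claimed bijection between the simplex and $\{(s_1, \ldots, s_{K-1}) : 0 \leq s_1 \leq \cdots \leq s_{K-1} \leq 1\}$.

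For step (b), I would verify, case by case on the location of $y$, that $h_{z_k, z_{k+1}}(y)$ equals the mass $(\Pi_{\mathcal{C}} \delta_y)(\{z_1, \ldots, z_k\})$ that the categorical projection of $\delta_y$ assigns to $\{z_1, \ldots, z_k\}$; that is, $h_{z_k, z_{k+1}}$ is exactly the ``cumulative hat function'' of the projection. For $y \leq z_k$ (including $y < z_1$, where all mass is clipped to $z_1$) both sides equal $1$; for $y \geq z_{k+1}$ (including $y > z_K$, clipped to $z_K$) both sides equal $0$; and for $z_k \leq y \leq z_{k+1}$ the projection places mass $(z_{k+1} - y)/(z_{k+1} - z_k)$ on $z_k$ and the remainder on $z_{k+1}$, matching the linear interpolation defining $h_{z_k, z_{k+1}}$ on $[z_k, z_{k+1}]$. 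Taking expectations over $y \sim \mu$ and using that $\Pi_{\mathcal{C}}$ acts linearly on measures then gives $s_{z_k, z_{k+1}}(\Pi_{\mathcal{C}} \mu) = \mathbb{E}_{\genrv \sim \mu}[h_{z_k, z_{k+1}}(\genrv)] = s_{z_k, z_{k+1}}(\mu)$.

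Combining the two steps: by (b), the updated statistics of $\approxdist(\state, \action)$ equal $s_{z_k, z_{k+1}}(\mathcal{T}^\pi \approxdist(\state, \action)) = \mathbb{E}_\pi[\mathbb{E}_{\genrv \sim \approxdist(\statevar_1, \actionvar_1)}[h_{z_k, z_{k+1}}(\rewardvar_0 + \gamma \genrv)]]$, and since each $\approxdist(\statevar_1, \actionvar_1)$ is categorical on $\{z_1, \ldots, z_K\}$ this inner expectation is $\sum_j p_j(\statevar_1, \actionvar_1) h_{z_k, z_{k+1}}(\rewardvar_0 + \gamma z_j)$, which by (a) is a function of the statistics $(s_{z_j, z_{j+1}}(\approxdist(\statevar_1, \actionvar_1)))_j$ and of $\rewardvar_0$ only. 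Hence the CDRL iteration is precisely an iteration on the vector of statistics $(s_{z_k, z_{k+1}})_{k=1}^{K-1}$, as claimed. I expect the only fiddly part to be the case analysis in (b), in particular checking that the clipping of out-of-range mass onto $z_1$ and $z_K$ is consistent with the constant pieces of $h_{z_k, z_{k+1}}$; everything else is elementary linear algebra on the simplex.
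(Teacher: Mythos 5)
Your proposal is correct and takes essentially the same route as the paper's proof, which likewise rests on the two observations that $\Pi_{\mathcal{C}}$ preserves the statistics $s_{z_k,z_{k+1}}$ and that a categorical distribution on $\{z_1,\ldots,z_K\}$ is determined by its vector of these statistics. You simply supply the details (the partial-sum/differencing argument for injectivity and the case analysis identifying $h_{z_k,z_{k+1}}$ with the cumulative mass of the projection) that the paper leaves implicit.
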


Although viewing distributional RL as approximating the return distribution with some parameterisation is intuitive from an algorithmic standpoint, there are advantages to thinking in terms of sets of statistics and their recursive estimation; this perspective allows us to precisely quantify what information is being passed through successive distributional Bellman updates. This in turn leads to new insights in the development and analysis of DRL algorithms. Before addressing these points, we first consider a motivating example where a lack of precision could lead us astray.

\subsection{Expectiles}\label{sec:expectilemotivation}

Motivated by the success of QDRL, we consider learning \emph{expectiles} of return distributions, a family of statistics introduced by \citet{newey1987asymmetric}. Expectiles generalise the mean in analogy with how quantiles generalise the median. As the goal of RL is to maximise mean returns, we conjectured that expectiles, in particular, might lead to successful DRL algorithms. We begin with a formal definition.
\begin{definition}[\textbf{Expectiles}]\label{def:expectiles}
Given a distribution $\gendist \in \mathscr{P}(\mathbb{R})$ with finite second moment, and $\tau \in [0,1]$, the $\tau$-expectile of $\gendist$ is defined to be the minimiser $q^* \in \mathbb{R}$ of the expectile regression loss $\mathrm{ER}(q; \gendist, \tau)$, given by
\begin{align*}
    \mathrm{ER}(q; \gendist, \tau) = \mathbb{E}_{\genrv \sim \gendist}\left\lbrack \left\lbrack \tau \mathbbm{1}_{\genrv > q} + (1 - \tau) \mathbbm{1}_{\genrv \leq q} \right\rbrack (\genrv - q)^2 \right\rbrack .
\end{align*}
For each $\tau \in [0,1]$, we denote the $\tau$-expectile of $\gendist$ by $\expectilestat_\tau(\gendist)$.
\end{definition}
We remark that: (i) the expectile regression loss is an asymmetric version of the squared loss, just as the quantile regression loss is an asymmetric version of the absolute value loss; and (ii) the $1/2$-expectile of $\gendist$ is simply its mean. Because of this, we can attempt to derive an algorithm by replacing the quantile regression loss in QDRL with the expectile regression loss in Definition~\ref{def:expectiles}, so as to learn the expectiles corresponding to $\tau_1,\ldots,\tau_\numstats \in [0,1]$.

Following this logic, we again take approximate distributions of the form $\approxdist(\state, \action) = \frac{1}{\numstats} \sum_{\statix=1}^\numstats \delta_{z_\statix(\state, \action)}$, and we perform updates according to
\begin{align}\label{eq:naiveExpectileUpdate}
    z_\statix(\state, \action) \leftarrow \argmin_{q \in \mathbb{R}} \mathrm{ER}(q; \mu, \tau_k) \, ,
\end{align}
where $\mu = (\mathcal{T}^\pi \returndist{})(\state, \action)$ is the target distribution.

\begin{figure}[t]
    \centering
    \includegraphics[keepaspectratio, width=.45\textwidth]{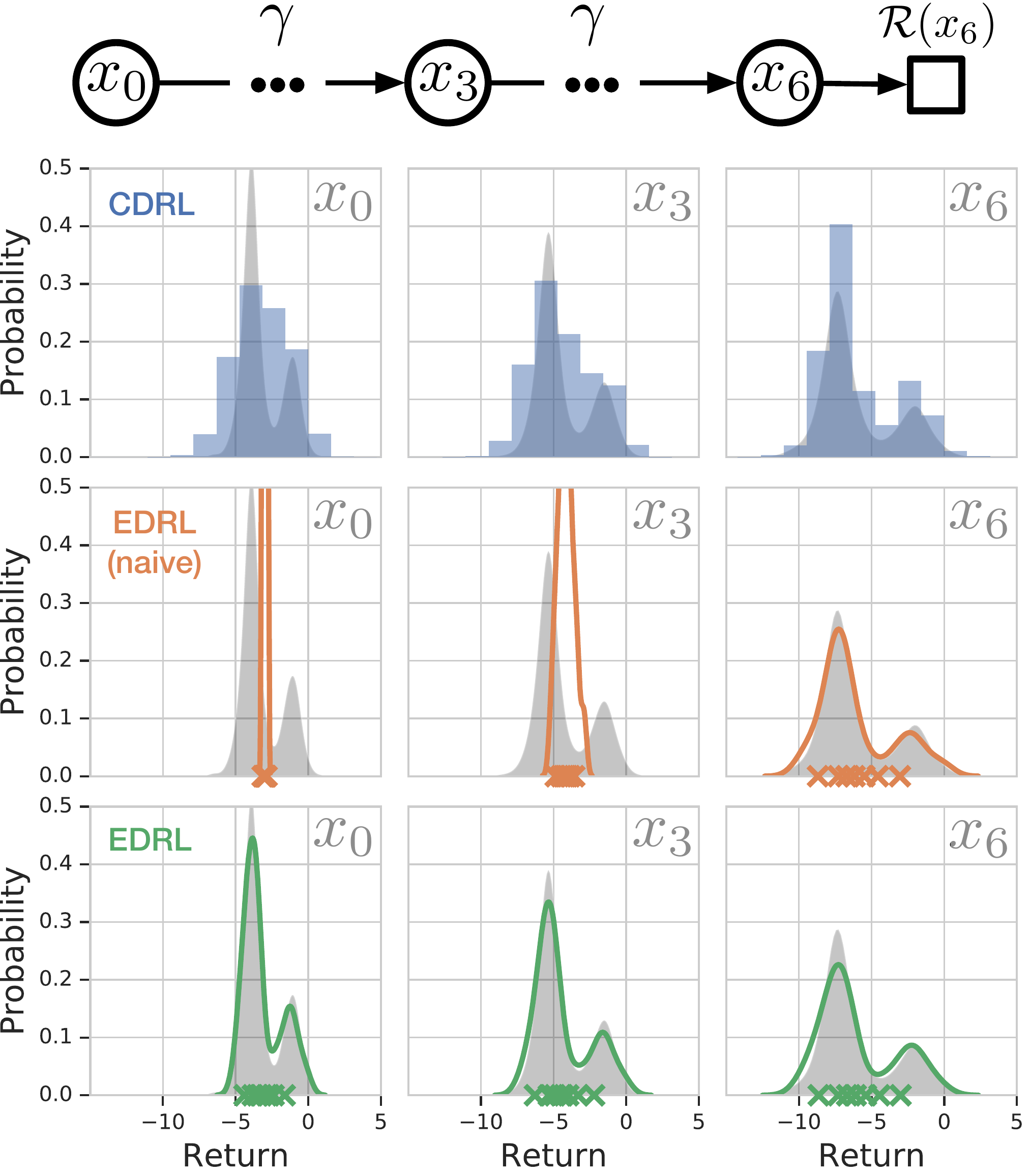}
    \caption{Chain MDP, one action, with bimodal reward distribution at absorbing state $x_6$ and $\gamma = 0.9$. CDRL (top, blue) fits the true return distribution (grey) well, but overestimates the variance. A naive approach to EDRL (middle, orange) accurately fits the immediate reward distribution at $x_6$, but quickly collapses to zero variance with successive Bellman updates. Our proposed approach, EDRL, using imputation strategies (bottom, green) provides an accurate approximation through many Bellman updates.}
    \label{fig:chainExample}
\end{figure}

In practice, however, this algorithm does not perform as we might expect, and in fact the variance of the learnt distributions collapses as training proceeds, indicating that the algorithm does not approximate the true expectiles in any reasonable sense. In Figure~\ref{fig:chainExample}, we illustrate this point by comparing the learnt statistics for this ``naive'' approach with those of CDRL and our proposed algorithm EDRL (introduced in Section~\ref{sec:expectiles}). All methods accurately approximate the immediate reward distribution (right), but as successive Bellman updates are applied the different algorithms show characteristic approximation errors. The CDRL algorithm overestimates the variance of the return distribution due to the projection $\Pi_\mathcal{C}$ splitting probability mass across the discrete support. By contrast, the naive expectile approach underestimates the true variance, quickly converging to a single Dirac.

We observe that there is a \emph{``type error''} present in Expression~\eqref{eq:naiveExpectileUpdate}; the parameter being updated, $z_{\statix}(\state, \action)$, has the semantics of a \emph{statistic}, as the minimiser of the $\mathrm{ER}$ loss, whilst the parameters appearing in the target distribution $(\mathcal{T}^\pi \approxdist)(\state, \action)$ have the semantics of \emph{outcomes}/\emph{samples}.
A crucial message of this paper is the need to distinguish between statistics and samples in distributional RL; in the next section, we describe a general framework for achieving this.

\subsection{Imputation strategies}\label{sec:imputationstrategies}

If we had access to full return distribution estimates $\approxdist(\state^\prime, \action^\prime)$ at each possible next state-action pair $(\state^\prime, \action^\prime)$, we would be able to avoid the conflation between samples and statistics described in the previous section. Denoting the approximation to the value of a statistic $\statistic_\statix$ at a state-action pair $(\state, \action) \in \statespace\times\actionspace$ by $\approxstatistic_\statix(\state, \action)$, we would like to update according to:
\begin{align}\label{eq:imputedBackup}
    \approxstatistic_\statix(\state, \action) \!\leftarrow\! \statistic_\statix\left((\mathcal{T}^\pi \approxdist)(\state, \action)\right) .
\end{align}
Thus, a principled way in which to design DRL algorithms for collections of statistics 
is to include an \emph{additional} step in the algorithm in which for any state-action pair $(\state^\prime, \action^\prime)$ that we would like to backup from, the estimated statistics $\approxstatistic_{1:\numstats}(\state^\prime, \action^\prime)$ are converted into a consistent distribution $\approxdist(\state^\prime, \action^\prime)$. This would then allow backups of the form in Expression~\eqref{eq:imputedBackup} to be carried out. This notion is formalised in the following definition.

\begin{definition}[\textbf{Imputation strategies}]
    Given a set of statistics $\{\statistic_1, \ldots, \statistic_\numstats\}$, an \emph{imputation strategy} is a function $\Psi : \mathbb{R}^\numstats \rightarrow \mathscr{P}(\mathbb{R})$ that maps each vector of statistic values to a distribution that has those statistics. Mathematically, $\Psi$ is such that $\statistic_i(\Psi(\sigma_{1:\numstats})) = \sigma_i$, for each $i \in \{1,\ldots,\numstats\}$ and each collection of statistic values $\sigma_{1:\numstats} \in \mathbb{R}^\numstats$.
\end{definition}

Thus, an imputation strategy is simply a function that takes in a collection of values for certain statistics, and returns a probability distribution with those statistic values; in some sense, it is a pseudo-inverse of $\statistic_{1:\numstats}$.

\begin{example}[\textbf{Imputation strategies in CDRL and QDRL}]
In QDRL, the imputation strategy is given by $\Psi(\statval_{1:\numstats}) = \frac{1}{\numstats}\sum_{\statix=1}^\numstats \delta_{\statval_\statix}$. In CDRL, given approximate statistics $\approxstatistic_{z_\statix, z_{\statix+1}}(\state, \action)$ for $k=1,\ldots,\numstats-1$, the imputation strategy is given by selecting the distribution $\sum_{\statix=1}^\numstats p_{\statix} \delta_{z_\statix}$ such that $p_1 = \approxstatistic_{z_1, z_2}(\state, \action)$, $p_\statix = \approxstatistic_{z_{\statix}, z_{\statix+1}}(\state, \action) - \approxstatistic_{z_{\statix-1},z_{\statix}}(\state, \action)$ for $\statix=2,\ldots,\numstats-1$, and $p_\numstats = 1- \sum_{\statix < \numstats} p_\statix$.
\end{example}
We now have a general framework for defining principled distributional RL algorithms: (i) select a family of statistics to learn; (ii) select an imputation strategy; (iii) perform (or approximate) updates of the form in Expression~\eqref{eq:imputedBackup}. We summarise this in Algorithm \ref{alg:genDRL}.

 \begin{algorithm}
     \begin{algorithmic}
     \REQUIRE Statistic estimates $\approxstatistic_{1:\numstats}(\state, \action)$ $\forall (\state, \action) \in \statespace \times \actionspace$ and $\statix=1,\ldots,\numstats$, imputation strategy $\Psi$.
     \STATE Select state-action pair $(\state, \action) \in \statespace \times \actionspace$ to update.
     \STATE Impute distribution at each possible next state-action pair:
     \STATE $\quad \approxdist(\state^\prime, \action^\prime) = \Psi(\approxstatistic_{1:\numstats}(\state^\prime, \action^\prime)),
    \quad \forall (\state^\prime, \action^\prime) \in \statespace \times \actionspace$.
     \STATE Update statistics at $(\state, \action) \in \statespace\times\actionspace$:
     \STATE $\quad \approxstatistic_\statix(\state, \action) \!\leftarrow\! \statistic_\statix\left((\mathcal{T}^\pi \approxdist)(\state, \action) \right)$.
     \end{algorithmic}
     \caption{Generic DRL update algorithm.}
     \label{alg:genDRL}
 \end{algorithm}

\subsection{Expectile distributional reinforcement learning}\label{sec:expectiles}
We now apply the general framework of statistics and imputation strategies developed in Section~\ref{sec:imputationstrategies} to the specific case of \emph{expectiles}, introduced in Section~\ref{sec:expectilemotivation}.
We will define an imputation strategy so that updates of the form given in Expression~\eqref{eq:imputedBackup} can be applied to learn expectiles.

The imputation strategy has the task of accepting as input a  collection of expectile values $\expectileval_{1}, \ldots, \expectileval_{\numstats}$, corresponding to $\tau_1,\ldots,\tau_\numstats \in (0,1)$, and computing a probability distribution $\mu$ such that $\expectilestat_{\tau_i}(\mu) = \expectileval_{i}$ for $i = 1, \ldots, \numstats$. Since $\text{ER}(q; \gendist, \tau)$ is strictly convex as a function of $q$, this can be restated as finding a probability distribution $\gendist$ satisfying the first-order optimality conditions
\begin{align}\label{eq:rootFindingProblem}
    \nabla_q \text{ER}(q; \gendist, \tau_i ) \big|_{q = \expectileval_i} \!\! = 0 \ \ \ \forall i \in [\numstats] \, .
\end{align}
This defines a root-finding problem, but may equivalently be formulated as a minimisation problem, with objective
\begin{align}\label{eq:minProblem}
    \sum_{i=1}^\numstats \left( \nabla_q \text{ER}(q; \gendist, \tau_i ) \big|_{q = \expectileval_i} \right)^2 \, .
\end{align}
By constraining the distribution $\mu$ to be of the form $\frac{1}{\numsamples} \sum_{\sampleix=1}^\numsamples \delta_{\sample_\sampleix}$ and viewing the minimisation objective above as a function of $z_{1:\numsamples}$, it is straightforwardly verifiable that this minimisation problem is convex. The imputation strategy is thus defined implicitly, by stating that $\Psi(\epsilon_{1:\numstats})$ is given by a minimiser of \eqref{eq:minProblem} of the form $\frac{1}{\numsamples} \sum_{\sampleix=1}^\numsamples \delta_{\sample_\sampleix}$. We remark that other parametric choices for $\gendist$ are possible, but the mixture of Dirac deltas described above leads to a particular tractable optimisation problem.

Having established an imputation strategy $\Psi$, Algorithm~\ref{alg:genDRL} now yields a full DRL algorithm for learning expectiles, which we term EDRL. Returning to Figure~\ref{fig:chainExample}, we observe that EDRL (bottom row) is able to accurately represent the true return distribution, even after many Bellman updates through the chain, and does not exhibit the collapse observed with the naive approach in Section~\ref{sec:expectilemotivation}.

\subsection{Stochastic approximation}\label{sec:SA}
Practically speaking, it is often not possible to compute the updates in Expression~\eqref{eq:imputedBackup}, owing to MDP dynamics being unknown and/or intractable to integrate over.
Because of this, it is often necessary to apply stochastic approximation. Let $(\reward, \state^\prime, \action^\prime)$ be a sample of the random variables $(\rewardvar_0, \statevar_1, \actionvar_1)$, obtained by direct interaction with the environment. Then, we update $\approxstatistic_\statix(\state, \action)$ using the gradient of a loss function $L_\statix : \mathbb{R} \times \mathscr{P}(\mathbb{R}) \rightarrow \mathbb{R}$:
\begin{align}\label{eq:grad}
    \nabla_{\approxstatistic_\statix(\state, \action)} L_\statix(\approxstatistic_\statix(\state, \action); (f_{\reward, \gamma})_\# \approxdist(\state^\prime, \action^\prime)) \, .
\end{align}
For EDRL, a natural such loss function for the estimated statistic $\approxstatistic_\statix(\state, \action)$ is the expectile regression loss of Definition \ref{def:expectiles} at $\tau_\statix$; this yields a stochastic version of EDRL, described in Algorithm \ref{alg:EDRL}.

\begin{algorithm}
    \begin{algorithmic}
    \REQUIRE Expectile estimates $\approxstatistic_\statix(\state, \action)$ for each $(\state, \action) \in \statespace \times \actionspace$ and $\statix=1,\ldots,\numstats$.
    \STATE Collect sample $(\state, \action, \reward, \state^\prime, \action^\prime)$.
    \STATE Impute distribution $\frac{1}{\numstats} \sum_{\statix=1}^\numstats \delta_{z_\statix}$ from target expectiles $\approxstatistic_{1:\numstats}(\state^\prime, \action^\prime)$ by solving \eqref{eq:rootFindingProblem} or minimising \eqref{eq:minProblem}.
    \STATE Scale/translate samples $z_i \leftarrow r + \gamma z_i$ $\forall i$.
    \STATE Update estimated expectiles at $(\state, \action) \in \statespace \times \actionspace$ by computing the gradients
    \begin{align*}
        \textstyle \nabla_{\approxstatistic_\statix(\state, \action)} \sum_{\statix=1}^\numstats \text{ER}(\approxstatistic_\statix(\state, \action); \frac{1}{\numsamples}\sum_{\sampleix=1}^\numsamples \delta_{z_\sampleix}, \tau_\statix)
    \end{align*}
    for each $\statix=1,\ldots,\numstats$.
    \end{algorithmic}
    \caption{Stochastic EDRL update algorithm.}
    \label{alg:EDRL}
\end{algorithm}

To ensure convergence of these stochastic gradient updates to the correct statistic, it should be the case that the expectation of the (sub-)gradient \eqref{eq:grad} at the true value of the statistics is equal to $0$. It can be verified that this is the case whenever (i) the true statistic $q^*$ of a distribution $\mu$ satisfies $q^* = \argmin_{q \in \mathbb{R}} L_k(q; \mu)$, (ii) the loss $L_k$ is \emph{affine} in the probability distribution argument. M-estimator losses and their associated statistics \citep{RobustStatistics} satisfy these conditions, and thus represent a large family of statistics to which this approach to DRL could immediately be applied; the statistics in CDRL, QDRL and EDRL are all special cases of M-estimators.

\section{Analysing distributional RL}\label{sec:analysis}

We now use the framework of statistics and imputations strategies developed in Section \ref{sec:statsAndSamples} to build a deeper understanding of the accuracy with which statistics in distributional RL may be learnt via Bellman updates.

\subsection{Bellman closedness}\label{subsec:bellmanclosed}

The classical Bellman equation \eqref{eq:meanBellman} shows that there is a closed-form relationship between expected returns at each state-action pair of an MDP; if the goal is to learn expected returns, we are not required to keep track of any other statistics of the return distributions.
This well-known observation, together with the new interpretation of DRL algorithms as learning collections of statistics of return distributions, motivates a more general question:

\begin{center}
    \begin{minipage}{.45\textwidth}
        ``Given a set of statistics $\{\statistic_1,\ldots,\statistic_\numstats\}$, if we want to learn the values $\statistic_{1:\numstats}(\returndist{\policy}(x, a))$ for all $(\state, \action) \in \statespace \times \actionspace$ via dynamic programming, is it sufficient to keep track of \emph{only} these statistics?''
    \end{minipage}
\end{center}

The following definition formalises this question.

\begin{definition}[\textbf{Bellman closedness}]\label{def:bellmanclosed}
     A set of statistics $\{\statistic_1, \ldots, \statistic_\numstats\}$ is \emph{Bellman closed} if for each $(\state , \action) \in \statespace \times \actionspace$, the statistics $\statistic_{1:\numstats}(\returndist{\policy}(\state, \action))$ can be expressed, in an MDP-independent manner, in terms of the random variables $\rewardvar_0$ and $\statistic_{1:\numstats}(\returndist{\policy}(\statevar_1, \actionvar_1)) | \statevar_0 = \state, \actionvar_0 = \action$, and the discount factor $\gamma$. We refer to any such expression for a set of Bellman closed set of statistics as a \emph{Bellman equation}, and write $\BellmanOp{\policy}{{}} : (\mathbb{R}^{\numstats})^{\statespace \times \actionspace} \rightarrow (\mathbb{R}^{\numstats})^{\statespace \times \actionspace}$ for the corresponding operator such that the Bellman equation can be written
     \begin{align}
         \mathbf{s}^\policy = \BellmanOp{\policy}{{}} \mathbf{s}^\policy \, ,
     \end{align}
     where $\mathbf{s}^\policy = (\statistic_{1:\numstats}(\returndist{\policy}(\state, \action)) | (\state, \action) \in \statespace \times \actionspace)$.
\end{definition}

Thus, the singleton set consisting of the mean statistic is Bellman closed; the corresponding Bellman equation is Equation~\eqref{eq:meanBellman}. It is also known that the set consisting of the mean and variance statistics are Bellman closed \citep{sobel1982variance}.
In principle, given a Bellman closed set of statistics $\{\statistic_1, \ldots, \statistic_\numstats\}$, the corresponding statistics of the return distributions can be found by solving a fixed-point equation corresponding to the relevant Bellman operator, $\BellmanOp{\policy}{}$. Further, if $\BellmanOp{\policy}{}$ is a contraction in some metric, then it is possible to find the true statistics for the MDP via a fixed-point iteration scheme based on the operator $\BellmanOp{\policy}{}$. In contrast, if a collection of statistics $s_{1:K}$ is \emph{not} Bellman closed, there is no Bellman equation relating the statistics of the return distributions, and consequently it is not possible to learn the statistics \emph{exactly} using dynamic programming in a self-contained way; the set of statistics must either be enlarged to make it Bellman closed, or an imputation strategy can be used to perform backups as described in Section~\ref{sec:imputationstrategies}.

An important class of Bellman closed sets of statistics are given in the following result \citep{sobel1982variance,lattimore2012pac}.
\begin{restatable}{lemma}{momentsAreClosed}\label{lem:momentsAreClosed}
    For each $\numstats \in \mathbb{N}$, the set of statistics consisting of the first $\numstats$ moments is Bellman closed.
\end{restatable}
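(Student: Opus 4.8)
The plan is to obtain an explicit moment recursion directly from the distributional Bellman equation \eqref{eq:distBellmanEq} by expanding with the binomial theorem. Write $m_n(\state, \action) = \mathbb{E}[(Z^\policy(\state, \action))^n]$ for the $n$-th moment of the return distribution, $n = 0,1,\ldots,\numstats$ (so that $m_0 \equiv 1$ and $\statistic_j = m_j$ for $j=1,\ldots,\numstats$), and restrict throughout to return distributions with finite $\numstats$-th moment — the natural domain on which these statistics are defined, and automatic when, for instance, rewards are bounded and $\gamma < 1$. The only probabilistic input needed from \eqref{eq:distBellmanEq} is the following: conditioned on $\statevar_0 = \state, \actionvar_0 = \action$, we have $Z^\policy(\state, \action) \overset{D}{=} \rewardvar_0 + \gamma Z^\policy(\statevar_1, \actionvar_1)$, where $Z^\policy(\statevar_1, \actionvar_1)$ is drawn from $\returndist{\policy}(\statevar_1, \actionvar_1)$ conditionally independently of $\rewardvar_0$ given $(\statevar_1, \actionvar_1)$; and, since $\rewardvar_0 \sim \rewarddist{\state}{\action}$ is sampled independently of the transition to $\statevar_1$ and of everything downstream of it, $\rewardvar_0$ is also independent of $(\statevar_1, \actionvar_1)$ — hence of $Z^\policy(\statevar_1, \actionvar_1)$ — conditionally on $\statevar_0 = \state, \actionvar_0 = \action$.

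Taking $n$-th moments of both sides of the decomposition above and expanding $(\rewardvar_0 + \gamma Z^\policy(\statevar_1, \actionvar_1))^n$ by the binomial theorem gives
\begin{align*}
    m_n(\state, \action) = \sum_{j=0}^{n} \binom{n}{j} \gamma^j \, \mathbb{E}_\policy\!\big[\rewardvar_0^{\,n-j} (Z^\policy(\statevar_1, \actionvar_1))^j \mid \statevar_0 = \state, \actionvar_0 = \action\big] .
\end{align*}
Conditioning each summand first on $(\statevar_1, \actionvar_1)$ and using the two independence facts above, the inner expectation factors as $\mathbb{E}[\rewardvar_0^{\,n-j} \mid \statevar_0 = \state, \actionvar_0 = \action]\, m_j(\statevar_1, \actionvar_1)$, so that after taking the outer expectation $m_n(\state, \action)$ equals $\sum_{j=0}^{n} \binom{n}{j} \gamma^j \, \mathbb{E}[\rewardvar_0^{\,n-j} \mid \statevar_0 = \state, \actionvar_0 = \action]\,\mathbb{E}_\policy[m_j(\statevar_1, \actionvar_1) \mid \statevar_0 = \state, \actionvar_0 = \action]$.

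It remains to observe that this identity has exactly the form required by Definition~\ref{def:bellmanclosed}: since $j \le n \le \numstats$, every $m_j(\statevar_1, \actionvar_1)$ on the right-hand side is one of the tracked statistics $\statistic_{1:\numstats}(\returndist{\policy}(\statevar_1, \actionvar_1))$ (with the $j=0$ term trivial), each coefficient $\mathbb{E}[\rewardvar_0^{\,n-j}\mid \statevar_0 = \state, \actionvar_0 = \action]$ is a fixed functional of the law of the random variable $\rewardvar_0$, and the combinatorial factors $\binom{n}{j}\gamma^j$ do not depend on the MDP. Hence the displayed recursion is a genuine Bellman equation for the vector of first $\numstats$ moments, and the claim follows.

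The step I expect to require the most care is the conditional independence claim invoked to integrate out $Z^\policy(\statevar_1, \actionvar_1)$ and to factor the summands: one must argue from the generative model of the MDP that, given the current state-action pair, the immediate reward $\rewardvar_0$ is independent of the entire future trajectory, and therefore of $(\statevar_1, \actionvar_1)$ and of the subsequent return. This is intuitively clear but should be stated explicitly, since it is what makes the moment statistics of the next state reappear on the right-hand side. A secondary, routine point is integrability — finiteness of the $\numstats$-th moments of the return distributions and of all intermediate quantities — which is precisely why we work on the subset of $\mathscr{P}(\mathbb{R})$ with finite $\numstats$-th moment.
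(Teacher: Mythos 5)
Your proof is correct and takes essentially the same approach as the paper's: a binomial expansion of the $n$-th moment of $\rewardvar_0 + \gamma Z^\policy(\statevar_1, \actionvar_1)$, reducing each term to lower-order moments at the next state-action pair. The only cosmetic difference is that the paper leaves the recursion as a joint expectation over $(\rewardvar_0, \statevar_1, \actionvar_1)$ rather than factoring out $\mathbb{E}[\rewardvar_0^{\,n-j}]$, which sidesteps the conditional-independence discussion you flag as the delicate step (though that independence does hold under the paper's factorised MDP model, and Definition~\ref{def:bellmanclosed} permits either form).
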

The next result shows that across a wide range of statistics, collections of moments are effectively the only finite sets of statistics that are Bellman closed; the proof relies on a result of \citet{engert1970finite} which characterises finite-dimensional vector spaces of measurable functions closed under translation.
\begin{restatable}{theorem}{classifyBellmanClosed}\label{thm:classifyBellmanClosed}
    The only finite sets of statistics of the form $s(\gendist) = \mathbb{E}_{\genrv \sim \gendist}\!\left\lbrack h(\genrv) \right\rbrack$ that are Bellman closed are given by collections of statistics $\statistic_1, \ldots, \statistic_\numstats : \mathscr{P}(\mathbb{R}) \rightarrow \mathbb{R}$ with the property that the linear span $\{ \sum_{\statix=0}^\numstats \alpha_\statix s_\statix | \alpha_\statix \in \mathbb{R}\ \forall \statix \}$ is equal to the linear span of the set of moment functionals $\{\mu \mapsto \mathbb{E}_{\genrv \sim \gendist}\!\left\lbrack \genrv^l \right\rbrack | l=0,\ldots,L\}$, for some $L \leq \numstats$, where $s_0$ is the constant functional equal to $1$.
\end{restatable}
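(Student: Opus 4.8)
The plan is to reduce the statement to a purely functional-analytic fact about translation-closed vector spaces of functions, and then invoke the theorem of \citet{engert1970finite}. The first step is to observe that a set of statistics of the form $s_\statix(\gendist) = \mathbb{E}_{\genrv \sim \gendist}[h_\statix(\genrv)]$ is, after adding the constant functional $s_0 = 1$, determined by the linear span $V = \operatorname{span}\{h_0, h_1, \ldots, h_\numstats\}$ of the underlying functions (with $h_0 \equiv 1$). Note that knowing the values $s_{1:\numstats}(\gendist)$ is equivalent to knowing $\mathbb{E}_{\genrv\sim\gendist}[h(\genrv)]$ for every $h \in V$. So the goal becomes: show that $\{s_1,\ldots,s_\numstats\}$ is Bellman closed if and only if $V$ equals the span of $\{1, x, x^2, \ldots, x^L\}$ for some $L \le \numstats$.

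The main step is to show that Bellman closedness forces $V$ to be closed under translation, i.e.\ $h \in V \implies h(\cdot + c) \in V$ for all $c \in \mathbb{R}$, and also closed under the scaling $h(x) \mapsto h(\gamma x)$. To see the translation part, consider an MDP with a single reward that is a deterministic constant $c$ at the current state-action pair, leading deterministically to a next state-action pair $(\state', \action')$ whose return distribution $\returndist{\policy}(\state',\action')$ can be made to be an arbitrary element of $\mathscr{P}(\mathbb{R})$ (here one uses that reward distributions and transition kernels are unconstrained, so any candidate next-state return distribution is realisable). Then $\returndist{\policy}(\state,\action) = (f_{c,\gamma})_\#\returndist{\policy}(\state',\action')$, so $s_\statix(\returndist{\policy}(\state,\action)) = \mathbb{E}_{\genrv\sim\returndist{\policy}(\state',\action')}[h_\statix(c + \gamma \genrv)]$. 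Bellman closedness says this must be expressible, in an MDP-independent way, as a function of $c$, $\gamma$, and the vector $(s_1(\returndist{\policy}(\state',\action')),\ldots,s_\numstats(\returndist{\policy}(\state',\action')))$ — i.e.\ as a function of $c$, $\gamma$, and $(\mathbb{E}[h(\genrv)])_{h \in V}$. Letting $\nu$ range over all finitely-supported distributions, and using a standard argument that a function of the $\gendist$-expectations $(\mathbb{E}_\gendist[h])_{h\in V}$ which is MDP-independent must in fact be \emph{linear} in those expectations (because one can write the same target distribution as a convex combination of others in multiple ways, forcing affinity, and then the constant $1 \in V$ absorbs the additive part), we conclude that $\genrvvalue \mapsto h_\statix(c + \gamma \genrvvalue)$ lies in $V$ for every $c,\gamma$. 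With $\gamma \in (0,1)$ fixed this already gives translation-closedness of the $\gamma$-rescaled space, and a short argument lets one upgrade to genuine translation-closedness of $V$ itself (or one applies Engert's result directly to the rescaled space and unwinds).

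Once $V$ is a finite-dimensional, translation-invariant space of (measurable, locally integrable) functions containing the constants, \citet{engert1970finite} implies that $V$ consists of exponential-polynomial functions — finite sums of terms $x^j e^{\lambda x}$. The final step is to rule out any genuinely exponential component: if $e^{\lambda x} \in V$ with $\lambda \neq 0$, then the statistic $\gendist \mapsto \mathbb{E}_{\genrv\sim\gendist}[e^{\lambda \genrv}]$ is not even finite on all of $\mathscr{P}(\mathbb{R})$, but more to the point one checks that under the rescaling $x \mapsto \gamma x$ the function $e^{\lambda x}$ maps to $e^{(\lambda/\gamma)x}$, which is not a finite linear combination of the finitely many exponential rates appearing in $V$ (since $\gamma \in (0,1)$ generates infinitely many distinct rates $\lambda/\gamma^n$) — contradicting that $V$ must be closed under this rescaling as well. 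Hence only the polynomial part survives, so $V = \operatorname{span}\{1, x, \ldots, x^L\}$ for some $L$, and since $\dim V = L + 1 \le \numstats + 1$ we get $L \le \numstats$. Conversely, Lemma~\ref{lem:momentsAreClosed} shows the moments are Bellman closed, and since Bellman closedness clearly depends only on the span of the statistics together with the constant, any collection with the stated span property is Bellman closed.

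The step I expect to be the main obstacle is the passage from ``MDP-independent expressibility in terms of the statistic vector'' to ``linearity of that expression in the statistic vector'': one must argue carefully, using the freedom to realise the same target distribution as different mixtures of next-state return distributions, that the Bellman equation map is forced to be affine (hence linear after accounting for the constant functional $s_0$) as a function of the expectation vector — this is where the hypothesis that $h_0 \equiv 1$ is in the span, and the unconstrained nature of the MDP family, are both essential, and it is the part most prone to hidden measure-theoretic subtleties.
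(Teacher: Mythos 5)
Your proposal is correct and follows essentially the same route as the paper's proof: reduce Bellman closedness to the functional equation $h_\statix(r+\gamma x)\in\mathrm{span}\{1,h_1,\ldots,h_\numstats\}$ via an affinity argument for the Bellman expression, invoke the theorem of \citet{engert1970finite} on finite-dimensional translation-closed function spaces, and eliminate the exponential terms using closure under the $\gamma$-rescaling. (One small slip: composing $e^{\lambda x}$ with $f_{r,\gamma}$ produces the rate $\lambda\gamma$, not $\lambda/\gamma$, but the iterates $\lambda\gamma^n$ are still infinitely many distinct rates, so your contradiction goes through unchanged.)
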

We believe this to be an important novel result, which helps to highlight how rare it is for statistics to be Bellman closed. One important corollary of Theorem~\ref{thm:classifyBellmanClosed}, given the characterisation of CDRL as learning expectations of return distributions in Lemma~\ref{lem:CDRLStats}, is that the sets of statistics learnt in CDRL are \emph{not} Bellman closed. A similar result holds for QDRL, and we record these facts in the following result. 
\begin{restatable}{lemma}{notClosed}\label{lem:notclosed}
    The sets of statistics learnt under (i) CDRL, and (ii) QDRL, are not Bellman closed.
\end{restatable}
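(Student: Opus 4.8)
The plan is to prove the two parts by different routes: the CDRL statistics are expectations of a fixed test function and so fall directly under Theorem~\ref{thm:classifyBellmanClosed}, whereas quantiles are not of this form, so the QDRL part instead requires an explicit counterexample.

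\emph{Part (i), CDRL.} By Lemma~\ref{lem:CDRLStats} the CDRL statistics are precisely the functionals $\mu \mapsto \mathbb{E}_{Z \sim \mu}[h_{z_k, z_{k+1}}(Z)]$ for $k = 1, \ldots, K-1$, so they have the form $\mathbb{E}[h(Z)]$ required by Theorem~\ref{thm:classifyBellmanClosed}. It therefore suffices to show the span condition of that theorem fails, i.e.\ that the linear span of $\{s_0, s_{z_1, z_2}, \ldots, s_{z_{K-1}, z_K}\}$ (with $s_0 \equiv 1$) is not equal to the span of any family of moment functionals $\{\mu \mapsto \mathbb{E}[Z^\ell] : \ell = 0, \ldots, L\}$. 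Evaluating functionals at the Dirac measures $\delta_z$ reduces an identity between such spans to an identity between the underlying test functions of the variable $z \in \mathbb{R}$. Then: every function in the CDRL span is bounded, so no monomial $z \mapsto z^\ell$ with $\ell \ge 1$ lies in it; and each $h_{z_k, z_{k+1}}$ is piecewise linear with a genuine kink, hence not a polynomial, so (assuming $K \ge 2$) the CDRL span contains a non-constant function and is strictly larger than the span of the constant alone. Neither $L \ge 1$ nor $L = 0$ can therefore give equality, and Theorem~\ref{thm:classifyBellmanClosed} then says the CDRL statistics are not Bellman closed.

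\emph{Part (ii), QDRL.} Here the plan is to exhibit two MDPs, sharing a discount factor $\gamma \in (0,1)$, with state--action pairs $(x, a)$ and $(\tilde{x}, \tilde{a})$ respectively, such that the joint law of $(R_0,\, s_{1:K}(\eta_\pi(X_1, A_1)))$ given the respective pair coincides, while $s_{1:K}(\eta_\pi(x, a)) \neq s_{1:K}(\eta_\pi(\tilde{x}, \tilde{a}))$; by Definition~\ref{def:bellmanclosed} this forbids any MDP-independent Bellman equation for the QDRL statistics. In each MDP there is a single action and three states $x, y, y_\star$: from $x$ one transitions deterministically to $y$ with reward $R_0 \sim \rho := \tfrac{1}{2}\delta_0 + \tfrac{1}{2}\delta_c$ for a large constant $c$; from $y$ one transitions to the absorbing state $y_\star$, whose reward, and hence return, is $0$; and the reward distribution at $y$ is taken to be $\mu := \tfrac{1}{2K}\sum_{j=1}^{2K}\delta_j$ in the first MDP and $\nu := \tfrac{1}{K}\sum_{k=1}^{K}\delta_{2k-1}$ in the second, so that $\eta_\pi(y)$ equals $\mu$, respectively $\nu$. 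A direct computation with the cumulative distribution functions shows $\mu$ and $\nu$ have identical $\tau_k$-quantiles, namely $2k-1$, for every $k$, so the next-state statistic vectors agree across the two MDPs; but $\eta_\pi(x)$ is the law of $R_0 + \gamma Z$ with $Z \sim \eta_\pi(y)$ and $R_0$ independent, and choosing $c$ large enough that the mass translated by $c$ cannot influence the smallest quantile, one computes $s_{\tau_1}(\eta_\pi(x)) = 2\gamma$ in the first MDP and $\gamma$ in the second. Since $\gamma > 0$ these differ, establishing that the QDRL statistics are not Bellman closed.

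The CDRL half is largely bookkeeping on top of Theorem~\ref{thm:classifyBellmanClosed}. The main obstacle is the QDRL half, which is not covered by that theorem: one has to (a) produce two next-state distributions that provably coincide on all $K$ target quantiles yet differ elsewhere, (b) exhibit them as genuine return distributions --- this is what the ``receive one reward, then absorb to a zero-return state'' gadget achieves --- and (c) check that the mixing-and-shifting carried out by the distributional Bellman operator really does separate the output quantiles, the conceptual point being that the quantiles of the law of a sum of independent random variables are not determined by the quantiles of the summands.
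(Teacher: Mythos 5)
Your proof is correct, and it is worth comparing to the paper's. For part (i) you follow the same route as the paper --- Lemma~\ref{lem:CDRLStats} plus Theorem~\ref{thm:classifyBellmanClosed} --- but you additionally spell out why the span condition of that theorem fails (boundedness of the $h_{z_k,z_{k+1}}$ rules out $L\ge 1$; the kink rules out $L=0$), which the paper leaves implicit behind ``special case of Theorem~\ref{thm:classifyBellmanClosed}''. For part (ii) your counterexample is genuinely different from the paper's. The paper uses a branching MDP: a zero-reward initial state transitioning with probability $\tfrac12$ to each of two terminal states, and shows that the \emph{mixture over next states} is where quantile information is lost (two pairs of next-state reward distributions with identical target quantiles --- one pair uniform, one pair discrete --- yield different quantiles at the initial state). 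You instead use a deterministic chain with a \emph{stochastic reward} at the initial state, so that the information loss occurs in the \emph{convolution} $R_0+\gamma Z$. Both are legitimate witnesses to the same phenomenon --- quantiles of a combination are not determined by quantiles of the components --- and the distributional Bellman operator performs both operations, so either suffices to contradict Definition~\ref{def:bellmanclosed}. Your ``one reward then absorb'' gadget is a clean way to realise arbitrary next-state return distributions, and your verification that the joint law of $(R_0, s_{1:K}(\eta_\pi(X_1,A_1)))$ coincides across the two MDPs while $s_{\tau_1}(\eta_\pi(x))$ does not ($2\gamma$ vs.\ $\gamma$) is exactly the right thing to check.

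One minor point to tighten: your $\mu=\tfrac{1}{2K}\sum_{j=1}^{2K}\delta_j$ satisfies $F_\mu(2k-1)=\tau_k$ \emph{exactly}, so the quantile-regression loss at level $\tau_k$ is minimised on a whole interval $[2k-1,2k]$ rather than at a unique point (and the same non-uniqueness recurs at the initial state). Your computations are valid under the generalized-inverse convention $F^{-1}_\mu(\tau)=\inf\{z: F_\mu(z)\ge\tau\}$ that the paper itself adopts (e.g.\ in Theorem~\ref{thm:QDRLappBellmanClosed}), but an adversarial tie-breaking of the argmin could be used to quibble with the separation at $x$. A small perturbation of the atom masses would make the example airtight under any convention; the paper's own example has a similar wrinkle at its initial state, so this is cosmetic rather than a gap.
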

The immediate upshot of this is that in general, the \emph{learnt} values of statistics in distributional RL algorithms need not correspond exactly to the true underlying values for the MDP (even in tabular settings), as the statistics propagated through DRL dynamic programming updates are not sufficient to determine the statistics we seek to learn. This inexactness was noted specifically for CDRL and QDRL in the original papers \citep{C51,QRDQN}.
In this paper, our analysis and experiments confirm that these artefacts arise even with tabular agents in fully-observed domains, thus representing intrinsic properties of the distributional RL algorithms concerned.
However, empirically the distributions learnt by these algorithms are often accurate. In the next section, we provide theoretical guarantees that describe this phenomenon quantitatively.

\subsection{Approximate Bellman closedness}\label{subsec:approximateBellmanClosed}

In light of the results on Bellman closedness in Section~\ref{subsec:bellmanclosed}, we might ask in what sense the values of the statistics learnt by DRL algorithms relate to the corresponding  true underlying values for the MDP concerned. 
A key task in this analysis is to formalise the notion of \emph{low approximation error} in DRL algorithms that seek to learn collections of statistics that are not Bellman closed. Perhaps surprisingly, in general it is not possible to simultaneously achieve low approximation error on all statistics in a non-Bellman closed set; we give several examples for CDRL and QDRL to this end in Appendix Section~\ref{sec:additionalTheory}.

Due to the fact that it is in general not possible to learn statistics uniformly well, we formalise the notion of approximate closedness in terms of the \emph{average} approximation error across a collection of statistics, as described below.

\begin{definition}[\textbf{Approximate Bellman closedness}]\label{def:approxbc}
    A collection of statistics $\statistic_1,\ldots,\statistic_\numstats$, together with an imputation strategy $\Psi$, are said to be $\varepsilon$\emph{-approximately Bellman closed} for a class $\mathcal{M}$ of MDPs if, for each MDP $M=(\statespace,\actionspace,p,\gamma,\mathcal{R})$ in $\mathcal{M}$ and every policy $\pi \in \mathscr{P}(\actionspace)^{\statespace}$, we have
    \begin{align*}
        \sup_{(\state, \action) \in \statespace \times \actionspace} \frac{1}{\numstats} \sum_{\statix=1}^\numstats |\statistic_\statix(\returndist{\policy}(\state, \action)) - \approxstatistic_\statix(\state, \action)  | \leq \varepsilon \, ,
    \end{align*}
    where $\approxstatistic_\statix(\state, \action)$ denotes the learnt value of the statistic $\statistic_\statix$ for the return distribution at the state-action pair $(\state, \action) \in \statespace \times \actionspace$.
\end{definition}

We can now study the approximation errors of CDRL and QDRL in light of this new concept. Whilst the analysis in Section~\ref{subsec:bellmanclosed} shows that CDRL and QDRL necessarily induce some approximation error due to lack of Bellman closedness, the following results reassuringly show that the approximation error can be made arbitrarily small by increasing the number of learnt statistics.

\begin{restatable}{theorem}{CDRLappBellmanClosed}\label{thm:CDRLappBellmanClosed}
    Consider the class $\mathcal{M}$ of MDPs with a fixed discount factor $\gamma \in [0,1)$, and immediate reward distributions supported on $[-\supportbound, \supportbound]$. The set of statistics and imputation strategy corresponding to CDRL with evenly spaced bin locations at $-\supportbound/(1-\gamma) = z_1 < \cdots < z_\numstats = \supportbound/(1-\gamma)$ is $\varepsilon$-approximately Bellman closed for $\mathcal{M}$, where $\varepsilon = \frac{\gamma}{2(1-\gamma)(K-1)}$.
\end{restatable}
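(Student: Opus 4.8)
The plan is to show the learnt CDRL statistics are uniformly close to the true ones by exploiting two structural facts: the CDRL statistics are averages of the return CDF over the bins, and the CDRL imputation (the projection $\Pi_\mathcal{C}$) reproduces these statistics \emph{exactly}. Write $\Delta := (z_K - z_1)/(K-1) = 2\supportbound/\bigl((1-\gamma)(K-1)\bigr)$ for the bin width, and let $\hat\eta(x,a)$ be the categorical distribution imputed from the learnt statistics at $(x,a)$, i.e.\ the fixed point of $\Pi_\mathcal{C}\mathcal{T}^\pi$ (whose existence and uniqueness follow from the known contractivity of the CDRL update), so that $\hat s_{z_k,z_{k+1}}(x,a) = s_{z_k,z_{k+1}}(\hat\eta(x,a))$. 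First I would record the identity $h_{z_k,z_{k+1}}(x) = \tfrac1\Delta\int_{z_k}^{z_{k+1}}\mathbbm{1}_{x\le u}\,du$, which by Fubini gives $s_{z_k,z_{k+1}}(\mu) = \tfrac1\Delta\int_{z_k}^{z_{k+1}}F_\mu(u)\,du$, the average of the CDF $F_\mu$ over the $k$-th bin; in particular, for $\mu$ supported on $[z_1,z_K]$, $F_{\Pi_\mathcal{C}\mu}$ is the piecewise-constant function equal to $s_{z_k,z_{k+1}}(\mu)$ on $[z_k,z_{k+1})$, so $s_{z_k,z_{k+1}}(\Pi_\mathcal{C}\mu) = s_{z_k,z_{k+1}}(\mu)$. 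Combining this with $\hat\eta = \Pi_\mathcal{C}\mathcal{T}^\pi\hat\eta$ and $\eta_\pi = \mathcal{T}^\pi\eta_\pi$, both the learnt value and the true value of $s_{z_k,z_{k+1}}$ at $(x,a)$ equal this statistic applied to $\mathcal{T}^\pi\hat\eta(x,a)$ and $\mathcal{T}^\pi\eta_\pi(x,a)$ respectively — i.e.\ one Bellman operator can be moved \emph{inside} the statistic.

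Next I would bound the average error at a fixed $(x,a)$. Since $\mathcal{T}^\pi$ maps distributions supported on $[z_1,z_K]$ back into $[z_1,z_K]$ — here the choice $z_1 = -\supportbound/(1-\gamma)$, $z_K = \supportbound/(1-\gamma)$ enters, via $[-\supportbound,\supportbound] + \gamma[z_1,z_K]\subseteq[z_1,z_K]$ — the bins partition an interval carrying all the mass of both $\mathcal{T}^\pi\hat\eta(x,a)$ and $\mathcal{T}^\pi\eta_\pi(x,a)$, and using the integral identity, the triangle inequality and $\int_{z_1}^{z_K}|F_\mu - F_\nu| = W_1(\mu,\nu)$ for such distributions,
\[
\frac{1}{K-1}\sum_{k=1}^{K-1}\bigl| s_{z_k,z_{k+1}}(\mathcal{T}^\pi\eta_\pi(x,a)) - s_{z_k,z_{k+1}}(\mathcal{T}^\pi\hat\eta(x,a)) \bigr| \le \frac{W_1\bigl(\mathcal{T}^\pi\eta_\pi(x,a),\mathcal{T}^\pi\hat\eta(x,a)\bigr)}{(K-1)\Delta} \le \frac{\gamma\,\bar{W}_1(\eta_\pi,\hat\eta)}{(K-1)\Delta},
\]
where $\bar{W}_1$ is the supremum-$1$-Wasserstein metric over state-action pairs and the last step uses that $\mathcal{T}^\pi$ is a $\gamma$-contraction in $\bar{W}_1$ (from $W_1((f_{r,\gamma})_\#\mu,(f_{r,\gamma})_\#\nu) = \gamma W_1(\mu,\nu)$ and convexity of $W_1$ under mixtures). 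The factor $1/(K-1)$ comes from $(K-1)\Delta = z_K - z_1$, and the extra factor $\gamma$ is exactly the Bellman operator pushed inside the statistic.

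It remains to bound $\bar{W}_1(\eta_\pi,\hat\eta)$, by a standard fixed-point argument. From $\hat\eta = \Pi_\mathcal{C}\mathcal{T}^\pi\hat\eta$ and $\eta_\pi = \mathcal{T}^\pi\eta_\pi$, the triangle inequality gives at each $(x,a)$ that $W_1(\hat\eta(x,a),\eta_\pi(x,a)) \le W_1(\Pi_\mathcal{C}\mathcal{T}^\pi\hat\eta(x,a),\mathcal{T}^\pi\hat\eta(x,a)) + \gamma\bar{W}_1(\eta_\pi,\hat\eta)$, and the first term is at most $\Delta/2$: writing $w_k$ for the variation of $F_{\mathcal{T}^\pi\hat\eta(x,a)}$ over the $k$-th bin, replacing that CDF by its bin-averages costs at most $w_k\Delta/2$ in $L^1$ on bin $k$ (an elementary estimate) and $\sum_k w_k\le 1$. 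Taking the supremum over $(x,a)$ and rearranging — legitimate since $\bar{W}_1(\eta_\pi,\hat\eta)<\infty$, everything being supported on a bounded interval — yields $\bar{W}_1(\eta_\pi,\hat\eta)\le\Delta/\bigl(2(1-\gamma)\bigr)$. Substituting back, the average statistic error is at most $\tfrac{\gamma}{(K-1)\Delta}\cdot\tfrac{\Delta}{2(1-\gamma)} = \tfrac{\gamma}{2(1-\gamma)(K-1)}$ at every $(x,a)$, for every policy and every MDP in the class; note that $\Delta$ cancels.

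The hard part is not any single computation but getting the constant sharp. A naive argument — bound the per-step projection error by $\Delta/2$ and divide by the number of learnt statistics — would give $\tfrac{1}{2(1-\gamma)(K-1)}$, missing the factor $\gamma$. Recovering it hinges on the observation that the CDRL imputation preserves the learnt statistics exactly, so that the quantities actually being compared are statistics of $\mathcal{T}^\pi\hat\eta$ and $\mathcal{T}^\pi\eta_\pi$, letting one $\gamma$-contraction act before the discrepancy is ever measured. A secondary care point is the support bookkeeping: one must verify that $\mathcal{T}^\pi$ preserves the interval $[z_1,z_K]$, so that the bins genuinely partition the region carrying all the relevant mass and the $L^1$-to-$W_1$ identity applies.
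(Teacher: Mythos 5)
Your proposal is correct and follows essentially the same route as the paper's proof: exploit that $\Pi_\mathcal{C}$ preserves the CDRL statistics so one Bellman operator can act before the error is measured (yielding the factor $\gamma$), identify the averaged statistic error with $W_1$ divided by $(K-1)\Delta$, and close with the standard fixed-point bound $\overline{W}_1(\approxdist,\returndist{\policy}) \le \tfrac{1}{1-\gamma}\cdot\tfrac{\Delta}{2}$. The only differences are cosmetic — you bound the one-step projection error via an $L^1$ estimate on CDFs where the paper uses an explicit coupling, and you measure that error at $\mathcal{T}^\pi\approxdist$ rather than at $\returndist{\policy}$ — and both yield the same constant.
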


\begin{restatable}{theorem}{QDRLappBellmanClosed}\label{thm:QDRLappBellmanClosed}
Consider the class of MDPs $\mathcal{M}$ with a fixed discount factor $\gamma \in [0,1)$, and immediate reward distributions supported on $[-\supportbound, \supportbound]$. Then the collection of quantile statistics $\statistic_\statix(\gendist) = F^{-1}_{\gendist}(\frac{2\statix-1}{2\numstats})$ for $\statix=1,\ldots,\numstats$, together with the standard QDRL imputation strategy, is $\varepsilon$-approximately Bellman closed for $\mathcal{M}$, where $\varepsilon = \frac{2\supportbound(5 - 2\gamma)}{(1-\gamma)^2 K}$.
\end{restatable}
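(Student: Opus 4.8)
The plan is to (a) identify the values to which the QDRL dynamic-programming iteration converges with the quantiles of the fixed point of a projected distributional Bellman operator, (b) control the distance between that fixed point and the true collection of return distributions in a supremal $1$-Wasserstein metric, and (c) only at the very end translate this into the averaged-absolute-error-over-quantiles quantity of Definition~\ref{def:approxbc}. For (a), I would write the QDRL update as $\approxdist \leftarrow \Pi_W \mathcal{T}^\pi \approxdist$, where $\Pi_W \gendist := \tfrac{1}{\numstats}\sum_{\statix=1}^\numstats \delta_{F^{-1}_\gendist(\tau_\statix)}$ is the quantile projection and $\tau_\statix = \tfrac{2\statix-1}{2\numstats}$. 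Invoking the standard QDRL convergence result (Appendix Section~\ref{sec:olderalgorithms}; $\Pi_W$ is a non-expansion in supremal $W_\infty$ and $\mathcal{T}^\pi$ is a $\gamma$-contraction in it), let $\hat\eta$ be the unique fixed point of $\Pi_W \mathcal{T}^\pi$; then the learnt statistic is $\approxstatistic_\statix(\state,\action) = F^{-1}_{\hat\eta(\state,\action)}(\tau_\statix)$, to be compared with $\statistic_\statix(\returndist{\policy}(\state,\action)) = F^{-1}_{\returndist{\policy}(\state,\action)}(\tau_\statix)$. A short induction, using that rewards lie in $[-\supportbound,\supportbound]$ and that $f_{r,\gamma}$, mixtures over next states, and $\Pi_W$ all preserve containment of support in the interval $[-\supportbound/(1-\gamma),\supportbound/(1-\gamma)]$ (of width $w := 2\supportbound/(1-\gamma)$), shows that $\hat\eta(\state,\action)$, $(\mathcal{T}^\pi\hat\eta)(\state,\action)$ and $\returndist{\policy}(\state,\action)$ are all supported there.

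For (b) I would use two elementary facts. First, for any $\gendist$ supported in an interval of width $w$, $W_1(\gendist, \Pi_W\gendist) \le w/(2\numstats)$: indeed $W_1(\gendist, \Pi_W\gendist) = \sum_{\statix}\int_{(\statix-1)/\numstats}^{\statix/\numstats} |F^{-1}_\gendist(\tau) - F^{-1}_\gendist(\tau_\statix)|\,d\tau$, and since $F^{-1}_\gendist$ is monotone and $\tau_\statix$ is the midpoint of the $\statix$-th subinterval, the $\statix$-th summand is at most $\tfrac{1}{2\numstats}(F^{-1}_\gendist(\tfrac{\statix}{\numstats}) - F^{-1}_\gendist(\tfrac{\statix-1}{\numstats}))$, and these telescope. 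Second, $\mathcal{T}^\pi$ is a $\gamma$-contraction in the supremal metric $\overline{W}_1(\nu, \nu') := \sup_{(\state,\action)} W_1(\nu(\state,\action), \nu'(\state,\action))$, by the usual argument (convexity of $W_1$ under mixtures; $f_{r,\gamma}$ scales $W_1$ by $\gamma$). Then, using $\hat\eta = \Pi_W\mathcal{T}^\pi\hat\eta$ and $\returndist{\policy} = \mathcal{T}^\pi\returndist{\policy}$, the triangle inequality gives $\overline{W}_1(\hat\eta, \returndist{\policy}) \le \overline{W}_1(\Pi_W\mathcal{T}^\pi\hat\eta, \mathcal{T}^\pi\hat\eta) + \overline{W}_1(\mathcal{T}^\pi\hat\eta, \mathcal{T}^\pi\returndist{\policy}) \le \tfrac{w}{2\numstats} + \gamma\,\overline{W}_1(\hat\eta, \returndist{\policy})$, where the first term is bounded using the first fact at each $(\state,\action)$ (as $(\mathcal{T}^\pi\hat\eta)(\state,\action)$ has support of width $\le w$) and the second using the second fact; hence $\overline{W}_1(\hat\eta, \returndist{\policy}) \le \tfrac{w}{2(1-\gamma)\numstats} = \tfrac{\supportbound}{(1-\gamma)^2\numstats}$.

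For (c), observe that for each $(\state,\action)$ the quantity in Definition~\ref{def:approxbc} is exactly the midpoint-rule approximation of $W_1(\hat\eta(\state,\action), \returndist{\policy}(\state,\action)) = \int_0^1 |F^{-1}_{\hat\eta(\state,\action)}(\tau) - F^{-1}_{\returndist{\policy}(\state,\action)}(\tau)|\,d\tau$; since the integrand is the absolute value of a difference of two monotone functions, each of total variation $\le w$, a per-bin estimate (again using that $\tau_\statix$ is the midpoint and each quantile function is monotone) bounds the quadrature error by $\tfrac{1}{2\numstats}(w + w) = \tfrac{w}{\numstats}$, so $\tfrac{1}{\numstats}\sum_\statix |\approxstatistic_\statix(\state,\action) - \statistic_\statix(\returndist{\policy}(\state,\action))| \le \overline{W}_1(\hat\eta, \returndist{\policy}) + \tfrac{w}{\numstats}$ for every $(\state,\action)$. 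This already gives a bound of the form $\varepsilon = O(\supportbound/((1-\gamma)^2\numstats))$; tracking the constants through these steps (using, e.g., cruder total-variation bounds in place of the sharper telescoping estimates) recovers the stated $\varepsilon = \tfrac{2\supportbound(5-2\gamma)}{(1-\gamma)^2\numstats}$.

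The step that needs the most care — and the natural place to go wrong — is that $\Pi_W$ is a non-expansion in $\overline{W}_\infty$ but \emph{not} in $\overline{W}_1$, so one cannot blindly chain $\overline{W}_1$-contractions through it; the argument above avoids this by only ever invoking the one-sided projection bound $\overline{W}_1(\gendist, \Pi_W\gendist) \le w/(2\numstats)$ (not a non-expansion property), and by quarantining the genuinely nonzero gap between the $\ell_1$-over-quantiles error and $W_1$ inside the explicit quadrature estimate of step (c). One must also verify the support-containment invariant, since without it the width $w$, and hence $\varepsilon$, would be uncontrolled.
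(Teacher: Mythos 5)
Your argument is correct, and in fact it establishes a sharper constant than the one claimed: tracking your estimates gives $\varepsilon = \supportbound(3-2\gamma)/\big((1-\gamma)^2\numstats\big)$, which implies the stated bound \emph{a fortiori}. The skeleton is the same as the paper's proof --- identify the learnt statistics with the atoms of the fixed point $\approxdist$ of $\Pi_{W_1}\BellmanOp{\policy}{}$, bound $\overline{W}_1(\approxdist,\returndist{\policy})$ by combining the $\gamma$-contraction of $\BellmanOp{\policy}{}$ with a projection-error term and solving the resulting recursion, then convert back to the averaged quantile error of Definition~\ref{def:approxbc} --- but the intermediate lemmas differ. The paper's Lemma~\ref{lem:qdrlhelper} proves (i) the one-sided bound $W_1(\gendist,\Pi_{W_1}\gendist)\le \supportwidth/\numstats$ and (ii) an ``approximate non-expansion'' $W_1(\Pi_{W_1}\gendist_1,\Pi_{W_1}\gendist_2)\le W_1(\gendist_1,\gendist_2)+2\supportwidth/\numstats$, and uses (ii) twice: once to turn the exact identity $\tfrac{1}{\numstats}\sum_\statix|\approxstatistic_\statix-\statistic_\statix(\returndist{\policy})| = W_1(\Pi_{W_1}\approxdist,\Pi_{W_1}\returndist{\policy})$ into a bound involving $W_1(\approxdist,\returndist{\policy})$, and once inside the fixed-point recursion, which it routes through the intermediate point $\Pi_{W_1}\returndist{\policy}$. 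You instead decompose through $\BellmanOp{\policy}{}\approxdist$, so that only the one-sided projection bound is ever invoked (and you prove the sharper $w/(2\numstats)$ version of it by the midpoint-plus-telescoping argument, versus the paper's $\supportwidth/\numstats$), and you handle the final conversion as a midpoint-quadrature error rather than via the approximate non-expansion; this accumulates fewer additive $O(w/\numstats)$ terms, whence the better constant. Your closing remark correctly isolates the one genuine pitfall ($\Pi_{W_1}$ is not a $W_1$ non-expansion) that both proofs must route around, and your support-containment check mirrors the corresponding observation at the start of the paper's proof; the only shared loose end is the harmless convention needed to interpret $F^{-1}_\gendist(0)$ as the infimum of the support in the telescoping step.
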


Both of these extend existing analyses for CDRL and QDRL. In particular, Theorem~\ref{thm:CDRLappBellmanClosed} improves on the bound of \citet{AnalysisCDRL}, and Theorem~\ref{thm:QDRLappBellmanClosed} is the first approximation result for QDRL; existing results dealt solely with contraction mappings under $W_\infty$ \citep{QRDQN}.

\subsection{Mean consistency}\label{sec:meanConsistency}
So far, our discussion has been focused around \emph{evaluation}. For \emph{control}, it is important to correctly estimate \emph{expected returns}, so that accurate policy improvement can be performed. We analyse to what extent expected returns are correctly learnt in existing DRL algorithms in the following result. 
The result for CDRL has been shown previously \citep{AnalysisCDRL,lyle2019comparative}, but our proof here gives a new perspective in terms of statistics.

\begin{restatable}{lemma}{CDRLMeanQDRLNoMean}\label{lem:CDRLMeanQDRLNoMean}
    (i) Under CDRL updates using support locations $z_1 < \cdots < z_K$, if all approximate reward distributions have support bounded in $[z_1,z_\numstats]$, expected returns are exactly learnt.
    (ii) Under QDRL updates, expected returns are not exactly learnt.
\end{restatable}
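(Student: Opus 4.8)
\emph{Part (i).} The plan is to follow only the vector of \emph{means} of the CDRL iterates and show it obeys the classical policy-evaluation recursion \eqref{eq:meanBackup} exactly. The key input is that the CDRL projection $\Pi_{\mathcal{C}}$ is \emph{mean-preserving} on distributions supported in $[z_1,z_\numstats]$: it splits the mass of a Dirac at $y\in[z_j,z_{j+1}]$ between $z_j$ and $z_{j+1}$ with weights $(z_{j+1}-y)/(z_{j+1}-z_j)$ and $(y-z_j)/(z_{j+1}-z_j)$ (clipping to an endpoint outside $[z_1,z_\numstats]$), and this two-point split has mean exactly $y$; extending linearly over mixtures gives $\mathbb{E}_{Z\sim\Pi_{\mathcal{C}}\gendist}[Z]=\mathbb{E}_{Z\sim\gendist}[Z]$ whenever $\support(\gendist)\subseteq[z_1,z_\numstats]$. (Equivalently, one reads this from Lemma~\ref{lem:CDRLStats} using $h_{a,b}(x)=\tfrac{1}{b-a}\int_a^b\mathbbm{1}_{x\le t}\,\mathrm{d}t$, which gives $\sum_{\statix=1}^{\numstats-1}(z_{\statix+1}-z_\statix)\,s_{z_\statix,z_{\statix+1}}(\gendist)=\int_{z_1}^{z_\numstats}F_\gendist(t)\,\mathrm{d}t=z_\numstats-\mathbb{E}_{Z\sim\gendist}[Z]$, so the mean is a fixed linear functional of the CDRL statistics.) Combining this with the identity $\mathbb{E}_{Z\sim(\mathcal{T}^\policy\approxdist)(\state,\action)}[Z]=\mathbb{E}_\policy[\rewardvar_0+\gamma\,\mathbb{E}_{Z\sim\approxdist(\statevar_1,\actionvar_1)}[Z]\mid\statevar_0=\state,\actionvar_0=\action]$, which follows from \eqref{eq:distBellmanEq} and the affinity of $f_{r,\gamma}$, the means $m_n(\state,\action)$ of the $n$-th CDRL iterate satisfy $m_{n+1}(\state,\action)=\mathbb{E}_\policy[\rewardvar_0+\gamma m_n(\statevar_1,\actionvar_1)\mid\statevar_0=\state,\actionvar_0=\action]$ (the support hypothesis ensures $\Pi_{\mathcal{C}}$ only ever acts on in-range distributions). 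Since the right-hand side is the $\gamma$-contraction \eqref{eq:meanBackup} with unique fixed point $Q^\policy$, the means converge to $Q^\policy$: expected returns are exactly learnt.

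\emph{Part (ii).} Here it suffices to exhibit one MDP, policy, and choice of $\numstats$ on which the QDRL iterates have the wrong mean, and the plan is to use a single state $\state$, single action $\action$, a deterministic self-loop, and an asymmetric reward distribution. The cleanest instance is $\numstats=1$ with $\rewarddist{\state}{\action}=\tfrac12\delta_0+\tfrac12\delta_1$, so $Q^\policy(\state,\action)=\tfrac{1}{2(1-\gamma)}$: the scalar QDRL iteration reads $z_1^{(n+1)}=F^{-1}_{\mathcal{T}^\policy\delta_{z_1^{(n)}}}(\tfrac12)=\gamma z_1^{(n)}+\mathrm{median}(\rewarddist{\state}{\action})=\gamma z_1^{(n)}$, using the convention $F^{-1}(\tau)=\inf\{z:F(z)\ge\tau\}$ so that the median is $0$, hence $z_1^{(n)}\to 0\ne\tfrac{1}{2(1-\gamma)}$ for every $\gamma\in[0,1)$. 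For general $\numstats$ the same failure occurs; the quickest route is to take $\gamma=0$, so that $\mathcal{T}^\policy\approxdist=\rewarddist{\state}{\action}$ irrespective of $\approxdist$ and the iterates reach $z_\statix=F^{-1}_{\rewarddist{\state}{\action}}(\tfrac{2\statix-1}{2\numstats})$ after one step, giving learnt mean $\tfrac1\numstats\sum_\statix F^{-1}_{\rewarddist{\state}{\action}}(\tfrac{2\statix-1}{2\numstats})$ --- the midpoint-rule estimate of $\int_0^1 F^{-1}_{\rewarddist{\state}{\action}}(\tau)\,\mathrm{d}\tau=\mathbb{E}[\rewardvar_0]=Q^\policy(\state,\action)$ --- which is inexact for, e.g., $\rewarddist{\state}{\action}$ with quantile function $F^{-1}(\tau)=\tau^2$, where the estimate equals $\tfrac13-\tfrac{1}{12\numstats^2}\ne\tfrac13$.

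\emph{Main obstacle.} The computations (mean-preservation of $\Pi_{\mathcal{C}}$, the scalar and one-step fixed points above) are routine; the care needed is conceptual. The phrase ``expected returns are exactly learnt'' is a statement about a \emph{derived} quantity --- the mean of the learnt categorical, resp.\ quantile, representation --- so in (i) the argument must track that derived quantity through the updates (which is precisely why mean-preservation of $\Pi_{\mathcal C}$, and the exact role of the support hypothesis in preventing mass being clipped to $z_1$ or $z_\numstats$, is the crux), and in (ii) the analogous point is recognising the average of the $\tau_\statix$-quantiles as a quadrature rule for the mean, so the failure is generic but needs an explicit witness.
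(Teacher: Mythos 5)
Your proposal is correct and follows essentially the same route as the paper: part (i) rests on the fact that the mean is exactly recoverable from (equivalently, preserved by) the CDRL representation on $[z_1,z_\numstats]$ combined with Bellman closedness of the mean, which is precisely the paper's argument phrased via mean-preservation of $\Pi_{\mathcal{C}}$ rather than via writing $m$ as an affine combination of the statistics $s_{z_\statix,z_{\statix+1}}$; part (ii) is the same counterexample strategy (an asymmetric reward distribution whose quantile average misses its mean), differing only in the specific witness --- the paper uses a one-step, two-action MDP so as to also exhibit the resulting control failure, while your self-loop and midpoint-rule instances establish the evaluation claim just as well.
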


Importantly, for EDRL, as long as the $1/2$-expectile (i.e. the mean) is included in the set of statistics, expected returns are learnt exactly; we return to this point in Section \ref{sec:edrlTabularControl}.
\section{Experimental results}\label{sec:experiments}

We first present results with a tabular version of EDRL to illustrate and expand upon the theoretical results presented in Sections \ref{sec:statsAndSamples} and \ref{sec:analysis}.
We then combine the EDRL update with a DQN-style architecture to create a novel deep RL algorithm (ER-DQN), and evaluate performance on the Atari-57 environments. We give full details of the architectures used in experiments in Appendix Section \ref{sec:ERDQNArchitecture}.

There are several ways in which the root-finding/optimisation problems \eqref{eq:rootFindingProblem} and \eqref{eq:minProblem} may be solved in practice. In our experiments, we use a SciPy optimisation routine \citep{scipy}.

\begin{figure}
    \centering
    \includegraphics[keepaspectratio, width=.4\textwidth]{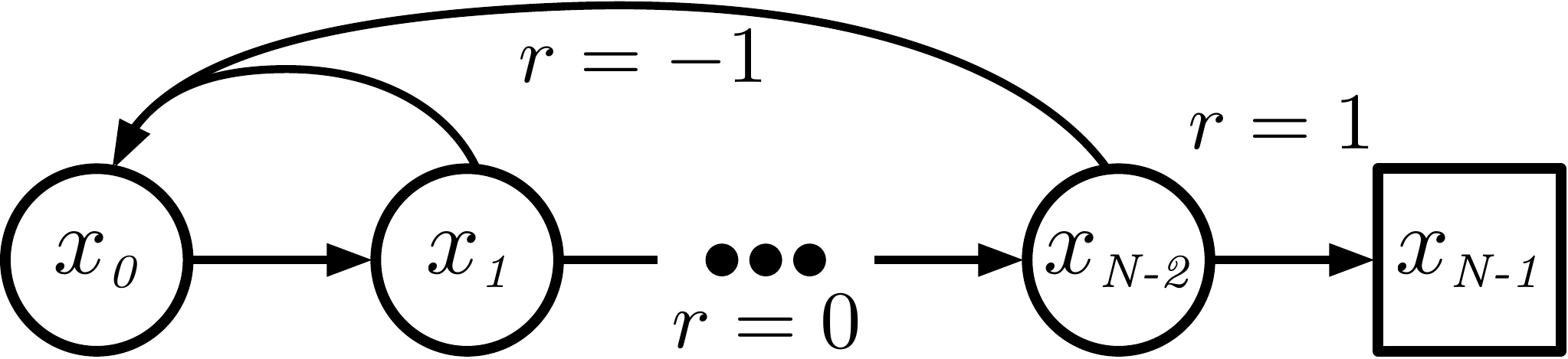}
    \caption{An illustration of the $N$-Chain environment.}
    \label{fig:nchainEnvironment}
\end{figure}

\subsection{Tabular policy evaluation}\label{sec:edrlTabular}
We empirically validate that EDRL, which uses a sample imputation strategy, better approximates the true expectiles of a policy's return distribution as compared to the naive approach described in Section \ref{sec:expectilemotivation}. We then show that the same is true for a variant of QDRL. 

We use a variant of the classic $N$-Chain domain (see Figure \ref{fig:nchainEnvironment}).
This environment is a one-dimensional chain of length $N$ with two possible actions at each state: (i) \texttt{forward}, which moves the agent right by one step with probability 0.95 and to $x_0$ with probability 0.05, and \texttt{backward}, which moves the agent to $x_0$ with probability 0.95 and one step to the right with probability 0.05.
The reward is $-1$ when transitioning to the leftmost state, $+1$ when transitioning to the rightmost state, and zero elsewhere. Episodes begin in the leftmost state and terminate when the rightmost state is reached. The discount factor is $\gamma=0.99$.
For an $N$-Chain with length 15, we compute the return distribution of the optimal policy $\pi^*$ which selects the \texttt{forward} action at each state. This environment formulation induces an increasingly multimodal return distribution under the policy as the distance from the goal state increases. We compute the ground truth start state expectiles from the empirical distribution of 1,000 Monte Carlo rollouts under the policy $\pi^*$.

\textbf{EDRL.} We ran two DRL algorithms on this $N$-Chain environment: (i) EDRL, using a SciPy optimisation routine to impute target samples at each step; and (ii) $\textit{EDRL-Naive}$, using the update described in Section \ref{sec:expectilemotivation}.
We learned $\{1, 3, 5, 7, 9\}$ expectiles, set the learning rate to $\alpha = 0.05$, and performed 30,000 training steps. 

\begin{figure}
    \centering
    \includegraphics[keepaspectratio, width=.42\textwidth]{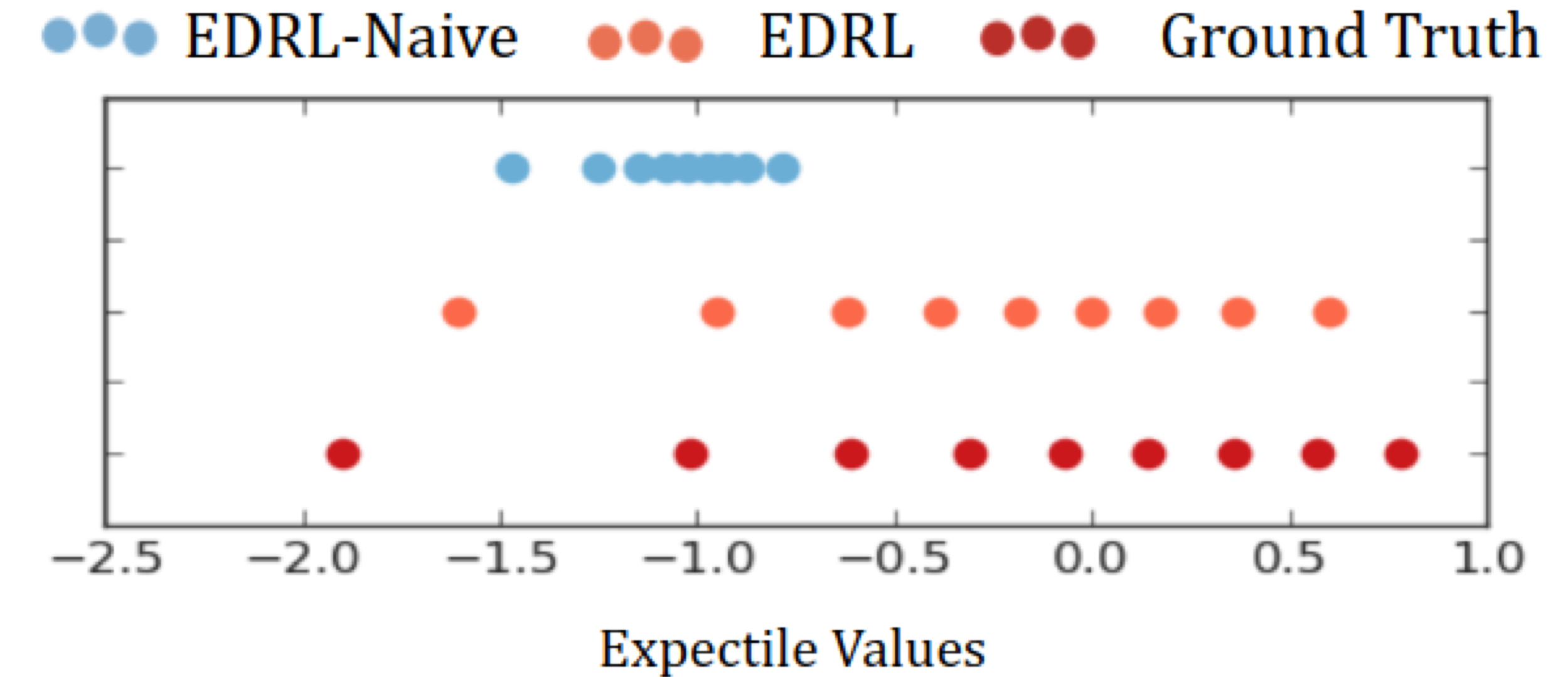}
    \caption{Expectiles for state $x_0$ of the $15$-Chain under policy $\pi^*$.}
    \label{fig:nchainExpectileCollapse}
    \vspace{-0.2cm}
\end{figure}

\begin{figure}
    \centering
    \includegraphics[keepaspectratio, width=.45\textwidth]{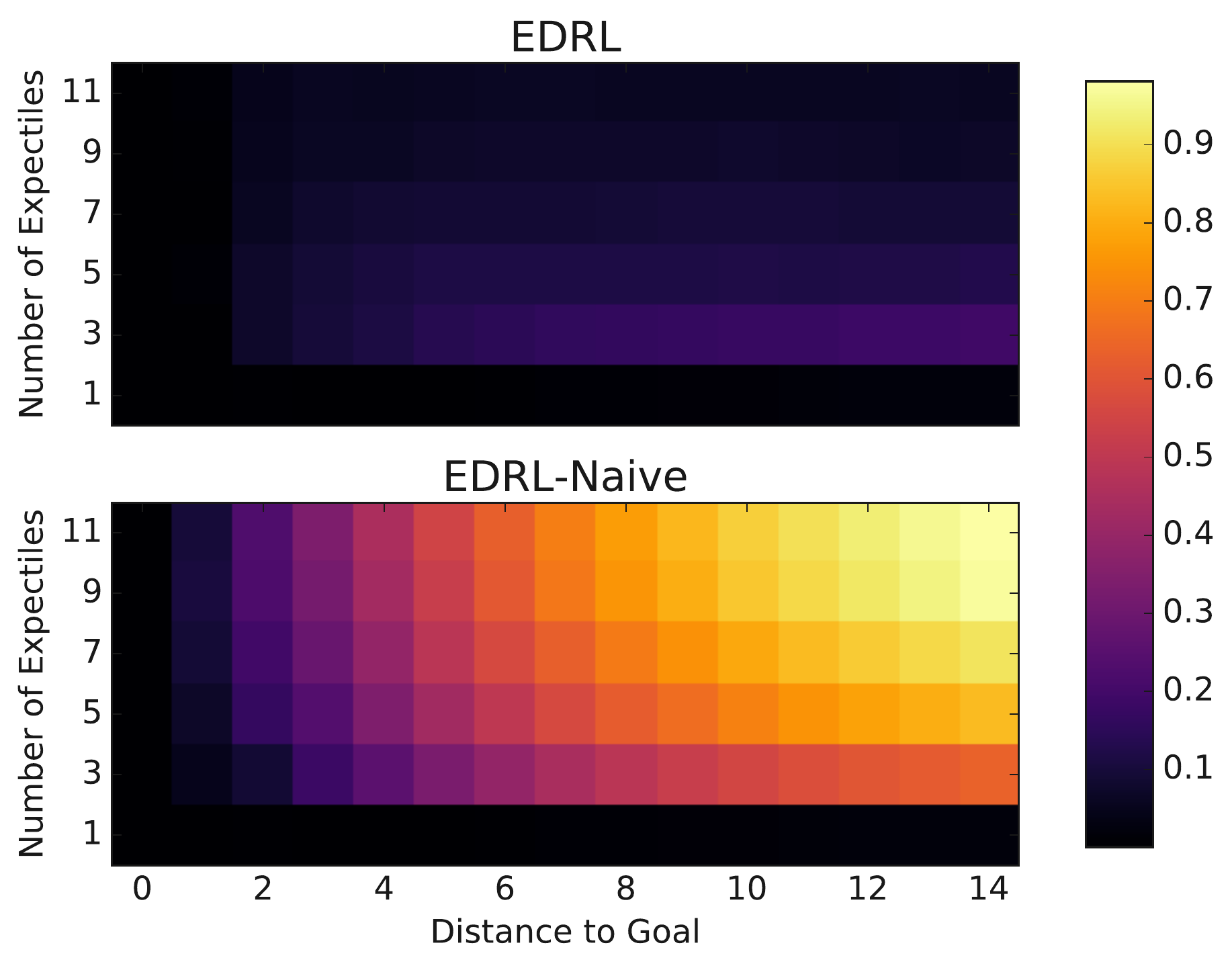}
    \caption{Expectile estimation error for varying numbers of learned expectiles and different $N$-Chain lengths.}
    \label{fig:nchainExpectileError}
\end{figure}

\begin{figure}
    \centering
    \includegraphics[keepaspectratio, width=.45\textwidth]{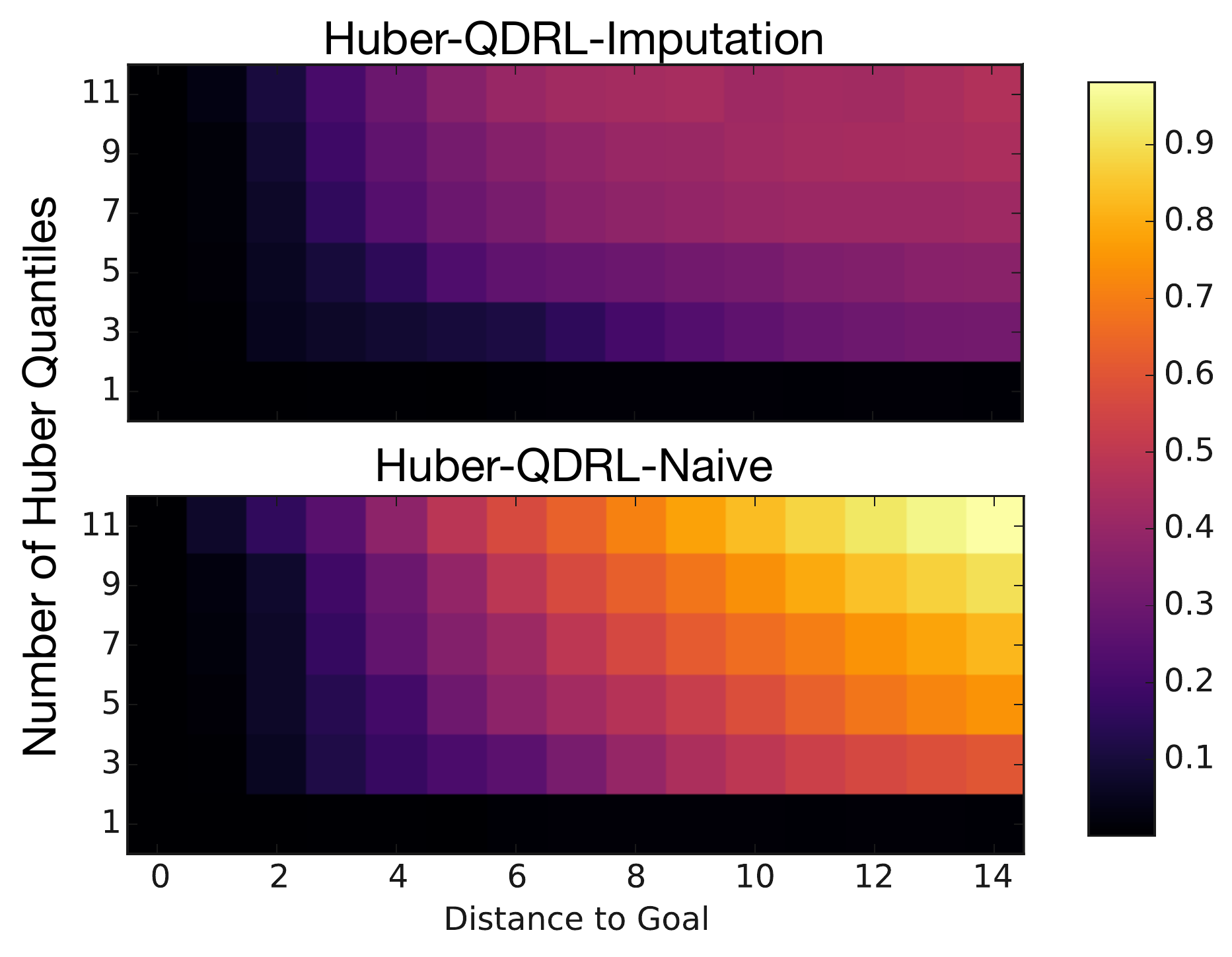}
    \caption{Huber quantile estimation error for varying numbers of learned Huber quantiles at different distances to the goal state. The environment is an $N$-Chain with $N=15$.}
    \vspace{-0.2cm}
    \label{fig:nchain_huber_quantile_error}
\end{figure}

In Figure \ref{fig:nchainExpectileCollapse} we illustrate the collapse of the start state expectiles learned by the EDRL-Naive algorithm with $9$ expectiles, which leads to high expectile estimation error, measured as in Definition~\ref{def:approxbc}.
In Figure \ref{fig:nchainExpectileError}, we show that this error grows as both the distance to the goal state and number of expectiles learned increase. In contrast, under EDRL these errors are much lower
this error remains relatively low for varying numbers of expectiles and distances to the goal with EDRL.
In Appendix \ref{sec:examples}, we illustrate that this observation generalises to other
return distributions in the $N$-Chain.

\textbf{QDRL.} In practical implementations, QDRL often minimises the Huber-quantile loss
\begin{align}\label{eq:huberquantile}
    \argmin_{q \in \mathbb{R}} \mathbb{E}_{\genrv \sim \gendist}\!\left\lbrack (\tau \mathbbm{1}_{\genrv > q} \!+\! (1\!-\!\tau)\mathbbm{1}_{\genrv < q}) H_\kappa(\genrv \!-\! q) \right\rbrack  ,
\end{align}
rather than the quantile loss \eqref{eq:qr} for numerical stability,
where $H_\kappa$ is the Huber loss function with width parameter $\kappa$, as in \citet{QRDQN} (we set $\kappa = 1$). As with naive EDRL, simply replacing the quantile regression loss in QDRL with Expression \eqref{eq:huberquantile} conflates samples and statistics, leading to worse approximation of the distribution.
We propose a new algorithm for learning Huber quantiles, \emph{Huber-QDRL-Imputation}, that incorporates an imputation strategy
by solving an optimisation problem analogous to \eqref{eq:minProblem} in the case of the Huber quantile loss. In Figure~\ref{fig:nchain_huber_quantile_error}, we compare this to \emph{Huber-QDRL-Naive}, the standard algorithm for learning Huber quantiles, on the $N$-chain environment. As in the case of expectiles, the Huber quantile estimation error is vastly reduced when using an imputation strategy.

\subsection{Tabular control}\label{sec:edrlTabularControl}
\begin{figure}
    \centering
    \includegraphics[keepaspectratio, width=.45\textwidth]{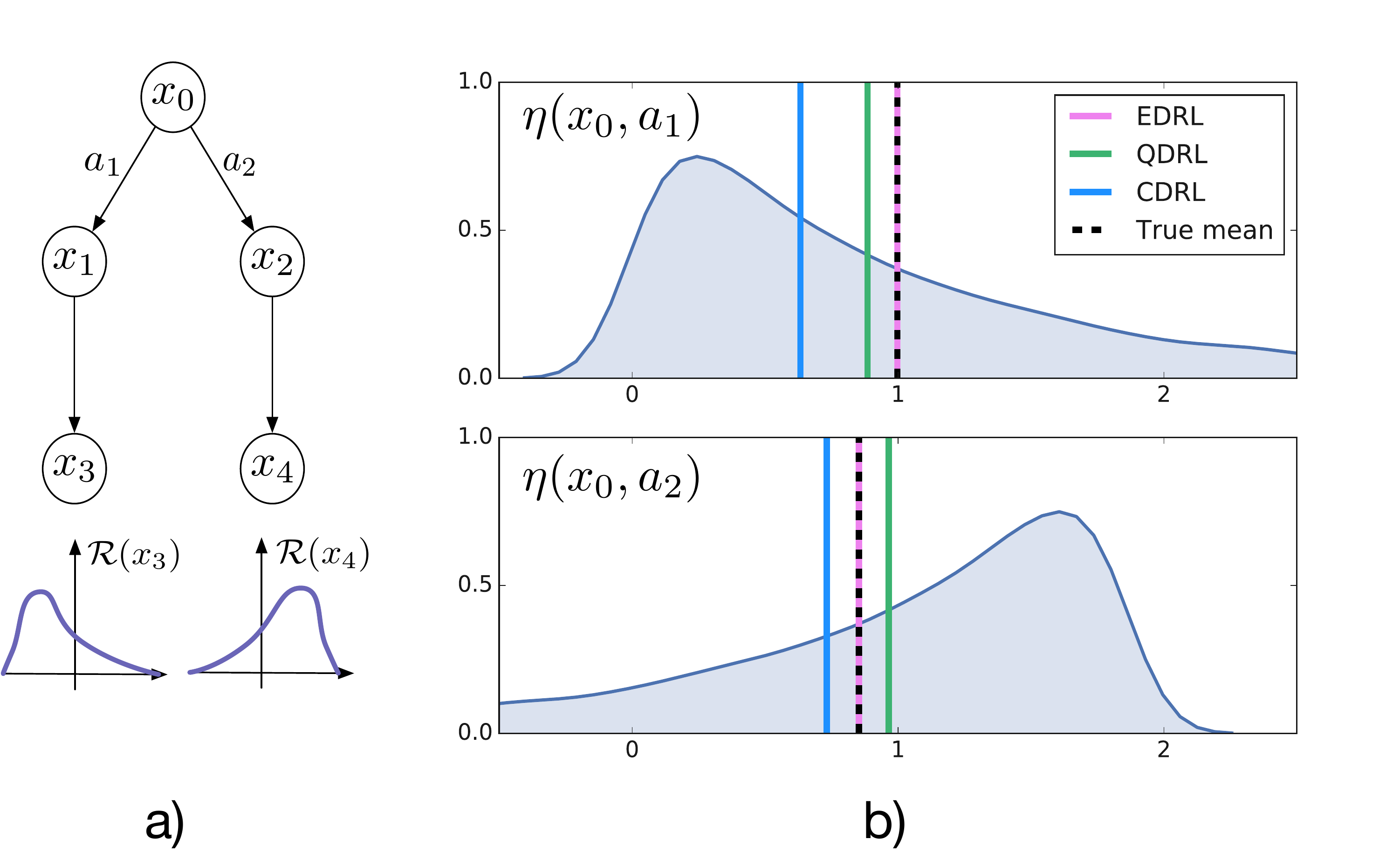}
    \caption{(a) 5-state MDP, reward is zero everywhere except at the terminal states $x_3$ and $x_4$ which have stochastic rewards. (b) We show the true return distributions $\approxdist(\state_0, \action_1)$ and $\approxdist(\state_0, \action_2)$, and the expected returns estimated by CDRL, QDRL, and EDRL.}
    \vspace{-0.3cm}
    \label{fig:toy_mdp_control}
\end{figure}
In Section~\ref{sec:meanConsistency} we argued for the importance of mean consistency. In Figure~\ref{fig:toy_mdp_control}a we give a simple, five state, MDP in which the learned control policy is directly affected by mean consistency. At start state $x_0$ the agent has the choice of two actions, leading down two paths and culminating in two different reward distributions. The rewards at terminal states $x_3$ and $x_4$ are sampled from (shifted) exponential distributions with densities $e^{-\lambda}$ ($\lambda \geq 0$) and $e^{\lambda+1.85}$ ($\lambda \leq 1.85$), respectively. Transitions are deterministic, and $\gamma = 1$. For CDRL, we take bin locations at $(z_1,z_2,z_3) = (0,1,2)$.

Figure~\ref{fig:toy_mdp_control}b shows the true return distributions, their expectations, and the means estimated by CDRL, QDRL and EDRL. Due to a lack of mean consistency both CDRL and QDRL learn a sub-optimal greedy policy.
For CDRL, this is due to the true return distributions having support outside $[0,2]$, and for QDRL, this is due to the quantiles not capturing tail behaviour. In contrast, EDRL correctly learns the means of both return distributions, and so is able to act optimally.

\subsection{Expectile regression DQN}\label{sec:atari}
To demonstrate the effectiveness of EDRL at scale, we combine the EDRL update in Algorithm \ref{alg:EDRL} with the architecture of QR-DQN to obtain a new deep RL agent, expectile regression DQN (ER-DQN).
Precise details of the architecture, training algorithm, and environments are given in Appendix Section \ref{sec:erdqndetails}.
We evaluate ER-DQN on a suite of 57 Atari games using the Arcade Learning Environment \citep{bellemare2013arcade}. In Figure \ref{fig:atariresults}, we plot mean and median human normalised scores for ER-DQN with 11 atoms, and compare against DQN, QR-DQN (which learns 200 Huber quantile statistics), and a naive implementation of ER-DQN that doesn't use an imputation strategy, learning 201 expectiles. All methods were re-run for this paper, and results were averaged over 3 seeds. In practice, we found that with 11 expectiles, ER-DQN already offers strong performance relative to these other approaches, and that with this number of statistics, the additional training overhead due to the SciPy optimiser calls is low.

\begin{figure}[t]
    \centering
    \includegraphics[keepaspectratio,width=.45\textwidth]{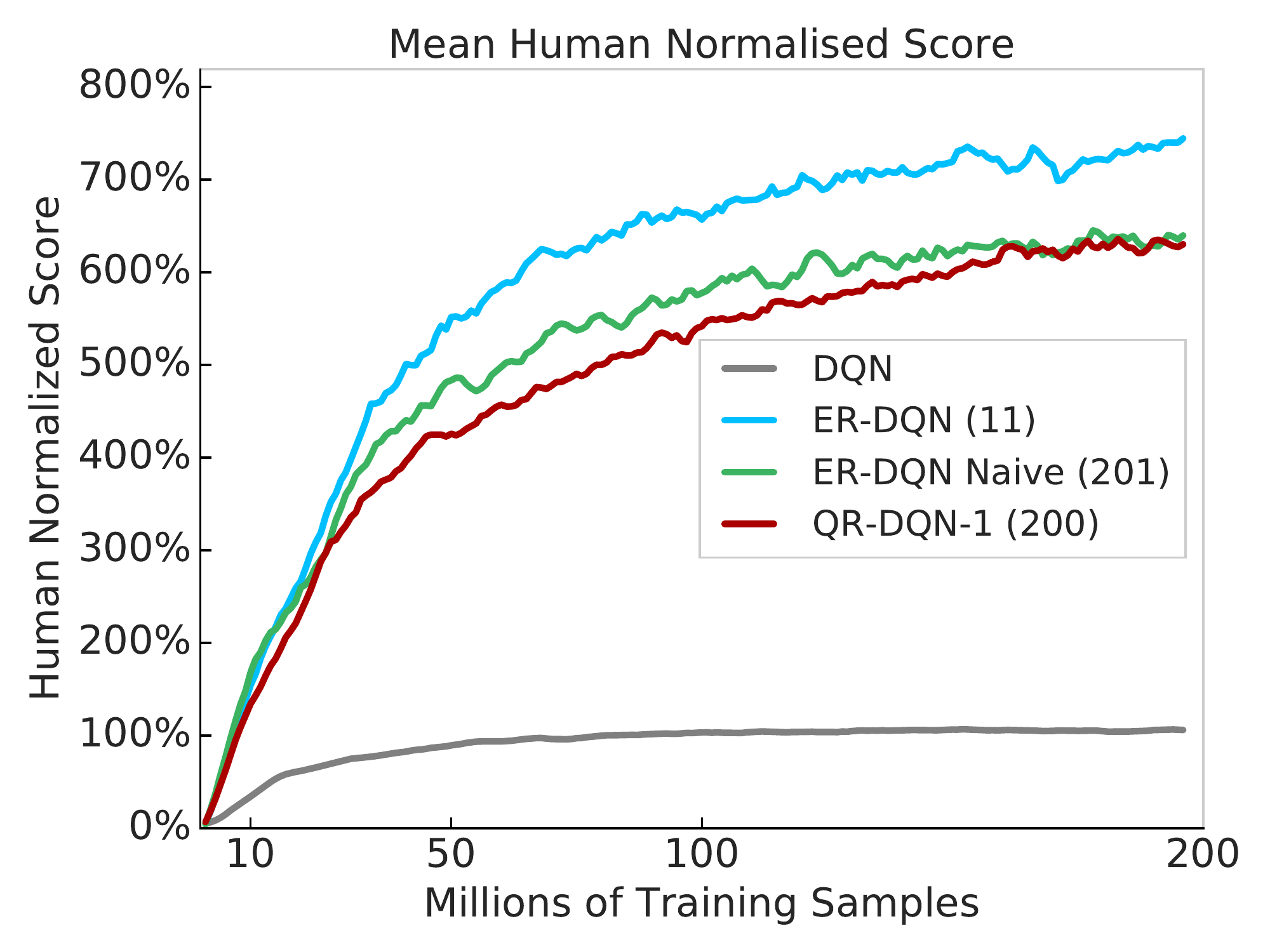}
    \includegraphics[keepaspectratio,width=.45\textwidth]{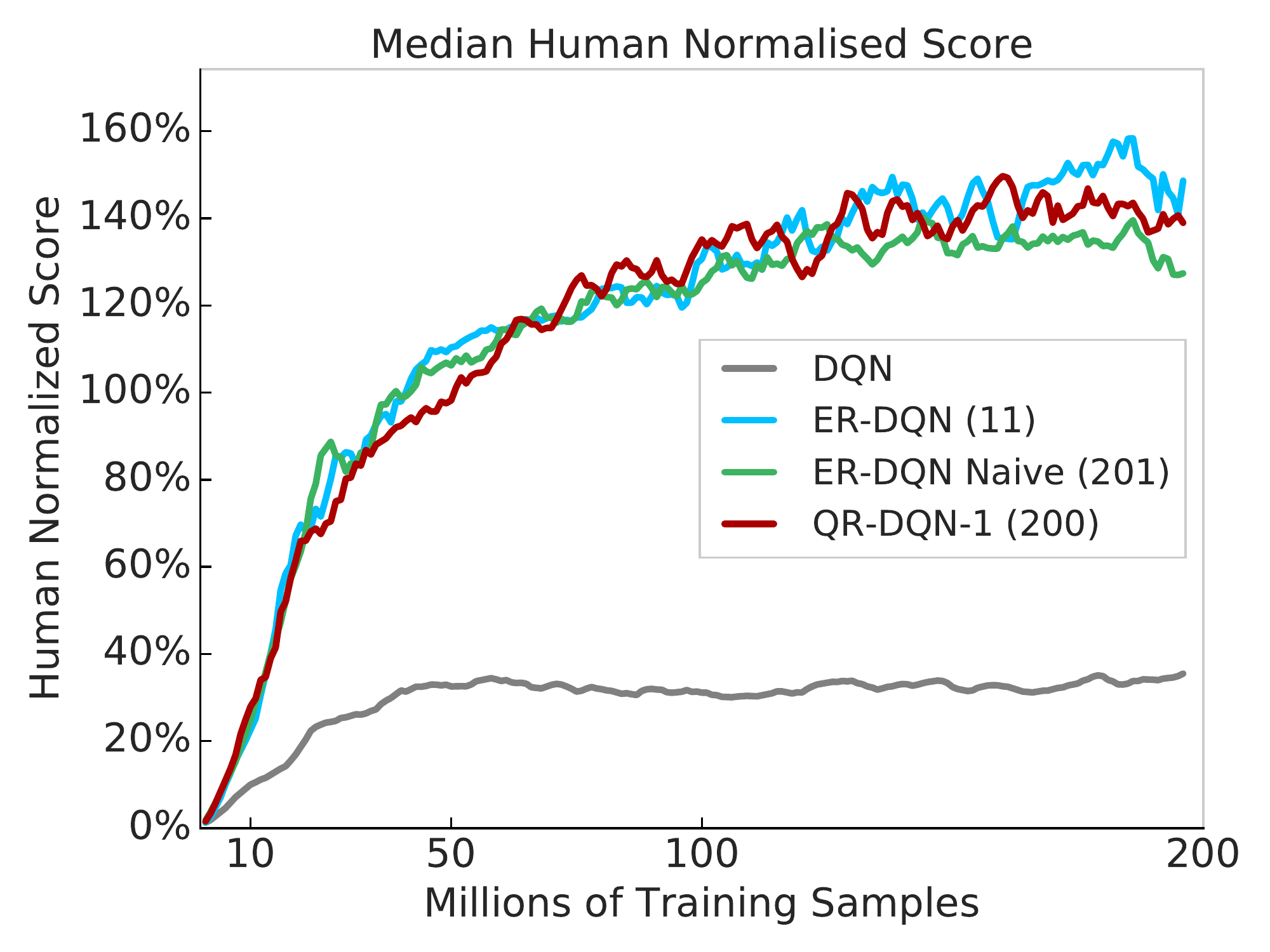}
    \caption{Mean and median human normalised scores across all 57 Atari games. Number of statistics learnt for each algorithm indicated in parentheses.}
    \label{fig:atariresults}
\end{figure}

In terms of mean human normalised score, ER-DQN represents a substantial improvement over both QR-DQN and the naive version of ER-DQN that does not use an imputation strategy. We hypothesise that the mean consistency of EDRL (in contrast to other DRL methods; see Section \ref{sec:meanConsistency}) is partially responsible for these improvements, and leave further investigation of the role of mean consistency in DRL as a direction for future work. We also remark that the performance of ER-DQN shows that there may be significant practical value in applying the framework developed in this paper to other families of statistics. It remains to be seen if the presence of partial observability may induce non-trivial distributions, which could also explain ER-DQN's improved performance in some games. Investigation into the robustness of ER-DQN with regards to the precise imputation strategy used is also a natural question for future work.

\section{Conclusion}\label{sec:conclusion}

We have developed a unifying framework for DRL in terms of statistical estimators and imputation strategies. Through this framework, we have developed a new algorithm, EDRL, as well as proposing algorithmic adjustments to an existing approach. We have also used this framework to define the notion of Bellman closedness, and provided new approximation guarantees for existing algorithms.

This paper also opens up several avenues for future research.
Firstly, the framework of imputation strategies has the potential to be applied to a wide range of collections of statistics, opening up a large space of new algorithms to explore.
Secondly, our analysis has shown that a lack of Bellman closedness necessarily introduces a source of approximation error into many DRL algorithms; it will be interesting to see how this interacts with errors introduced by function approximation.
Finally, we have focused on DRL algorithms that can be interpreted as learning a finite collection of statistics in this paper. One notable alternative is implicit quantile networks \citep{IQN}, which attempt to learn an uncountable collection of quantiles with a finite-capacity function approximator; it will also be interesting to extend our analysis to this setting.

\section*{Acknowledgements}
The authors acknowledge the vital contributions of their colleagues at DeepMind. Thanks to Hado van Hasselt for detailed comments on an earlier draft, and to Georg Ostrovski for useful suggestions regarding the SciPy optimisation calls within ER-DQN.

\newpage

\bibliography{erdqn}

\begin{thebibliography}{20}
\providecommand{\natexlab}[1]{#1}
\providecommand{\url}[1]{\texttt{#1}}
\expandafter\ifx\csname urlstyle\endcsname\relax
  \providecommand{\doi}[1]{doi: #1}\else
  \providecommand{\doi}{doi: \begingroup \urlstyle{rm}\Url}\fi

\bibitem[Barth-Maron et~al.(2018)Barth-Maron, Hoffman, Budden, Dabney, Horgan,
  TB, Muldal, Heess, and Lillicrap]{D4PG}
Barth-Maron, G., Hoffman, M.~W., Budden, D., Dabney, W., Horgan, D., TB, D.,
  Muldal, A., Heess, N., and Lillicrap, T.
\newblock Distributional policy gradients.
\newblock \emph{Proceedings of the International Conference on Learning
  Representations (ICLR)}, 2018.

\bibitem[Bellemare et~al.(2013)Bellemare, Naddaf, Veness, and
  Bowling]{bellemare2013arcade}
Bellemare, M.~G., Naddaf, Y., Veness, J., and Bowling, M.
\newblock The arcade learning environment: An evaluation platform for general
  agents.
\newblock \emph{Journal of Artificial Intelligence Research}, 47:\penalty0
  253--279, 2013.

\bibitem[Bellemare et~al.(2017)Bellemare, Dabney, and Munos]{C51}
Bellemare, M.~G., Dabney, W., and Munos, R.
\newblock A distributional perspective on reinforcement learning.
\newblock \emph{Proceedings of the International Conference on Machine Learning
  (ICML)}, 2017.

\bibitem[Bellman(1957)]{Bellman}
Bellman, R.
\newblock \emph{Dynamic Programming}.
\newblock Princeton University Press, 1st edition, 1957.

\bibitem[Dabney et~al.(2017)Dabney, Rowland, Bellemare, and Munos]{QRDQN}
Dabney, W., Rowland, M., Bellemare, M.~G., and Munos, R.
\newblock Distributional reinforcement learning with quantile regression.
\newblock \emph{Proceedings of the AAAI Conference on Artificial Intelligence},
  2017.

\bibitem[Dabney et~al.(2018)Dabney, Ostrovski, Silver, and Munos]{IQN}
Dabney, W., Ostrovski, G., Silver, D., and Munos, R.
\newblock Implicit quantile networks for distributional reinforcement learning.
\newblock \emph{Proceedings of the International Conference on Machine Learning
  (ICML)}, 2018.

\bibitem[Engert(1970)]{engert1970finite}
Engert, M.
\newblock Finite dimensional translation invariant subspaces.
\newblock \emph{Pacific Journal of Mathematics}, 32\penalty0 (2):\penalty0
  333--343, 1970.

\bibitem[Gruslys et~al.(2018)Gruslys, Dabney, Azar, Piot, Bellemare, and
  Munos]{gruslys2018the}
Gruslys, A., Dabney, W., Azar, M.~G., Piot, B., Bellemare, M., and Munos, R.
\newblock The reactor: A fast and sample-efficient actor-critic agent for
  reinforcement learning.
\newblock \emph{Proceedings of the International Conference on Learning
  Representations (ICLR)}, 2018.

\bibitem[Huber \& Ronchetti(2009)Huber and Ronchetti]{RobustStatistics}
Huber, P.~J. and Ronchetti, E.
\newblock \emph{Robust Statistics}.
\newblock Wiley New York, 2nd edition, 2009.

\bibitem[Jones et~al.(2001)Jones, Oliphant, and Peterson]{scipy}
Jones, E., Oliphant, T., and Peterson, P.
\newblock {SciPy}: Open source scientific tools for {Python}, 2001.
\newblock URL \url{http://www.scipy.org/}.

\bibitem[Lattimore \& Hutter(2012)Lattimore and Hutter]{lattimore2012pac}
Lattimore, T. and Hutter, M.
\newblock {PAC} bounds for discounted {MDP}s.
\newblock In \emph{International Conference on Algorithmic Learning Theory
  (ALT)}, 2012.

\bibitem[Lyle et~al.(2019)Lyle, Castro, and Bellemare]{lyle2019comparative}
Lyle, C., Castro, P.~S., and Bellemare, M.~G.
\newblock A comparative analysis of expected and distributional reinforcement
  learning.
\newblock \emph{Proceedings of the AAAI Conference on Artificial Intelligence},
  2019.

\bibitem[Morimura et~al.(2010{\natexlab{a}})Morimura, Sugiyama, Kashima,
  Hachiya, and Tanaka]{MorimuraNonparametric}
Morimura, T., Sugiyama, M., Kashima, H., Hachiya, H., and Tanaka, T.
\newblock Nonparametric return distribution approximation for reinforcement
  learning.
\newblock \emph{Proceedings of the International Conference on Machine Learning
  (ICML)}, 2010{\natexlab{a}}.

\bibitem[Morimura et~al.(2010{\natexlab{b}})Morimura, Sugiyama, Kashima,
  Hachiya, and Tanaka]{MorimuraParametric}
Morimura, T., Sugiyama, M., Kashima, H., Hachiya, H., and Tanaka, T.
\newblock Parametric return density estimation for reinforcement learning.
\newblock \emph{Proceedings of the Conference on Uncertainty in Artificial
  Intelligence (UAI)}, 2010{\natexlab{b}}.

\bibitem[Newey \& Powell(1987)Newey and Powell]{newey1987asymmetric}
Newey, W.~K. and Powell, J.~L.
\newblock Asymmetric least squares estimation and testing.
\newblock \emph{Econometrica: Journal of the Econometric Society}, pp.\
  819--847, 1987.

\bibitem[Qu et~al.(2018)Qu, Mannor, and Xu]{qu2018nonlinear}
Qu, C., Mannor, S., and Xu, H.
\newblock Nonlinear distributional gradient temporal-difference learning.
\newblock \emph{arXiv preprint arXiv:1805.07732}, 2018.

\bibitem[Rowland et~al.(2018)Rowland, Bellemare, Dabney, Munos, and
  Teh]{AnalysisCDRL}
Rowland, M., Bellemare, M.~G., Dabney, W., Munos, R., and Teh, Y.~W.
\newblock An analysis of categorical distributional reinforcement learning.
\newblock \emph{Proceedings of the International Conference on Artificial
  Intelligence and Statistics (AISTATS)}, 2018.

\bibitem[Sobel(1982)]{sobel1982variance}
Sobel, M.~J.
\newblock The variance of discounted {M}arkov decision processes.
\newblock \emph{Journal of Applied Probability}, 19\penalty0 (4):\penalty0
  794--802, 1982.

\bibitem[Sutton \& Barto(2018)Sutton and Barto]{sutton2018reinforcement}
Sutton, R.~S. and Barto, A.~G.
\newblock \emph{Reinforcement Learning: An Introduction}.
\newblock MIT Press, 2018.

\bibitem[Zhang et~al.(2019)Zhang, Mavrin, Yao, Kong, and Liu]{zhang2018quota}
Zhang, S., Mavrin, B., Yao, H., Kong, L., and Liu, B.
\newblock {QUOTA}: The quantile option architecture for reinforcement learning.
\newblock \emph{Proceedings of the AAAI Conference on Artificial Intelligence},
  2019.

\end{thebibliography}
\bibliographystyle{icml2019}

\newpage
\onecolumn

\appendix
\addcontentsline{toc}{section}{Appendices}
\section*{Appendices}

\section{Distributional reinforcement learning algorithms}\label{sec:olderalgorithms}

For completeness, we give full descriptions of CDRL and QDRL algorithms in this section, complementing the details given in Section \ref{sec:CDRLQDRLDescriptions}. We also summarise CDRL, QDRL, the exact approach to distributional RL, and our proposed algorithm EDRL, in Figure \ref{fig:all_algs} at the end of this section.

\subsection{The distributional Bellman operator}\label{sec:distBellmanOp}
 In accordance with the distributional Bellman equation \eqref{eq:distBellmanEq}, the distributional Bellman operator $\BellmanOp{\policy}{} : \mathscr{P}(\mathbb{R})^{\statespace \times \actionspace} \rightarrow \mathscr{P}(\mathbb{R})^{\statespace \times \actionspace}$ is defined by \citet{C51} as
\begin{align*}
    (\BellmanOp{\policy}{}\eta)(\state, \action) = \mathbb{E}_\pi\left\lbrack  (f_{\rewardvar_0, \gamma})_\# \eta(\statevar_1, \actionvar_1) |  \statevar_0 \!=\! \state, \actionvar_0 \!=\! \action \right\rbrack \, ,
\end{align*}
for all $\eta \in \mathscr{P}(\mathbb{R})^{\statespace\times\actionspace}$.

\subsection{Categorical distributional reinforcement learning}\label{sec:cdrl}

As described in Section \ref{sec:CDRLQDRLDescriptions}, CDRL algorithms are an approach to distributional RL that restrict approximate distributions to the parametric family of the form $\{ \sum_{\statix=1}^\numstats p_\statix \delta_{z_\statix} | \sum_{\statix=1}^\numstats p_\statix = 1,\ p_\statix \geq 0 \forall \statix \} \subseteq \mathscr{P}(\mathbb{R})$, where $z_1 < \cdots < z_\numstats$ are an evenly spaced, fixed set of supports.
For evaluation of a policy $\policy : \statespace \rightarrow \mathscr{P}(\actionspace)$, given a collection of approximations $(\approxdist(\state, \action) | (\state, \action) \in \statespace \times \actionspace)$, the approximation at $(\state, \action) \in \statespace\times\actionspace$ is updated according to: 
\begin{align*}
    \approxdist(\state, \action) \leftarrow \Pi_\mathcal{C} \mathbb{E}_\pi\left\lbrack (f_{\rewardvar_0, \gamma})_\# \approxdist(\statevar_1, \actionvar_1) |  \statevar_0 \!=\! \state, \actionvar_0 \!=\! \action \right\rbrack \, .
\end{align*}
Here, $\Pi_\mathcal{C} : \mathscr{P}(\mathbb{R}) \rightarrow \mathscr{P}(\{z_1,\ldots,z_\numstats\})$ is a projection operator defined for a single Dirac delta as 
\begin{align}\label{eq:cramerProj}
    \Pi_\mathcal{C}(\delta_{w}) =
    \begin{cases}
        \delta_{z_1} & w \leq z_1 \\
        \frac{w - z_{\statix+1}}{z_{\statix} - z_{\statix+1}}\delta_{z_\statix} + \frac{z_{\statix} - w}{z_{\statix} - z_{\statix+1}} \delta_{\statix+1}  & z_{\statix} \leq w \leq z_{\statix+1} \\
        \delta_{z_K} & w \geq z_\numstats \, ,
    \end{cases} 
\end{align}
and extended affinely and continuously. In the language of operators, the CDRL update may be neatly described as $\approxdist \leftarrow \Pi_\mathcal{C} \BellmanOp{\policy}{} \approxdist{}$, where we abuse notation by interpreting $\Pi_\mathcal{C}$ as an operator on collections of distributions indexed by state-action pairs, applying the transformation in Expression \eqref{eq:cramerProj} to each distribution.
The supremum-Cram\'er distance is defined as 
\begin{align*}
    \overline{\ell}_2(\returndist{1}, \returndist{2}) = \sup_{(\state, \action) \in \statespace\times\actionspace} \ell_2(\returndist{1}(\state, \action), \returndist{2}(\state, \action)) = \sup_{(\state, \action) \in \statespace\times\actionspace} \Big( \int_{\mathbb{R}} | F_{\returndist{1}(\state, \action)}(t) - F_{\returndist{2}(\state, \action)}(t) |^2 \mathrm{d}t \Big)^{\frac{1}{2}} \, .
\end{align*}
for all $\returndist{1}, \returndist{2} \in \mathscr{P}(\mathbb{R})^{\statespace \times \actionspace}$, where for any $\mu \in \mathscr{P}(\mathbb{R})$, $F_\mu$ denotes the CDF of $\mu$. The operator $\Pi_\mathcal{C} \BellmanOp{\policy}{}$ is a $\sqrt{\gamma}$-contraction in the supremum-Cram\'er distance, and so by the contraction mapping theorem, repeated CDRL updates converge to a unique limit point, regardless of the initial approximate distributions.
For more details on these results and further background, see \citet{C51,AnalysisCDRL}.

\textbf{Stochastic approximation.} The update $\approxdist \leftarrow \Pi_\mathcal{C} \BellmanOp{\policy}{} \approxdist$ is typically not computable in practice, due to unknown/intractable dynamics. An unbiased approximation to $(\BellmanOp{\policy}{}\approxdist)(\state, \action)$ may be obtained by interacting with the environment to obtain a transition $(\state, \action, \reward, \state^\prime, \action^\prime)$, and computing the target
\begin{align*}
    (f_{\reward, \gamma})_\# \approxdist(\state^\prime, \action^\prime) \, .
\end{align*}
It can be shown \citep{AnalysisCDRL} that the following is an unbiased estimator for the CDRL update $(\Pi_\mathcal{C} \BellmanOp{\policy}{} \approxdist)(\state, \action)$:
\begin{align*}
    \Pi_\mathcal{C} (f_{\reward, \gamma})_\# \approxdist(\state^\prime, \action^\prime) \, .
\end{align*}
Finally, the current estimate $\approxdist(\state, \action)$ can be moved towards the stochastic target by following the (semi-)gradient of some loss, in analogy with semi-gradient methods in classical RL. \citet{C51} consider the KL loss
\begin{align*}
    \mathrm{KL}(\Pi_\mathcal{C} (f_{\reward, \gamma})_\# \approxdist(\state^\prime, \action^\prime)\ ||\ \approxdist(\state, \action)) \, ,
\end{align*}
and update $\approxdist(\state, \action)$ by taking the gradient of the loss through the second argument with respect to the parameters $p_{1:\numstats}(\state, \action)$. Other losses, such as the Cram\'er distance, may also be considered \citep{AnalysisCDRL}.

\textbf{Control.} 
All variants of CDRL for evaluation may be modified to become control algorithms. This is achieved by adjusting the distribution of the action $A_1$ in the backup in an analogous way to classical RL algorithms. Instead of having $\actionvar_1 \sim \pi(\cdot | \statevar_1)$, we instead select $\actionvar_1$ based on the currently estimated expected returns for each of the actions at the state $\statevar_1$. For Q-learning-style algorithms, the action corresponding to the highest estimated expected return is selected:
\begin{align*}
    \actionvar_1 = \argmax_{a \in \actionspace} \mathbb{E}_{\genrv \sim \approxdist(\statevar_1, \action)}\!\left\lbrack Z \right\rbrack \, .
\end{align*}
However, other choices are possible, such as SARSA-style $\varepsilon$-greedy action selection.

\subsection{Quantile distributional reinforcement learning}\label{sec:qdrl}

As described in Section \ref{sec:CDRLQDRLDescriptions}, QDRL algorithms are an approach to distributional RL that restrict approximate distributions to the parametric family of the form $\{ \frac{1}{\numstats}\sum_{\statix=1}^\numstats \delta_{z_\statix} | z_{1:\numstats} \in \mathbb{R}^\numstats \} \subseteq \mathscr{P}(\mathbb{R})$.
For evaluation of a policy $\policy : \statespace \rightarrow \mathscr{P}(\actionspace)$, given a collection of approximations $(\approxdist(\state, \action) | (\state, \action) \in \statespace \times \actionspace)$, the approximation at $(\state, \action) \in \statespace\times\actionspace$ is updated according to: 
\begin{align*}
    \approxdist(\state, \action) \leftarrow \Pi_{W_1} \mathbb{E}_\pi\left\lbrack (f_{\rewardvar_0, \gamma})_\# \approxdist(\statevar_1, \actionvar_1) |  \statevar_0 \!=\! \state, \actionvar_0 \!=\! \action \right\rbrack \ , .
\end{align*}
Here, $\Pi_{W_1} : \mathscr{P}(\mathbb{R}) \rightarrow \mathscr{P}(\mathbb{R})$ is a projection operator defined by
\begin{align*}
    \Pi_\mathcal{C}(\gendist) = \frac{1}{\numstats} \sum_{\statix=1}^\numstats \delta_{F_\mu^{-1}(\tau_\statix)} \, ,
\end{align*}
where $\tau_\statix = \frac{2\statix-1}{2\numstats}$, and $F_\gendist$ is the CDF of of $\gendist$. As noted in Section \ref{sec:CDRLQDRLDescriptions}, $F_\mu^{-1}(\tau)$ may also be characterised as the minimiser (over $q \in \mathbb{R}$) of the quantile regression loss $\mathrm{QR}(q; \mu, \tau) = \mathbb{E}_{\genrv \sim \gendist}\left\lbrack \left\lbrack \tau \mathbbm{1}_{\genrv > q} + (1 - \tau) \mathbbm{1}_{\genrv \leq q} \right\rbrack |\genrv - q| \right\rbrack$; this perspective turns out to be crucial in deriving a stochastic approximation version of the algorithm.

\textbf{Stochastic approximation.} As for CDRL, the update $\approxdist \leftarrow \Pi_{W_1} \BellmanOp{\policy}{} \approxdist$ is typically not computable in practice, due to unknown/intractable dynamics. Instead, a stochastic target may be computed by using a transition $(\state, \action, \reward, \state^\prime, \action^\prime)$, and updating each atom location $z_\statix(\state, \action)$ at the current state-action pair $(\state, \action)$ by following the gradient of the QR loss:
\begin{align*}
    \nabla_q \mathrm{QR}(q; (f_{\reward, \gamma})_\# \approxdist(\state^\prime, \action^\prime), \tau_\statix) \big|_{q = z_\statix(\state, \action)} \, .
\end{align*}
Because the $\mathrm{QR}$ loss is affine in its second argument, this yields an unbiased estimator of the true gradient
\begin{align*}
    \nabla_q \mathrm{QR}(q;
         (\BellmanOp{\policy}{} \approxdist)(\state, \action), 
         \tau_\statix) \big|_{q = z_\statix(\state, \action)} \, .
\end{align*}

\textbf{Control.} The methods for evaluation described above may be modified to yield control methods in exactly the same as described for CDRL in Section \ref{sec:cdrl}.

\subsection{Quantiles versus expectiles}\label{sec:qvse}

\begin{wrapfigure}{r}{0.34\textwidth} 
    \centering
    \vspace{-0.8cm}
    \includegraphics[keepaspectratio,width=.34\textwidth]{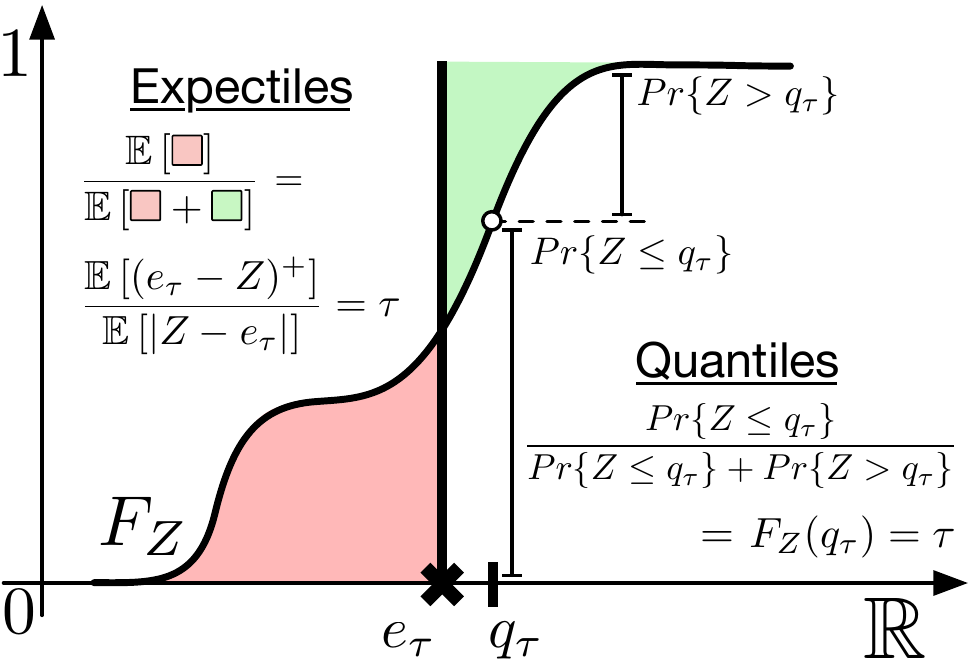}
    \vspace{-0.6cm}
    \caption{Diagram illustrating the similarities and differences of quantiles and expectiles.}
    \vspace{-1.5cm}
    \label{fig:quantvsexpect}
\end{wrapfigure}
Quantiles of a distribution are given by the inverse of the cumulative distribution function. As such, they fundamentally represent threshold values for the cumulative probabilities. That is, the quantile at $\tau$, $q_\tau$, is greater than or equal to $\tau \times 100\%$ of the outcome values. In contrast, expectiles also take into account the \emph{magnitude} of outcomes; the expectile at $\tau$, $e_\tau$, is such that the expectation of the deviations below $e_\tau$ of the random variable $Z$ is equal to $\frac{\tau}{1-\tau}$ of the expectation of the deivations above $e_\tau$. 
We illustrate these points in Figure~\ref{fig:quantvsexpect}.

\begin{figure}
    \centering
    \includegraphics[keepaspectratio,width=\textwidth]{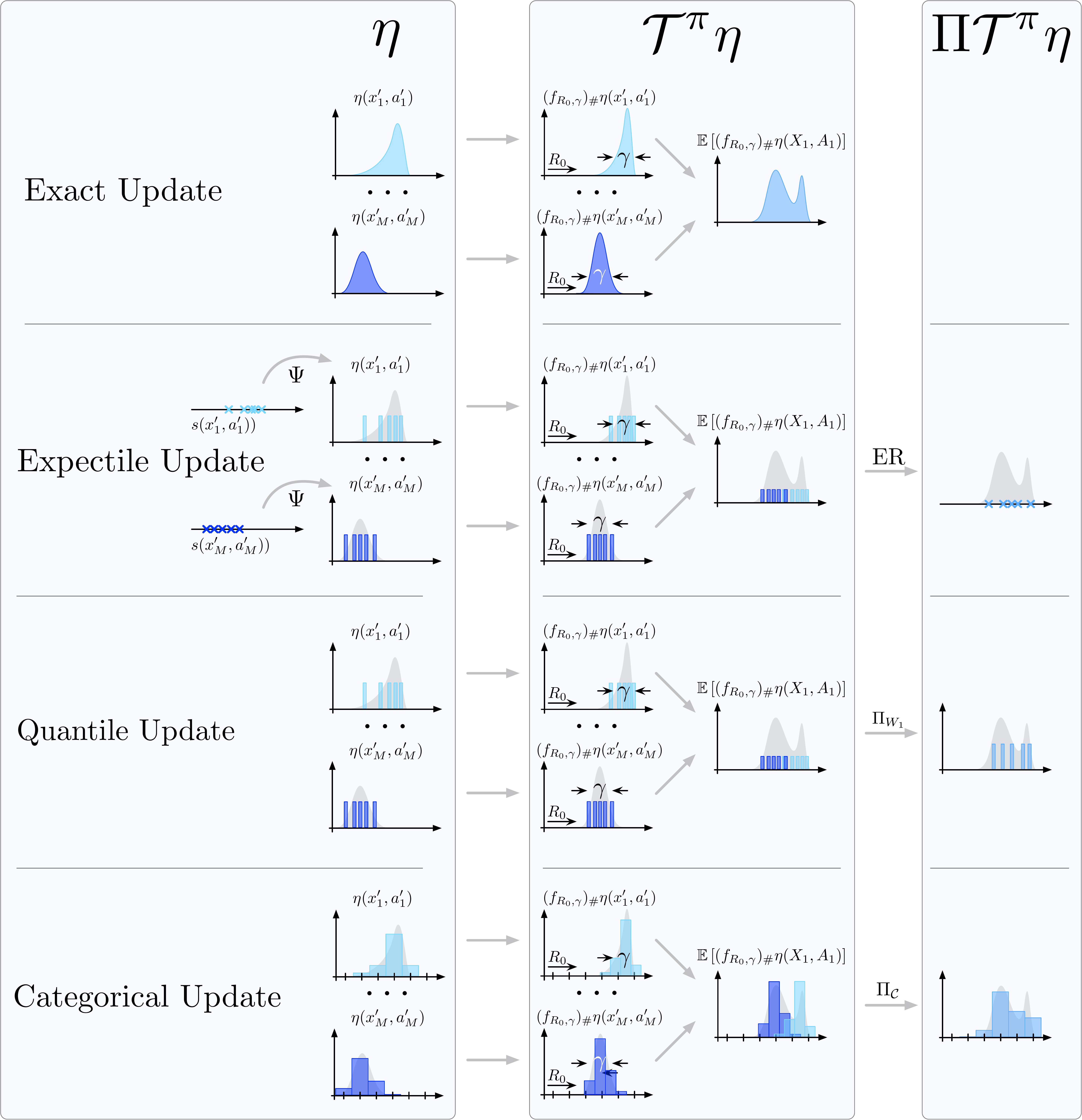}
    \caption{Illustration of distributional RL, with exact updates, expectile updates (EDRL), quantile updates (QDRL), and categorical updates (CDRL).}
    \label{fig:all_algs}
\end{figure}

\newpage
\section{Proofs}

\subsection{Proofs of results from Section \ref{sec:statsAndSamples}}

\CDRLStats*

\begin{proof}
    We first observe that the projection operator $\Pi_\mathcal{C}$, defined in Section \ref{sec:cdrl}, preserves each of the statistics $s_{z_1,z_2}, \ldots, s_{z_{\numstats-1}, z_{\numstats}}$, in the sense that for any distribution $\gendist$, we have $s_{z_\statix, z_{\statix+1}}(\gendist) = s_{z_\statix, z_{\statix+1}}(\Pi_\mathcal{C} \gendist)$ for all $\statix=1,\ldots,\numstats$. Secondly, we observe that that the map $\{\sum_{\statix=1}^\numstats p_\statix \delta_{z_\statix} | \sum_{\statix=1}^\numstats p_\statix = 1,\ p_\statix \geq 0 \forall \statix \} \ni \gendist \mapsto (s_{z_1,z_2}(\gendist), \ldots, s_{z_{\numstats-1}, z_\numstats}(\gendist)) \in \mathbb{R}^{\numstats-1}$ is injective; each distribution has a unique vector of statistics. Thus, CDRL can indeed be interpreted as learning precisely the set of statistics $s_{z_1,z_2},\ldots,s_{z_{\numstats-1},z_{\numstats}}$.
\end{proof}

\subsection{Proofs of results from Section \ref{subsec:bellmanclosed}}

\momentsAreClosed*

\begin{proof}
    We begin by introducing notation. Let $\statistic_\statix : \gendist \mapsto \mathbb{E}_{\genrv \sim \gendist}\left\lbrack \genrv^\statix \right\rbrack$ be the $\statix$\textsuperscript{th} moment functional, for $\statix=1,\ldots,\numstats$. We now compute
    \begin{align*}
        \statistic_\statix(\returndist{\policy}(\state, \action))
            & = \mathbb{E}_{\genrv \sim \returndist{\policy}(\state, \action)}\left\lbrack  \genrv^\statix \right\rbrack \\
            & = \sum_{(\state^\prime, \action^\prime) \in \statespace \times \actionspace} \int_{\mathbb{R}} \mathcal{R}(\mathrm{d}r | \state, \action) p(\state^\prime | \state, \action) \policy(\action^\prime | \state^\prime)
                \mathbb{E}_{Z \sim \returndist{\policy}(\state^\prime, \action^\prime)}\left\lbrack (r + \gamma Z)^\statix \right\rbrack \\
            & = \sum_{(\state^\prime, \action^\prime) \in \statespace \times \actionspace} \int_{\mathbb{R}} \mathcal{R}(\mathrm{d}r | \state, \action) p(\state^\prime | \state, \action) \policy(\action^\prime | \state^\prime)
                \sum_{m=0}^k \binom{k}{m} \gamma^{k-m} \mathbb{E}_{Z \sim \returndist{\policy}(\state^\prime, \action^\prime)}\left\lbrack Z^{k-m} \right\rbrack r^{m} \\
            & = \sum_{(\state^\prime, \action^\prime) \in \statespace \times \actionspace} \int_{\mathbb{R}} \mathcal{R}(\mathrm{d}r | \state, \action) p(\state^\prime | \state, \action) \policy(\action^\prime | \state^\prime)
                \sum_{m=0}^k \binom{k}{m} \gamma^{k-m} s_{\statix-m}(\returndist{\policy}(\state^\prime, \action^\prime)) r^{m} \\
            & = \mathbb{E}\left\lbrack 
                \sum_{m=0}^k \binom{k}{m} \gamma^{k-m} s_{\statix-m}(\returndist{\policy}(\statevar_1, \actionvar_1)) \rewardvar_0^{m} \Bigg| \statevar_0 = \state, \actionvar_0 = \action \right\rbrack \, .
    \end{align*}
    Thus, $\statistic_\statix(\returndist{\policy}(\state, \action))$ can be expressed in terms of $\rewardvar_0$ and $\statistic_{1:\numstats}(\returndist{\policy}(\statevar_1, \actionvar_1))$, as required.
\end{proof}

\classifyBellmanClosed*

\begin{proof}

    Suppose $\statistic_1,\ldots, \statistic_\numstats : \mathscr{P}(\mathbb{R}) \rightarrow \mathbb{R}$ form a Bellman closed set of statistical functionals of the form $\statistic_\statix(\gendist) = \mathbb{E}_{\genrv \sim \gendist}\left\lbrack h_\statix(\genrv) \right\rbrack$ for some measurable $h_\statix : \mathbb{R} \rightarrow \mathbb{R}$, for each $\statix = 1,\ldots,\numstats$. Now note that for any MDP $(\statespace, \actionspace, p, \gamma, \mathcal{R})$, we have the following equation:
    \begin{align*}
        \statistic_k(\returndist{\policy}(\state, \action)) = \sum_{(\state^\prime, \action^\prime) \in \statespace \times \actionspace} \int_{\mathbb{R}} \mathcal{R}(\mathrm{d}\reward | \state,\action) p(\state^\prime | \state, \action) \policy(\action^\prime | \state^\prime) \statistic_\statix((f_{\reward, \gamma})_\#\returndist{\policy}(\state^\prime, \action^\prime)) \, ,
    \end{align*}
    for all $(\state, \action) \in \statespace \times \actionspace$, and for each $\statix=1,\ldots,\numstats$. By assumption of Bellman closedness, the right-hand side of this equation may be written as a function of $\mathcal{R}(x, a)$, $\gamma$, and the collection of statistics $(\statistic_{1:\numstats}(\returndist{\policy}(\state^\prime, \action^\prime)) | (\state^\prime, \action^\prime) \in \statespace \times \actionspace )$. Since this must hold across all valid sets of return distributions, it must the case that each $\statistic_\statix((f_{\reward, \gamma})_\#\returndist{\policy}(\state^\prime, \action^\prime))$ may be written as a function of $\reward$, $\gamma$ and $\statistic_{1:\numstats}(\returndist{\policy}(\state^\prime, \action^\prime))$; we will write $\statistic_\statix((f_{\reward, \gamma})_\#\returndist{\policy}(\state^\prime, \action^\prime)) = g(\reward, \gamma, \statistic_{1:\numstats}(\returndist{\policy}(\state^\prime, \action^\prime)))$ for some $g$.
    
    We next claim that $g(\reward, \gamma, \statistic_{1:\numstats}(\returndist{\policy}(\state^\prime, \action^\prime)))$ is affine in $\statistic_{1:\numstats}(\returndist{\policy}(\state^\prime, \action^\prime))$. To see this, note that both $\statistic_\statix((f_{\reward, \gamma})_\#\returndist{\policy}(\state^\prime, \action^\prime))$ and $\statistic_{1:\numstats}(\returndist{\policy}(\state^\prime, \action^\prime))$ are affine as functions of the distribution $\returndist{\policy}(\state^\prime, \action^\prime)$, by assumption on the form of the statistics $\statistic_{1:\numstats}$. Therefore $g(r,\gamma, \cdot)$ too is affine on the (convex) codomain of $s_{1:K}$.
    
    Thus, we have
    \begin{align}
        \mathbb{E}_{\genrv \sim \returndist{\policy}(\state^\prime,\action^\prime)}\left\lbrack h_\statix(\reward + \gamma \genrv) \right\rbrack = a_0(\reward, \gamma) + \sum_{\statix^\prime=1}^\numstats a_{\statix^\prime}(r, \gamma) \mathbb{E}_{\genrv \sim \returndist{\policy}(\state^\prime, \action^\prime)}\left\lbrack h_{\statix^\prime}(\genrv) \right\rbrack \, ,
    \end{align}
    for some functions $a_{0:\numstats} : \mathbb{R} \times [0,1) \rightarrow \mathbb{R}$. 
    By taking $\returndist{\policy}(\state^\prime, \action^\prime)$ to be a Dirac delta at an arbitrary real number, we obtain
    \begin{align}\label{eq:equalAsFunctions}
        h_k(\reward + \gamma x) = a_0(\reward, \gamma) + \sum_{\statix^\prime=1}^\numstats a_{\statix^\prime}(\reward, \gamma) h_{\statix^\prime}(x) \quad \text{ for all } x \in \mathbb{R} \, .
    \end{align}
    In particular, the function $h_\statix(\gamma x)$ lies in the span of the functions $h_1,\ldots,h_\numstats,\mathbbm{1}$, where $\mathbbm{1}$ is the constant function at $1$. Further, $h_\statix(\reward + \gamma x)$ lies in this span for all $\reward \in \mathbb{R}$, and so the collection of functions $\{x \mapsto h_\statix(r + \gamma x) | r \in \mathbb{R} \}$ lies in a finite-dimensional subspace of functions. We may now appeal to Theorem 1 of \citet{engert1970finite}, which states that any finite-dimensional space of functions which is closed under translation is spanned by a set of functions of the form
    \begin{align}\label{eq:monomials}
        \bigcup_{j=1}^J \{ x \mapsto x^\ell \exp(\lambda_j x) \ |\  0 \leq \ell \leq L_j \} \, ,
    \end{align}
    for some finite subset $\{\lambda_1,\ldots,\lambda_J\}$ of $\mathbb{C}$. From this, we deduce that each function $x \mapsto h_\statix(x)$ may be expressed as a linear combination of functions of the form appearing in the set in expression \eqref{eq:monomials}. Further, enforcing the condition that the linear span must be closed under composition with $f_{r,\gamma}$ with $\gamma \in [0,1)$ rules out any values of $\lambda_j$ above which are not zero. Therefore, the linear span of the functions $h_1,\ldots,h_\numstats,\mathbbm{1}$ must be equal to the span of some set of monomials $x \mapsto x^\ell$, $0 \leq \ell \leq L$, for some $L \in \mathbb{N}$, and hence the statement of the theorem follows.
\end{proof}

\notClosed*

\begin{proof}
    (i) This follows as a special case of Theorem \ref{thm:classifyBellmanClosed}, since the statistics learnt by CDRL are expectations, as shown in Lemma \ref{lem:CDRLStats}.
    
    (ii) Quantiles cannot be expressed as expectations, and so we cannot appeal to Theorem \ref{thm:classifyBellmanClosed}. We instead proceed by describing a concrete counterexample to Bellman closedness. Fix a number $\numstats \in \mathbb{N}$ of quantiles. Consider an MDP with a single action, and an initial state $\state_0$ which transitions to one of two terminal states $\state_1$, $\state_2$ with equal probability. Suppose there is no immediate reward at state $\state_0$. We consider two different possibilities for reward distributions at states $\state_1$, $\state_2$, and show that these two possibilities yield the same quantiles for the return distributions at states $\state_1$ and $\state_2$, but different quantiles for the return distribution at state $\state_0$; thus demonstrating that finite sets of quantiles are not Bellman closed.
    
    Firstly, suppose rewards are drawn from $\mathrm{Unif}([0,1])$ at state $\state_1$ and $\mathrm{Unif}([1/\numstats, 1 + 1/\numstats])$ at $\state_2$, so that the $\frac{2\statix-1}{2\numstats}$-quantile of the return at states $\state_1$ and $\state_2$ are $\frac{2\statix-1}{2\numstats}$ and $\frac{2\statix+1}{2\numstats}$, for each $\statix=1,\ldots,\numstats$. Then the return distribution at state $\state_0$ is the mixture $\frac{1}{2}\mathrm{Unif}([0,\gamma]) + \frac{1}{2} \mathrm{Unif}([\gamma/\numstats, \gamma + \gamma/\numstats])$, and hence the $\frac{1}{2\numstats}$-quantile is $\frac{\gamma}{\numstats}$. Now, suppose instead that the reward distribution at state $\state_1$ is $\frac{1}{\numstats} \sum_{\statix=1}^\numstats \delta_{\frac{2\statix-1}{2\numstats}}$ and the reward distribution at $\state_2$ is $\frac{1}{\numstats} \sum_{\statix=1}^\numstats \delta_{\frac{2\statix+1}{2\numstats}}$. Then the $\frac{1}{2\numstats}$-quantile of the return distribution at state $\state_0$ is $\frac{3\gamma}{2\numstats}$.
\end{proof}

\subsection{Proofs of results from Section \ref{subsec:approximateBellmanClosed}}

In this section, we use operator notation reviewed in Section \ref{sec:olderalgorithms}. In both proofs, the supremum-Wasserstein distance will be of use, defined as $\overline{W}_1(\gendist_1, \gendist_2) = \sup_{(\state, \action) \in \statespace \times \actionspace} W_1(\gendist_1(\state, \action), \gendist_2(\state, \action))$ for all $\gendist_1, \gendist_2 \in \mathscr{P}_1(\mathbb{R})^{\statespace \times \actionspace}$. 
Before proving theorem \ref{thm:CDRLappBellmanClosed}, we state and prove an auxiliary lemma.

\begin{lemma}\label{lem:cdrlhelper}
    Let $\Pi_\mathcal{C}$ be the Cram\'er projection for equally-spaced support points $z_1 < \cdots < z_\numstats$, defined in Appendix Section \ref{sec:cdrl}.
    (i) $\Pi_\mathcal{C}$ is a non-expansion in $W_1$.
    (ii) For any distribution $\gendist \in \mathscr{P}(\mathbb{R})$ supported on $[z_1, z_\numstats]$, we have $W_1(\Pi_\mathcal{C} \gendist, \gendist) \leq \frac{z_K - z_1}{2(K-1)}$.
\end{lemma}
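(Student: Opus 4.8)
The plan is to reduce both claims to statements about single Dirac masses, exploiting that $\Pi_\mathcal{C}$ is affine and continuous on $\mathscr{P}(\mathbb{R})$ (it is specified on Diracs by \eqref{eq:cramerProj} and extended affinely) together with the joint convexity of $(\gendist,\nu)\mapsto W_1(\gendist,\nu)$. Concretely, for part (i) I would take an optimal $W_1$-coupling $\lambda$ of two distributions $\gendist_1,\gendist_2$; since $\Pi_\mathcal{C}\gendist_i = \int \Pi_\mathcal{C}\delta_{w_i}\,\lambda(\mathrm{d}w_1,\mathrm{d}w_2)$, joint convexity gives
\[
  W_1(\Pi_\mathcal{C}\gendist_1,\Pi_\mathcal{C}\gendist_2) \le \int W_1(\Pi_\mathcal{C}\delta_{w_1},\Pi_\mathcal{C}\delta_{w_2})\,\lambda(\mathrm{d}w_1,\mathrm{d}w_2),
\]
so it suffices to prove $W_1(\Pi_\mathcal{C}\delta_w,\Pi_\mathcal{C}\delta_{w'})\le |w-w'|$ for all $w,w'\in\mathbb{R}$. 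Likewise, for part (ii), writing $\gendist=\int\delta_w\,\gendist(\mathrm{d}w)$ and $\Pi_\mathcal{C}\gendist=\int\Pi_\mathcal{C}\delta_w\,\gendist(\mathrm{d}w)$, convexity reduces the claim to bounding $W_1(\Pi_\mathcal{C}\delta_w,\delta_w)$ uniformly over $w\in[z_1,z_\numstats]$.

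For the Dirac bound in part (i) I would use two elementary facts read off \eqref{eq:cramerProj}. First, $w\mapsto\Pi_\mathcal{C}\delta_w$ is stochastically monotone: $w\le w'$ implies $F_{\Pi_\mathcal{C}\delta_w}(t)\ge F_{\Pi_\mathcal{C}\delta_{w'}}(t)$ for all $t$ (immediate when $w,w'$ lie in the same bin, and when they lie in different bins because the supports $\{z_k,z_{k+1}\}$ and $\{z_j,z_{j+1}\}$ are then correctly ordered; the clipping cases $w<z_1$ or $w>z_\numstats$ are handled the same way). Second, the mean $m(w):=\mathbb{E}_{Z\sim\Pi_\mathcal{C}\delta_w}[Z]$ equals $\min(z_\numstats,\max(z_1,w))$, hence is $1$-Lipschitz in $w$. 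Combining these, for $w\le w'$,
\[
  W_1(\Pi_\mathcal{C}\delta_w,\Pi_\mathcal{C}\delta_{w'}) = \int_{\mathbb{R}}\big(F_{\Pi_\mathcal{C}\delta_w}(t)-F_{\Pi_\mathcal{C}\delta_{w'}}(t)\big)\,\mathrm{d}t = m(w')-m(w) \le w'-w,
\]
which is the required non-expansion.

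For part (ii), set $\Delta:=(z_\numstats-z_1)/(\numstats-1)$, the common spacing. If $w$ lies in a bin $[z_k,z_{k+1}]$, then \eqref{eq:cramerProj} gives $\Pi_\mathcal{C}\delta_w = \tfrac{z_{k+1}-w}{\Delta}\delta_{z_k}+\tfrac{w-z_k}{\Delta}\delta_{z_{k+1}}$, and $W_1(\Pi_\mathcal{C}\delta_w,\delta_w)$ is simply the mean absolute deviation of this two-point law about $w$, namely $\tfrac{2(w-z_k)(z_{k+1}-w)}{\Delta}$. Since $(w-z_k)(z_{k+1}-w)$ is maximised at the bin midpoint with value $(\Delta/2)^2$, this is at most $\Delta/2$ for every $w\in[z_1,z_\numstats]$. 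Integrating this uniform bound against $\gendist$ yields $W_1(\Pi_\mathcal{C}\gendist,\gendist)\le \Delta/2 = \frac{z_\numstats-z_1}{2(\numstats-1)}$.

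The only point needing care is the case analysis behind the stochastic-monotonicity and mean-computation claims in part (i) — in particular the clipping behaviour at the extreme supports $z_1,z_\numstats$ and the cross-bin comparisons — but since the argument ultimately depends only on the $1$-Lipschitz ``clipped mean'' identity $m(w)=\min(z_\numstats,\max(z_1,w))$, these are routine checks. The standard measurability/gluing justifying the passage of pointwise Dirac bounds through the coupling integrals is also something I would not belabour.
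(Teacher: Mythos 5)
Your proof is correct, but part (i) takes a genuinely different route from the paper's. For the non-expansion, the paper invokes the CDF characterisation of the Cram\'er projection, $F_{\Pi_\mathcal{C}\gendist}(v) = \tfrac{1}{z_{\statix+1}-z_\statix}\int_{z_\statix}^{z_{\statix+1}} F_\gendist(t)\,\mathrm{d}t$ on $[z_\statix, z_{\statix+1})$, writes $W_1$ as the $L^1$ distance between CDFs, truncates the integral to $[z_1,z_\numstats]$, and applies Jensen's inequality to the bin-averaged CDFs. You instead exploit affinity of $\Pi_\mathcal{C}$ plus joint convexity of $W_1$ to reduce to pairs of Diracs, and then close the argument with stochastic monotonicity of $w\mapsto\Pi_\mathcal{C}\delta_w$ and the clipped-mean identity $m(w)=\min(z_\numstats,\max(z_1,w))$ (note that within $[z_1,z_\numstats]$ the projection is exactly mean-preserving, $m(w)=w$, which makes your Lipschitz claim immediate). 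Your route is more elementary and self-contained, at the cost of the case analysis you flag and the measurable-selection/gluing step behind the convexity inequality; the paper's is shorter once the CDF formula is granted. For part (ii) the two arguments are essentially the same: the paper's explicit Bernoulli coupling assigns each point $y$ a transport cost of $\tfrac{2(y-l(y))(u(y)-y)}{u(y)-l(y)}\le\tfrac{u(y)-l(y)}{2}$, which is precisely the per-Dirac mean absolute deviation you compute before integrating against $\gendist$.
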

\begin{proof}
    In the proof of the first claim, we use the following characeterisation of the Cram\'er projection \citep{AnalysisCDRL}. For any distribution $\gendist \in \mathscr{P}(\mathbb{R})$ with CDF $F_\gendist$, the CDF of $\Pi_\mathcal{C} \mu$ is given by $F_{\Pi_\mathcal{C}\gendist}(v) = \frac{1}{z_{\statix+1} - z_{\statix}}\int_{z_\statix}^{z_{\statix+1}} F_\mu(t) \mathrm{d}t$ for $v \in [z_\statix, z_{\statix+1})$, $k=1,\ldots,\numstats-1$, with $F_{\Pi_\mathcal{C} \gendist}$ equal to $0$ on $(\infty, z_1)$ and equal to $1$ on $[z_\numstats, \infty)$.
    
    (i) Let $\gendist_1, \gendist_2 \in \mathscr{P}(\mathbb{R})$. We compute
    \begin{align*}
        W_1( \gendist_1, \gendist_2)
            \geq
        \sum_{\statix=1}^{\numstats-1} \int_{z_\statix}^{z_{\statix+1}} |F_{\gendist_1}(t) - F_{ \gendist_1}(t) | \mathrm{d}t
            \geq 
        \sum_{\statix=1}^\numstats (z_{\statix+1} - z_{\statix}) |F_{\Pi_\mathcal{C} \gendist_1}(z_\statix) - F_{\Pi_\mathcal{C} \gendist_1}(\statix) |
        = W_1(\Pi_\mathcal{C} \gendist_1, \Pi_\mathcal{C} \gendist_2) \, ,
    \end{align*}
    as required. The first inequality comes from expressing the Wasserstein distance between two distributions as the $L^1$ distance between their CDFs, and truncating the corresponding integral at $z_1$ and $z_\numstats$. The second inequality follows from Jensen's inequality.
    
    (ii) We first introduce some notation. Let $l,u : [z_1,z_\numstats] \rightarrow \{z_1,\ldots,z_\numstats\}$ be functions such that $l(y)$ is the largest element of $\{z_1,\ldots,z_\numstats\}$ which is less than or equal to $y$, and $u(y)$ is the smallest element of $\{z_1,\ldots,z_\numstats\}$ which is greater than or equal to $y$, for all $y \in [z_1,z_\numstats]$.
    A valid coupling between $\gendist$ and $\Pi_\mathcal{C}$ is then given as follows. Let $Y \sim \gendist$, and conditional on $Y$, let $p \sim \text{Bernoulli}\left(\frac{Y - l(Y)}{u(Y) - l(Y)}\right)$ if $Y \not\in \{z_1,\ldots,z_\numstats\}$, and $p=1$ almost surely conditional on $Y \in \{z_1,\ldots,z_\numstats\}$. Then define $Z = p l(Y) + (1-p) u(Y)$. It is straightforward to check that the marginal distribution of $Z$ is $\Pi_\mathcal{C} \gendist$, and we can straightforwardly upper-bound the transport cost associated with this coupling, by observing that for each possible value $y$ of $Y$, the contribution to the transport cost is $0$ if $y \in \{z_1,\ldots,z_\numstats\}$, and $\frac{u(y) - y}{u(y) - l(y)}(y-l(y)) + \frac{y- l(y)}{u(y) - l(y)}(u(y) - y) \leq \frac{u(y) - l(y)}{2} = \frac{z_\numstats - z_1}{2(\numstats-1)}$. Therefore, integrating over the distribution of $Y$ gives a transport cost of at most $\frac{z_\numstats - z_1}{2(\numstats-1)}$, which gives the required bound on the Wasserstein distance.
\end{proof}

\CDRLappBellmanClosed*

\begin{proof}
    For the CDRL statistics, we have $\statistic_{z_\statix, z_{\statix+1}}(\returndist{\policy}(\state, \action)) = \statistic_{z_\statix, z_{\statix+1}}(\Pi_\mathcal{C} \returndist{\policy}(\state, \action))$ for $\statix=1,\ldots,\numstats$ and all $(\state, \action) \in \statespace \times \actionspace$. Further, since $\Pi_\mathcal{C} \returndist{\policy}(\state, \action)$ is supported on $\{z_1,\ldots,z_\numstats\}$ for all $(\state, \action) \in \statespace \times \actionspace$, we have that $\statistic_{z_\statix, z_{\statix+1}}(\Pi_\mathcal{C} \returndist{\policy}(\state, \action)) = F^{-1}_{\Pi_\mathcal{C} \returndist{\policy}(\state, \action)}(z_{\statix})$. Let $(\approxdist(\state, \action) | (\state, \action) \in \statespace \times \actionspace)$ be the set of approximate distributions learnt by CDRL. As noted in Appendix Section \ref{sec:cdrl}, $\approxdist$ is the fixed point of the projected Bellman operator $\Pi_{\mathcal{C}}\BellmanOp{\policy}{}$, and $\returndist{\policy}$ is the fixed point of the Bellman operator $\BellmanOp{\policy}{}$.
    We now compute:
    \begin{align*}
        & \frac{1}{\numstats-1} \sum_{k=1}^{\numstats-1} \left|\statistic_{z_\statix, z_{\statix+1}}(\approxdist(\state, \action)) - \statistic_{z_\statix, z_{\statix+1}}(\returndist{\policy}(\state, \action))\right| \\
        = & 
        \frac{1}{\numstats-1} \sum_{k=1}^{\numstats-1} \left|\statistic_{z_\statix, z_{\statix+1}}(\approxdist(\state, \action)) - \statistic_{z_\statix, z_{\statix+1}}(\Pi_\mathcal{C} \returndist{\policy}(\state, \action)) \right| \\
        = & 
        \frac{1}{\numstats-1} \sum_{k=1}^{\numstats-1} \left|F^{-1}_{\approxdist(\state, \action)}(z_\statix) - F^{-1}_{\Pi_\mathcal{C} \returndist{\policy}(\state, \action)}(z_\statix) \right| \\
        = & \frac{1}{2\supportbound/(1-\gamma)}  \frac{2\supportbound/(1-\gamma)}{\numstats-1} \sum_{k=1}^{\numstats-1} \left|F^{-1}_{\approxdist(\state, \action)}(z_\statix) - F^{-1}_{\Pi_\mathcal{C} \returndist{\policy}(\state, \action)}(z_\statix) \right| \\
        = & \frac{1}{2\supportbound/(1-\gamma)} W_1(\eta, \Pi_\mathcal{C} \returndist{\policy}(\state, \action)) \\
        \overset{(a)}{=} & \frac{1}{2\supportbound/(1-\gamma)} W_1(\Pi_\mathcal{C}\BellmanOp{\policy}{}\approxdist, \Pi_\mathcal{C} \BellmanOp{\policy}{} \returndist{\policy}(\state, \action)) \\
        \overset{(b)}{\leq} & \frac{1}{2\supportbound/(1-\gamma)}\gamma \overline{W}_1(\approxdist, \returndist{\policy}) \\
        \overset{(c)}{\leq} & \frac{1}{2\supportbound/(1-\gamma)}\gamma \frac{1}{1 - \gamma} \overline{W}_1(\Pi_\mathcal{C}\returndist{\policy}, \returndist{\policy}) \\
        \overset{(d)}{\leq} &  \frac{1}{2\supportbound/(1-\gamma)}\gamma \frac{1}{1 - \gamma} \frac{\supportbound}{(1-\gamma)(\numstats-1)} \\
        = & \frac{\gamma}{2(1-\gamma)(\numstats-1)} \, ,
    \end{align*}
    as required. Here, (a) follows since $\approxdist$ is the fixed point of $\Pi_\mathcal{C} \BellmanOp{\policy}{}$ and $\returndist{\policy}$ is the fixed point of $\BellmanOp{\policy}{}$. (b) follows since $\Pi_\mathcal{C}$ is a non-expansion in $\overline{W}_1$, by Lemma~\ref{lem:cdrlhelper}.(i), and $\BellmanOp{\policy}{}$ is a $\gamma$-contraction in $\overline{W}_1$. (c) follows from the following argument:
    \begin{align*}
        \overline{W}_1(\approxdist, \returndist{\policy})
        \leq & \overline{W}_1(\approxdist, \Pi_\mathcal{C} \returndist{\policy}) + \overline{W}_1(\Pi_\mathcal{C} \returndist{\policy}, \returndist{\policy}) \\
        = & \overline{W}_1(\Pi_\mathcal{C} \BellmanOp{\policy}{}\approxdist, \Pi_\mathcal{C} \BellmanOp{\policy}{} \returndist{\policy}) + \overline{W}_1(\Pi_\mathcal{C} \returndist{\policy}, \returndist{\policy}) \\
        \leq & \gamma \overline{W}_1(\approxdist, \returndist{\policy}) + \overline{W}_1(\Pi_\mathcal{C} \returndist{\policy}, \returndist{\policy}) \\
        \implies \overline{W}_1(\approxdist, \returndist{\policy}) \leq & \frac{1}{1 - \gamma} \overline{W}_1(\Pi_\mathcal{C} \returndist{\policy}, \returndist{\policy}) \, .
    \end{align*}
    Finally, (d) follows from Lemma~\ref{lem:cdrlhelper}.(ii).
\end{proof}

Before giving a proof of Theorem \ref{thm:QDRLappBellmanClosed}, we first state and prove a lemma that will be useful.

\begin{lemma}\label{lem:qdrlhelper}
Let $\tau_\statix = \frac{2\statix-1}{2\numstats}$ for $\statix=1,\ldots,\numstats$, and consider the corresponding Wasserstein-1 projection operator $\Pi_{W_1} : \mathscr{P}(\mathbb{R}) \rightarrow \mathscr{P}(\mathbb{R})$, defined by
\begin{align*}
    \Pi_{W_1}( \mu ) = \frac{1}{\numstats} \sum_{\statix=1}^\numstats \delta_{F^{-1}_\mu(\tau_k)} \, ,
\end{align*}
for all $\mu \in \mathscr{P}(\mathbb{R})$, where $F^{-1}_{\mu}$ is the inverse c.d.f. of $\mu$.
Let $\eta_1, \eta_2 \in \mathscr{P}(\mathbb{R})$, such that $\sup(\support(\eta_i)) - \inf(\support(\eta_i)) \leq \supportwidth$ for $i=1,2$. Then we have:
\begin{align*}
&(i) \; W_1(\Pi_{W_1}\eta_1, \eta_1) \leq \frac{\supportwidth}{\numstats} \, ;\\
&(ii) \; W_1(\Pi_{W_1}\eta_1, \Pi_{W_1}\eta_2) \leq W_1(\eta_1, \eta_2) + \frac{2 \supportwidth}{\numstats} \, .
\end{align*}
\end{lemma}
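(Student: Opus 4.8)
The plan is to derive both bounds from the standard identity that expresses the Wasserstein-$1$ distance as the $L^1$ distance between inverse CDFs, $W_1(\mu, \nu) = \int_0^1 |F^{-1}_\mu(t) - F^{-1}_\nu(t)|\, \mathrm{d}t$, together with the elementary observation that the inverse CDF of the uniform $\numstats$-atom mixture $\Pi_{W_1}\eta_1 = \frac{1}{\numstats}\sum_{\statix=1}^\numstats \delta_{F^{-1}_{\eta_1}(\tau_\statix)}$ is, off a finite set of points, the step function equal to $F^{-1}_{\eta_1}(\tau_\statix)$ on the interval $\big(\frac{\statix-1}{\numstats}, \frac{\statix}{\numstats}\big)$; this uses that $\tau_\statix = \frac{2\statix-1}{2\numstats}$ is the midpoint of that interval and that $F^{-1}_{\eta_1}$ is non-decreasing.

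For part (i), write $a = \inf(\support(\eta_1))$ and $b = \sup(\support(\eta_1))$, so that $b - a \le \supportwidth$ and $F^{-1}_{\eta_1}(t) \in [a,b]$ for all $t \in (0,1)$. The identity and observation above give
\[
W_1(\Pi_{W_1}\eta_1, \eta_1) = \sum_{\statix=1}^\numstats \int_{(\statix-1)/\numstats}^{\statix/\numstats} \big| F^{-1}_{\eta_1}(\tau_\statix) - F^{-1}_{\eta_1}(t) \big|\, \mathrm{d}t \, .
\]
On the $\statix$th subinterval both $\tau_\statix$ and $t$ lie in $[\frac{\statix-1}{\numstats}, \frac{\statix}{\numstats}]$, so by monotonicity the integrand is bounded by $F^{-1}_{\eta_1}(\statix/\numstats) - F^{-1}_{\eta_1}((\statix-1)/\numstats)$; hence the $\statix$th integral is at most $\frac{1}{\numstats}\big(F^{-1}_{\eta_1}(\statix/\numstats) - F^{-1}_{\eta_1}((\statix-1)/\numstats)\big)$, and summing over $\statix$ telescopes to $\frac{1}{\numstats}\big(F^{-1}_{\eta_1}(1^-) - F^{-1}_{\eta_1}(0^+)\big) = \frac{b-a}{\numstats} \le \frac{\supportwidth}{\numstats}$.

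Part (ii) then follows immediately from the triangle inequality for $W_1$ by inserting $\eta_1$ and $\eta_2$ between $\Pi_{W_1}\eta_1$ and $\Pi_{W_1}\eta_2$, and bounding $W_1(\Pi_{W_1}\eta_1, \eta_1)$ and $W_1(\Pi_{W_1}\eta_2, \eta_2)$ each by $\supportwidth/\numstats$ via part (i); note that this route does not require $\Pi_{W_1}$ to be a $W_1$-non-expansion, which is consistent with the extra $2\supportwidth/\numstats$ term in the statement. The only delicate point anywhere is the bookkeeping around the inverse-CDF conventions at the partition points $\statix/\numstats$ and at $0$ and $1$, but these discrepancies live on a Lebesgue-null set and so are harmless for the integrals, while the support-width hypothesis is exactly what keeps $F^{-1}_{\eta_1}(1^-) - F^{-1}_{\eta_1}(0^+)$ finite and at most $\supportwidth$; there is otherwise no real obstacle.
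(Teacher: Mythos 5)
Your proof is correct and follows essentially the same route as the paper's: part (i) is the same block-by-block telescoping bound over the quantile intervals (the paper phrases it via conditional expectations of $X \sim \eta_1$ on each block, you via the $L^1$ identity for inverse CDFs, which is the same computation), and part (ii) is the identical triangle-inequality argument through $\eta_1$ and $\eta_2$. No issues.
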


\begin{proof}
We start by proving (i). Let $F^{-1}_{\eta_1}$ be the inverse c.d.f  of $\eta_1$. We have
\begin{align*}
W_1(\mu, \Pi_{W_1}\mu) & = \sum^{\numstats-1}_{i=0} \frac{1}{\numstats} \expected_{X \sim \mu}\left\lbrack \left|X - F^{-1}_{\eta_1}\left(\frac{2i+1}{2\numstats}\right) \right| \Bigg| F^{-1}_{\eta_1}\left(\frac{i}{\numstats}\right) \leq X \leq F^{-1}_{\eta_1}\left(\frac{i+1}{\numstats}\right)\right\rbrack \\
    & \leq \frac{1}{\numstats}\left(F^{-1}_{\eta_1}(1) - F^{-1}_{\eta_1}(0)\right) \\
    & = \frac{\supportwidth}{\numstats}
\end{align*}
We can now prove (ii), using the triangle inequality and (i):
\begin{align*}
W_1(\Pi_{W_1}\eta_1, \Pi_{W_1}\eta_2) &\leq W_1(\Pi_{W_1}\eta_1, \eta_1) + W_1(\eta_1, \eta_2) + W_1(\eta_2, \Pi_{W_1}\eta_2)\\ 
&\leq W_1(\eta_1, \eta_2) + \frac{2\supportwidth}{\numstats} \, .
\end{align*}
\end{proof}

\QDRLappBellmanClosed*

\begin{proof}
Let $(\approxstatistic_{1:\numstats}(\state, \action) | (\state, \action) \in \statespace \times \actionspace)$ be the collection of statistics learnt under QDRL. We denote by $\approxdist(\state, \action)$ the distribution imputed from the statistics $\approxstatistic_{1:\numstats}(\state, \action)$, for each $(\state, \action) \in \statespace \times \actionspace$. As noted in Appendix Section \ref{sec:qdrl}, $\approxdist$ is the fixed point of the projected Bellman operator $\Pi_{W_1}\BellmanOp{\policy}{}$, and $\eta_\pi$ is the fixed point of $\BellmanOp{\policy}{}$.
We begin by noting that if all immediate reward distributions have support contained within $[-\supportbound, \supportbound]$, then the true and learnt reward distributions are supported on $[-\supportbound/(1-\gamma), \supportbound/(1-\gamma)]$, and further, so are the distributions $\BellmanOp{\policy}{}\approxdist(\state, \action)$ for each $(\state, \action) \in \statespace\times\actionspace$.
We thus compute
\begin{align*}
    & \sup_{(\state, \action) \in \statespace\times\actionspace} \frac{1}{\numstats} \sum_{\statix=1}^\numstats |\statistic_\statix(\returndist{\policy}(\state, \action)) - \approxstatistic_\statix(\state, \action)| \\
    = & \sup_{(\state, \action) \in \statespace\times\actionspace} W_1(\Pi_{W_1}\approxdist(\state, \action), \Pi_{W_1}\returndist{\policy}(\state, \action)) \\
    \leq & \sup_{(\state, \action) \in \statespace\times\actionspace} W_1(\approxdist(\state, \action), \returndist{\policy}(\state, \action)) + \frac{4 \supportbound}{K(1 - \gamma)} \, ,
\end{align*}
with the inequality following from Lemma \ref{lem:qdrlhelper}(ii). From here, we note that
\begin{align*}
    \sup_{(\state, \action) \in \statespace\times\actionspace} W_1(\approxdist(\state, \action), \returndist{\policy}(\state, \action)) &
    \overset{(a)}\leq \sup_{(\state, \action) \in \statespace\times\actionspace} \left\lbrack W_1(\approxdist(\state, \action), \Pi_{W_1} \returndist{\policy}(\state, \action))  + W_1(\Pi_{W_1} \returndist{\policy}(\state, \action), \returndist{\policy}(\state, \action)) \right\rbrack \\
    & \overset{(b)}{\leq} \sup_{(\state, \action) \in \statespace\times\actionspace} W_1(\approxdist(\state, \action), \Pi_{W_1} \returndist{\policy}(\state, \action))  + \frac{2\supportbound}{\numstats(1 - \gamma)} \\
    & \overset{(c)}{=} \sup_{(\state, \action) \in \statespace\times\actionspace} W_1(\Pi_{W_1} \BellmanOp{\policy}{}\approxdist(\state, \action), \Pi_{W_1} \BellmanOp{\policy}{} \returndist{\policy}(\state, \action)) + \frac{2\supportbound}{\numstats(1 - \gamma)} \\
    & \overset{(d)}{\leq} \sup_{(\state, \action) \in \statespace\times\actionspace} W_1( \BellmanOp{\policy}{}\approxdist(\state, \action), \BellmanOp{\policy}{} \returndist{\policy}(\state, \action)) + \frac{4\supportbound}{\numstats(1 - \gamma)}  + \frac{2\supportbound}{\numstats(1 - \gamma)} \\
    & \overset{(e)}\leq \sup_{(\state, \action) \in \statespace\times\actionspace} \gamma W_1( \approxdist(\state, \action), \returndist{\policy}(\state, \action)) + \frac{6\supportbound}{\numstats(1 - \gamma)} \\
    \implies \sup_{(\state, \action) \in \statespace\times\actionspace} W_1(\approxdist(\state, \action), \returndist{\policy}(\state, \action)) & \leq \frac{6\supportbound}{K(1 - \gamma)^2} \, .
\end{align*}
Here, (a) follows from the triangle inequality, (b) follows from Lemma \ref{lem:qdrlhelper}(i). (c) follows since $\approxdist$ is the fixed point of $\Pi_{W_1} \BellmanOp{\policy}{}$ and $\returndist{\policy}$ is the fixed point of $\BellmanOp{\policy}{}$. (d) follows from Lemma \ref{lem:qdrlhelper}(ii), where we use the fact that the support of the distributions constituting the fixed points of $\Pi_{W_1}\BellmanOp{\policy}{}$ and $\BellmanOp{\policy}{}$ necessarily are supported on $[-\supportbound/(1-\gamma), \supportbound/(1-\gamma)]$.
(e) follows from the $\gamma$-contractivity of the Bellman operator $\BellmanOp{\policy}{}$ with respect to the metric $\sup_{(\state, \action) \in \statespace \times \actionspace} W_1(\mu_1(\state, \action), \mu_2(\state, \action))$, for $\mu_1, \mu_2 \in \mathscr{P}(\mathbb{R})^{\statespace \times \actionspace}$ \citep{C51}.
Hence, we obtain
\begin{align*}
    \sup_{(\state, \action) \in \statespace\times\actionspace} \frac{1}{\numstats} \sum_{\statix=1}^\numstats |\statistic_\statix(\returndist{\policy}(\state, \action)) - \approxstatistic_\statix(\state, \action)| & \leq \frac{6\supportbound}{K(1 - \gamma)^2} + \frac{4 \supportbound}{K(1 - \gamma)} \\
    & = \frac{2\supportbound(5 - 2\gamma)}{K(1-\gamma)^2} \, .
\end{align*}
\end{proof}

\subsection{Proofs of results from Section \ref{sec:meanConsistency}}

\CDRLMeanQDRLNoMean*

\begin{proof}
    (i) The statistics learnt by CDRL are of the form $\statistic_\statix(\gendist) = \mathbb{E}_{\genrv \sim \gendist}\left\lbrack h_{z_\statix, z_{\statix+1}}(Z) \right\rbrack$, for $k=1,\ldots,\numstats-1$. We observe that the mean functional $m(\gendist) = \mathbb{E}_{\genrv \sim \gendist}\left\lbrack Z \right\rbrack$ is contained in the linear span of $s_{0:\numstats-1}$, where $s_0(\gendist) = 1$ for all $\gendist$. Indeed,
    \begin{align*}
        m = R_{\mathrm{max}} s_0 - \left( \frac{R_{\mathrm{max}} - R_{\mathrm{min}}}{K} \right) \sum_{k=1}^{\numstats-1}\statistic_\statix \, ,
    \end{align*}
    since
    \begin{align*}
        x = R_{\mathrm{max}} - \left( \frac{R_{\mathrm{max}} - R_{\mathrm{min}}}{K} \right) \sum_{k=1}^{\numstats-1}h_{z_{\statix},z_{\statix+1}}(x)
    \end{align*}
    for all $x \in [-R_\mathrm{min}, R_\mathrm{max}]$. Since the singleton set consisting of the mean functional is Bellman closed, it follows that whatever distribution is imputed, the effective update to the mean of the distribution at the current state is the same as updating according to the classical Bellman update for the mean.
    
    (ii) We note that the mean is not encoded by a finite set of quantiles, and hence it is impossible for expected returns to be correctly in general. To make this concrete, fix a number $K$ of quantiles to be learnt, and consider a single state, two action MDP, with reward distribution $\frac{4K-1}{4K}\delta_0 + \frac{1}{4K}\delta_1$ for the first action, and reward distribution $\delta_{1/8K}$ for the second action. Fitting quantiles at $\tau\in\{ \frac{2k-1}{2K} | k=1,\ldots,K \}$ results in all quantiles for the first distribution being equal to $0$, and thus the imputed distribution is $\delta_0$, resulting in a imputed mean of $0$. By constrast, for the second distribution, all quantiles are fitted at $1/8K$, resulting in an imputed distribution of $\delta_{1/8K}$ and an imputed mean of $1/8K$. Thus, a QDRL control algorithm will act greedily with respect to these imputed means and select the second action, which is sub-optimal as the first action has higher expected reward.
\end{proof}    
\section{Additional theoretical results}\label{sec:additionalTheory}

In this section, we provide several examples to illustrate the point made in Section \ref{subsec:approximateBellmanClosed} that in general, it is not possible to simultaneously achieve low approximation error on all statistics in a non-Bellman closed collection.

\begin{restatable}{lemma}{noUniformStatApprox}
For a fixed $\numstats \in \mathbb{N}$, let $\statistic_{1:\numstats-1}$ be the statistics corresponding to CDRL (with fixed discount factor $\gamma \in [0, 1)$) with equally spaced support $R_\text{min} = z_1 < \ldots < z_\numstats = R_\text{max}$. As earlier in the paper, we denote by $\approxstatistic_\statix(\state, \action)$ the relevant \emph{learnt} value of the statistic concerned. Then we have:
\begin{align*}
    \sup_{\substack{\mathcal{M} \text{ MDP} \\ \pi \text{ policy} }} \sup_{\substack{\state \in \statespace \\ \action \in \actionspace}}\sup_{\statix=1,\ldots,\numstats-1} | \approxstatistic_\statix(\state, \action) - \statistic_\statix(\returndist{\policy}(\state, \action)) | \not\rightarrow 0
\end{align*}
as $\numstats \rightarrow \infty$.
\end{restatable}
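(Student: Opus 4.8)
The plan is to exhibit, for \emph{every} number $K$ of supports, one fixed MDP on which the worst-case statistic error is at least a positive constant independent of $K$; a sequence bounded away from $0$ cannot tend to $0$, so this suffices. Throughout assume $\gamma\in(0,1)$, and take $R_\text{min}=0,\ R_\text{max}=1$ (the general case is identical after an affine rescaling of reward and grid, under which each $\statistic_k$ — a value of a c.d.f.\ at a grid point — is unchanged).

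\emph{Step 1 (reduce to a Wasserstein gap).} By Lemma~\ref{lem:CDRLStats}, and since no grid point lies strictly between $z_k$ and $z_{k+1}$, the learnt value is $\approxstatistic_k(\state,\action)=F_{\approxdist(\state,\action)}(z_k)$, where $\approxdist(\state,\action)$ is the fixed point of $\Pi_\mathcal{C}\BellmanOp{\policy}{}$; and, writing $h_{a,b}(z)=\tfrac1{b-a}\int_a^b\mathbbm 1[z\le t]\,\mathrm dt$ and using the integral formula for $F_{\Pi_\mathcal{C}\mu}$ from the proof of Lemma~\ref{lem:cdrlhelper}, the true value is $\statistic_k(\returndist{\policy}(\state,\action))=F_{\Pi_\mathcal{C}\returndist{\policy}(\state,\action)}(z_k)$. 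Both c.d.f.s are step functions jumping only at the equally spaced points $z_1<\dots<z_K$, so $\sum_{k=1}^{K-1}(z_{k+1}-z_k)\,|\approxstatistic_k(\state,\action)-\statistic_k(\returndist{\policy}(\state,\action))|=W_1(\approxdist(\state,\action),\Pi_\mathcal{C}\returndist{\policy}(\state,\action))$; hence $\sup_k|\approxstatistic_k-\statistic_k|\ge\frac1{K-1}\sum_k|\approxstatistic_k-\statistic_k|=\tfrac{W_1(\approxdist(\state,\action),\,\Pi_\mathcal{C}\returndist{\policy}(\state,\action))}{R_\text{max}-R_\text{min}}\ge\tfrac{|\,\overline{\approxdist(\state,\action)}-\overline{\Pi_\mathcal{C}\returndist{\policy}(\state,\action)}\,|}{R_\text{max}-R_\text{min}}$, where $\overline\mu$ is the mean of $\mu$. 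So it is enough to build an MDP on which these two means differ by a $K$-independent amount.

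\emph{Step 2 (the MDP and the two means).} Pick $m$ with $\gamma^m\le\tfrac12$; put $\tilde\gamma:=\gamma^m$ and $p:=\tfrac{1-\tilde\gamma}{2}$, and let the MDP be the deterministic single-action $m$-cycle $x_0\to x_1\to\dots\to x_{m-1}\to x_0$ with reward $0$ on every transition except $x_0\to x_1$, which gives $1$ with probability $p$ and $0$ otherwise. Then the return from $x_0$ is $Z_0=\sum_{j\ge0}\tilde\gamma^j b_j$ with $b_j$ i.i.d.\ Bernoulli($p$); splitting on $b_0$ and using $\tilde\gamma\le\tfrac12$ (so $b_0=0$ forces $Z_0\le\tfrac{\tilde\gamma}{1-\tilde\gamma}\le1$) gives $\mathbb E[(Z_0-1)_+]=\tfrac{p^2\tilde\gamma}{1-\tilde\gamma}$, and since $\Pi_\mathcal{C}$ preserves means of mass inside $[0,1]$ and sends all mass above $1$ to $z_K=1$, $\overline{\Pi_\mathcal{C}\returndist{\policy}(x_0,\action)}=\mathbb E[\min(Z_0,1)]=\tfrac{p(1-p\tilde\gamma)}{1-\tilde\gamma}$. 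For the CDRL fixed point: on the intermediate transitions $f_{0,\gamma}$ keeps mass in $[0,\gamma]\subseteq[0,1]$, so $\Pi_\mathcal{C}$ is mean-preserving and $\overline{\approxdist(x_i,\action)}=\gamma\,\overline{\approxdist(x_{i+1},\action)}$ for $i\ne0$, while at $x_0$ the only mass leaving $[0,1]$ is the probability-$p$ component shifted by $1$, which $\Pi_\mathcal{C}$ puts entirely on $z_K=1$, so $\overline{\approxdist(x_0,\action)}=p+(1-p)\gamma\,\overline{\approxdist(x_1,\action)}$. Chaining around the cycle, $\overline{\approxdist(x_0,\action)}=\tfrac{p}{1-\tilde\gamma+p\tilde\gamma}$, which differs from $\overline{\Pi_\mathcal{C}\returndist{\policy}(x_0,\action)}$ by exactly $\tfrac{p^2\tilde\gamma^2(1-p)}{(1-\tilde\gamma)(1-\tilde\gamma+p\tilde\gamma)}>0$, a quantity depending only on $\gamma$. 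Combined with Step 1, the worst-case statistic error at $(x_0,\action)$ is bounded below by this constant over $R_\text{max}-R_\text{min}$ for every $K$, which is the assertion.

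\emph{Main obstacle.} The crux is obtaining a \emph{closed-form, $K$-independent} mean gap. A one-state self-loop ($m=1$) already suffices when $\gamma\le\tfrac12$, but not beyond: for $\gamma>\tfrac12$ the true return can exceed $1/\gamma$, $\mathbb E[(\gamma Z-1)_+]$ ceases to vanish, and the recursion for $\overline{\Pi_\mathcal{C}\returndist{\policy}}$ no longer closes. The $m$-cycle circumvents this by making the effective discount $\gamma^m$, which can be forced below $\tfrac12$ for any $\gamma<1$, while keeping the CDRL-side recursion linear because all projection loss still flows through the single top-atom branch. Everything else is routine bookkeeping about $\Pi_\mathcal{C}$: that it preserves means of mass in $[0,1]$, collapses mass above $z_K$ onto $z_K$, and that no intermediate rescaling pushes mass outside $[0,1]$.
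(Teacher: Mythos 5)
Your argument is correct, but it reaches the conclusion by a genuinely different route from the paper. The paper's proof is an aliasing argument: it exhibits two reward distributions at a terminal state, $\rho_A$ (a single Dirac) and $\rho_B$ (a two-point mixture), whose Cram\'er projections coincide, so CDRL learns identical statistics at the predecessor state in both MDPs while the true value of one particular statistic differs by $\tfrac{1}{2(2^{\lemmaindex}+1)}$ independently of the grid resolution; at least one of the two MDPs must therefore incur that error. You instead fix a single cyclic MDP and show that the repeated truncation of return mass above $z_\numstats$ biases the learnt mean away from the mean of $\Pi_\mathcal{C}\returndist{\policy}(x_0)$ by an explicit $\numstats$-independent constant, then convert this mean gap into a statistic gap via the chain $\sup_\statix|\approxstatistic_\statix-\statistic_\statix|\geq\tfrac{1}{\numstats-1}\sum_\statix|\approxstatistic_\statix-\statistic_\statix|=W_1(\approxdist,\Pi_\mathcal{C}\returndist{\policy})/(R_\text{max}-R_\text{min})\geq|\overline{\approxdist}-\overline{\Pi_\mathcal{C}\returndist{\policy}}|/(R_\text{max}-R_\text{min})$. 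I verified the computations ($\mathbb{E}[(Z_0-1)_+]=p^2\tilde\gamma/(1-\tilde\gamma)$, the fixed-point relation $\overline{\approxdist(x_0)}=p+(1-p)\tilde\gamma\,\overline{\approxdist(x_0)}$, and the resulting gap $p^2\tilde\gamma^2(1-p)/[(1-\tilde\gamma)(1-\tilde\gamma+p\tilde\gamma)]$) and they check out. What your version buys: it covers every fixed $\gamma\in(0,1)$, whereas the paper's construction quietly restricts to $\gamma=2^{\lemmaindex}/(2^{\lemmaindex}+1)$; and your Step~1 identity linking the statistic vector, the projected CDF and $W_1$ is a reusable bridge that makes the mechanism transparent --- the lower bound is exactly the failure of mean consistency (Lemma~\ref{lem:CDRLMeanQDRLNoMean}(i)) once returns escape $[z_1,z_\numstats]$. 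What the paper's version buys: it is shorter, needs no fixed-point computation, and its indistinguishability structure shows the error is information-theoretically unavoidable rather than a property of one particular fixed point. Two cosmetic caveats on your write-up: the ``affine rescaling of reward'' that normalises the grid to $[0,1]$ requires shifting each reward by $(1-\gamma)R_\text{min}$ rather than by $R_\text{min}$ (so that returns, not rewards, are shifted by $R_\text{min}$); and excluding $\gamma=0$ is genuinely necessary, since for $\gamma=0$ the CDRL statistics are learnt exactly and the supremum is zero --- a restriction the paper's own choice of $\gamma$ also imposes implicitly.
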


\begin{proof}
We work with a particular family of MDPs with two states $\state_1, \state_2$, one action in each state, with $\state_1$ transitioning to $\state_2$ with probability $1$, and $\state_2$ terminal. In such MDPs, there is only one policy, which we denote by $\policy$; and we drop notational dependence on actions for clarity. No rewards are received at state $\state_1$; we specify the rewards received at state $\state_2$ below. We take a discount factor $\gamma = \frac{2^{\lemmaindex}}{2^\lemmaindex + 1}$ for some $k \in \mathbb{N}$. Fix $\cdrlDepthIx \in \mathbb{N}$, and consider CDRL updates with bin locations at $z_\statix = \frac{\statix}{2^\cdrlDepthIx}$ for $\statix=0,\ldots,2^\cdrlDepthIx$. Specifically, consider learning the statistic
\begin{align*}
    \mathbb{E}_{\genrv \sim \returndist{\policy}(\state_1)}\left\lbrack h_{\frac{1}{2}, \frac{1}{2}+\frac{1}{2^\cdrlDepthIx}}(\genrv)\right\rbrack \, .
\end{align*}
Since there are no rewards recieved at state $\state_1$, at convergence the estimate of this statistic (which we denote by $\hat{s}(x_1)$) is equal to
\begin{align*}
    \mathbb{E}_{\genrv \sim \hat{\returndist{}}(\state_2)}\left\lbrack h_{\frac{1}{2}, \frac{1}{2}+\frac{1}{2^\cdrlDepthIx}}(\gamma\genrv)\right\rbrack = 
    \mathbb{E}_{\genrv \sim \hat{\returndist{}}(\state_2)}\left\lbrack h_{\frac{\gamma^{-1}}{2}, \frac{\gamma^{-1}}{2}+\frac{\gamma^{-1}}{2^\cdrlDepthIx}}(\genrv)\right\rbrack =
    \mathbb{E}_{\genrv \sim \hat{\returndist{}}(\state_2)}\left\lbrack h_{\frac{1}{2}+\frac{1}{2^{\lemmaindex+1}}, \frac{1}{2}+\frac{1}{2^{\lemmaindex+1}}+\frac{1}{2^\cdrlDepthIx}+\frac{1}{2^{\cdrlDepthIx+\lemmaindex}}}(\genrv)\right\rbrack
\end{align*}
where $\hat{\returndist{}}(\state_2)$ is the approximate return distribution learnt at state $\state_2$. 
Now, consider two possible reward distributions at state $\state_2$:
\begin{align*}
    \rho_A = \delta_{\frac{1}{2} + \frac{1}{2^{\lemmaindex+1}} + \frac{3}{2^{\cdrlDepthIx+1}}  } \, , \text{ and } \rho_B = \frac{1}{2}\left( \delta_{\frac{1}{2} + \frac{1}{2^{\lemmaindex+1}} + \frac{1}{2^{\cdrlDepthIx}}} + \delta_{\frac{1}{2} + \frac{1}{2^{\lemmaindex+1}} + \frac{2}{2^{\cdrlDepthIx}}} \right) \, .
\end{align*}
Under these two reward distributions, the fitted distribution $\approxdist(\state_2)$ is the same, namely $\rho_B$, and thus the estimate $\hat{\statistic}(\state_1)$  is the same. Our aim is to show that for these two different reward distributions, the difference of the true values of the statistic $\hat{\statistic}(\state_1)$ is independent of $\cdrlDepthIx$, and hence the value of $\hat{\statistic}(\state_1)$ cannot converge to the true statistic as $\cdrlDepthIx \rightarrow \infty$. To achieve this, and finish the proof, we calculate directly. In the case where the reward distribution at state $\state_2$ is $\rho_A$, we have (assuming $\cdrlDepthIx > \lemmaindex + 1$)
\begin{align*}
    \statistic(\returndist{\policy}(\state_1)) = \mathbb{E}_{\genrv \sim \rho_A}\left\lbrack h_{\frac{1}{2}+\frac{1}{2^{\lemmaindex+1}}, \frac{1}{2}+\frac{1}{2^{\lemmaindex+1}}+\frac{1}{2^\cdrlDepthIx}+\frac{1}{2^{\cdrlDepthIx+\lemmaindex}}}(\genrv)\right\rbrack = 0 \, .
\end{align*}
In the case where the reward distribution at state $\state_2$ is $\rho_B$, we have
\begin{align*}
    \statistic(\returndist{\policy}(\state_1)) & = \mathbb{E}_{\genrv \sim \rho_B}\left\lbrack h_{\frac{1}{2}+\frac{1}{2^{\lemmaindex+1}}, \frac{1}{2}+\frac{1}{2^{\lemmaindex+1}}+\frac{1}{2^\cdrlDepthIx}+\frac{1}{2^{\cdrlDepthIx+\lemmaindex}}}(\genrv)\right\rbrack =  \\
    & = \frac{1}{2} \left( \frac{ (\frac{1}{2} + \frac{1}{2^{\lemmaindex+1}} + \frac{1}{2^{\cdrlDepthIx}})  - (\frac{1}{2}+\frac{1}{2^{\lemmaindex+1}}+\frac{1}{2^\cdrlDepthIx}+\frac{1}{2^{\cdrlDepthIx+\lemmaindex}})  }{(\frac{1}{2}+\frac{1}{2^{\lemmaindex+1}}) - (\frac{1}{2}+\frac{1}{2^{\lemmaindex+1}}+\frac{1}{2^\cdrlDepthIx}+\frac{1}{2^{\cdrlDepthIx+\lemmaindex}})} \right) \\
    & = \frac{1}{2}\left( \frac{1}{2^\lemmaindex + 1} \right) \, .
\end{align*}
\end{proof}

\begin{restatable}{lemma}{noUniformStatApproxQuantile}
For a fixed $\numstats \in \mathbb{N}$, let $\statistic_{1:\numstats-1}$ be the statistical functionals corresponding to by QDRL (with fixed discount factor $\gamma \in [0, 1)$). As earlier in the paper, we denote by $\approxstatistic_\statix(\state, \action)$ the relevant \emph{learnt} value of the statistic concerned. Then we have:
\begin{align*}
    \sup_{\substack{\mathcal{M} \text{ MDP} \\ \pi \text{ policy} }} \sup_{\substack{\state \in \statespace \\ \action \in \actionspace}} \sup_{\statix=1,\ldots,\numstats} | \approxstatistic_\statix(\state, \action) - s_\statix(\returndist{\policy}(\state, \action)) | \not\rightarrow 0
\end{align*}
as $\numstats \rightarrow \infty$.
\end{restatable}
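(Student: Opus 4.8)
The plan is to mimic the strategy of the preceding lemma for CDRL: exhibit, for infinitely many values of $\numstats$, an MDP with a \emph{fixed} discount factor on which the QDRL fixed point produces a learnt quantile statistic at some state that differs from the true quantile by an amount bounded below independently of $\numstats$. Since the displayed supremum ranges over all MDPs and policies, such a family gives $\limsup_{\numstats\to\infty}(\text{supremum}) \geq c > 0$, which is exactly the claim. The extra ingredient needed for QDRL is that the $\gamma$-rescaling device used in the CDRL proof does not transfer: $F^{-1}_{(f_{r,\gamma})_\#\mu}$ is obtained from $F^{-1}_\mu$ by an affine transformation of the \emph{values} that leaves the quantile \emph{levels} fixed, and on deterministic chains $\Pi_{W_1}$ preserves the relevant quantiles exactly, so QDRL is exact there. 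The error must therefore be generated at a genuine branch point (as in the non-closedness counterexample of Lemma~\ref{lem:notclosed}(ii)), and it must be \emph{amplified} so that it does not merely scale like $1/\numstats$; I would do this by forcing a support gap whose flat CDF level sits within $O(1/\numstats)$ of a quantile level, and exploiting the parity of $\numstats$.

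Concretely, I would take a one-action MDP with an initial state $\state_0$ that transitions to terminal states $\state_1,\state_2$ with probabilities $p$ and $1-p$, no reward at $\state_0$, and fixed $\gamma\in(0,1)$. Fix $\epsilon\in(0,1)$ (say $\epsilon=1/2$) and let $A=\tfrac12\mathrm{Unif}[0,\epsilon]+\tfrac12\mathrm{Unif}[1,1+\epsilon]$, a two-bump distribution with a gap in $(\epsilon,1)$. Set the reward distribution at $\state_1$ to be the rescaling of $A$ by $1/\gamma$ (so that the return contribution through $\state_1$, after the $\gamma$-discount, is exactly $A$), and the reward at $\state_2$ to be a point mass well above the gap. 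Then the true return at $\state_0$ is $M=pA+(1-p)\delta$, while the QDRL fixed point satisfies $\hat\eta(\state_0)=\Pi_{W_1}\tilde M$ with $\tilde M=p\,\Pi_{W_1}A+(1-p)\delta$, so the learnt first quantile is $F^{-1}_{\tilde M}(\tau_1)$ with $\tau_1=\tfrac{1}{2\numstats}$, and the true one is $F^{-1}_{M}(\tau_1)$.

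The core of the argument is a mass-balance computation across the gap. The distribution $M$ places mass exactly $p/2$ below the gap; but when $\numstats$ is odd the middle quantile level $\tau_{(\numstats+1)/2}=\tfrac12$ is mapped by $F^{-1}_A$ to the top of the first bump, so $\Pi_{W_1}A$ places mass $\tfrac{\numstats+1}{2\numstats}$ below the gap, and hence $\tilde M$ places mass $p\cdot\tfrac{\numstats+1}{2\numstats}$ there. Choosing $p$ in the narrow window $\bigl(\tfrac{1}{\numstats+1},\tfrac{1}{\numstats}\bigr)$ makes $\tau_1$ fall strictly between these two masses, i.e. $\tfrac p2 < \tau_1 < p\cdot\tfrac{\numstats+1}{2\numstats}$. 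It follows that $F^{-1}_{\tilde M}(\tau_1)\le\epsilon$ (learnt quantile below the gap) while $F^{-1}_{M}(\tau_1)\ge1$ (true quantile above the gap), so the error at $(\state_0,\cdot)$ in the first quantile statistic is at least $1-\epsilon$. With $\epsilon=1/2$ this is a uniform lower bound of $1/2$ over all odd $\numstats$ (large enough for the elementary inequalities to hold), which gives the result.

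The step requiring the most care is precisely this verification that QDRL ``rounds the wrong way'': a naive branching construction with component distributions that have continuous densities and no gaps yields only $O(1/\numstats)$ error, because $\Pi_{W_1}$ perturbs each quantile by $O(1/\numstats)$ and mixing preserves masses. The construction succeeds only because the flat level of $M$'s support gap is $O(1/\numstats)$-close to $\tau_1$ and the parity of $\numstats$ forces $\Pi_{W_1}$ to transport an $O(1/\numstats)$ amount of mass \emph{across} the gap, placing $\tau_1$ on the opposite side. The remaining details are routine: checking that the claimed $\hat\eta$ is indeed the QDRL fixed point (immediate, since $\state_0$ is not reachable from the terminal states), and verifying the handful of inequalities tying together $p$, $\tau_1$, the rounding of $\Pi_{W_1}A$, and the locations of $F^{-1}_M(\tau_1)$ and $F^{-1}_{\tilde M}(\tau_1)$. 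Even $\numstats$ can be accommodated by perturbing the mass split of $A$ away from $1/2$, but this is not needed for the $\limsup$ conclusion.
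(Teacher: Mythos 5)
Your proposal is correct and takes essentially the same approach as the paper's proof: a two-leaf branching MDP in which the branch probability is tuned to within $O(1/\numstats)$ of a critical threshold, so that the $O(1/\numstats)$ mass distortion introduced by the quantile projection at the leaves flips a quantile of the root's return distribution across an order-one gap. The paper realises this with discrete leaf reward distributions and the median statistic (yielding error $\gamma$), whereas you use two-bump leaf distributions and the first quantile (yielding error $1-\epsilon$); both constructions are valid and the difference is cosmetic.
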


\begin{proof}
We work with a particular family of MDPs with three states $\state_0, \state_1, \state_2$, one action in each state, with $\state_0$ transitioning to $\state_1$ with probability $\frac{1}{2} - \varepsilon$ and with $\state_0$ transitioning to $\state_2$ with probability $\frac{1}{2} + \varepsilon$ with $\varepsilon \ll 1$. We take $\state_1$ and $\state_2$ to be terminal, no rewards are received at state $\state_0$; we specify the rewards received at state $\state_1$ and $\state_2$ below. We suppose in the following that $\numstats$ is odd.

The reward distributions at state $\state_1$ and $\state_2$ are given by
\begin{align*}
    \rho_1 = \left(\frac{1}{2\numstats} - \varepsilon\right)\delta_{0} + \left(\frac{2\numstats - 1}{2\numstats} + \varepsilon\right)\delta_{1} \, , \text{ and } \rho_2 = \left(\frac{1}{2\numstats} - \varepsilon\right)\delta_{0} + \left(\frac{2\numstats - 1}{2\numstats} + \varepsilon\right)\delta_{-1} \, .
\end{align*}
Under these reward distributions the fitted return distributions are:
\begin{align*}
    \approxdist(\state_1) = \delta_{1}\, , \text{ and } \approxdist(\state_2) = \delta_{-1}\,\, .
\end{align*}
Therefore, we have
\begin{align*}
\statistic_{\frac{\numstats + 1}{2}}(\returndist{\policy}(\state_0)) = 0\, , \text{ and } \approxstatistic_{\frac{\numstats + 1}{2}}(\state_0) = -\gamma\, .
\end{align*}
\end{proof}
\section{ER-DQN experimental details}\label{sec:erdqndetails}

\subsection{ER-DQN architecture}\label{sec:ERDQNArchitecture}
    
As discussed in Section \ref{sec:atari}, the ER-DQN architecture matches the exact architecture of QR-DQN \citep{QRDQN}. The Q-network, for a given input $x$, outputs expectiles $e_{\tau_{1:\numstats}}(\state, \action)$ for each $\action \in \actionspace$. In our experiments with 11 statistics, we take $\tau_{1:11}$ to be linearly spaced with $\tau_1 = 0.01$, $\tau_{11}=0.99$. Note that we have $\tau_6 = 0.5$, and thus this expectile statistic is in fact the mean. For the purposes of control, greedy actions at a state $\state \in \statespace$ are thus selected according to $\argmax_{\action \in \actionspace} e_{\tau_6}(\state, \action)$, rather than averaging over statistics as in QR-DQN. For the imputation strategy, we take the root-finding problem in Expression \eqref{eq:rootFindingProblem}, and use a call to the SciPy \texttt{root} routine with default parameters.

\subsection{Training details}\label{sec:erdqntraining}

We use the Adam optimiser with a learning rate of 0.00005, after testing learning rates 0.00001, 0.00003, 0.00005, 0.00007, and 0.0001 on a subset of 6 Atari games. All other hyperparameters in training correspond to those used in \cite{QRDQN}. In particular, the target distribution is computed from a target network. Note that each training pass requires a call to the SciPy optimiser to compute the imputed samples, and thus in general will be more computationally expensive than other deep distributional Q-learning-style agents, such as C51 and QR-DQN. However, by parallelising the optimiser calls for a minibatch of transitions across several CPUs, we found that training times when using 11 expectiles to be comparable to training times of QR-DQN.

For ER-DQN Naive, we found that results were slightly improved by using 201 expectiles compared to 11, so include results with this larger number of statistics in the main paper. We take $\tau_{1:201}$ according to the same prescription as for QR-DQN: linearly spaced, with $\tau_1 = 1/(2\times201)$ and $\tau_{201} = 1 - 1/(2\times201)$.

\subsection{Environment details}
We use the Arcade Learning Environment \citep{bellemare2013arcade} to train and evaluate ER-DQN on a selection of 57 Atari games. The precise parameter settings of the environment are exactly the same as in the experiments performed on QR-DQN, to allow for direct comparison.

\subsection{Detailed results}

In addition to the human normalised mean/median results presented in the main paper, we include training curves for all 4 evaluated agents on all 57 Atari games in Figure~\ref{fig:atari_all}, and raw maximum scores attained in Table~\ref{fig:atari_individ_results}.

\begin{figure}
    \centering
    \includegraphics[keepaspectratio,width=\textwidth]{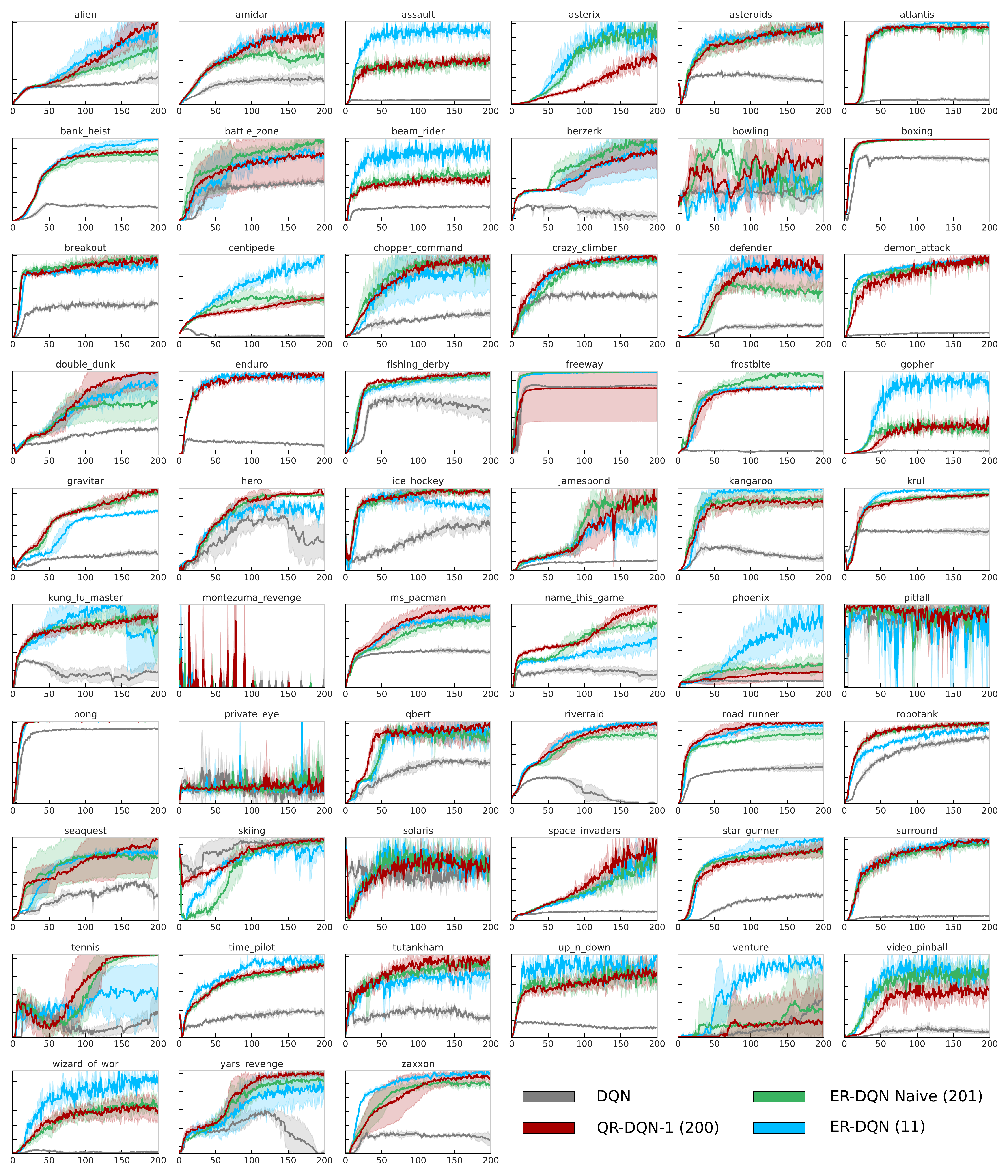}
    \caption{Training curves for DQN, QR-DQN-1, ER-DQN Naive, and ER-DQN on all 57 Atari games.}
    \label{fig:atari_all}
\end{figure}

\begin{table}
\centering{
\small
\begin{tabular}{ l | r|r|r}
\textbf{\textsc{games}} & \textbf{\textsc{QR-DQN-1}} & \textbf{\textsc{ER-DQN Naive}} & \textbf{\textsc{ER-DQN}} \\
\hline
alien & \textbf{\textcolor{blue}{7279.5}} & 5056.2 & 6212.0 \\
amidar & 2235.8 & 1528.8 & \textbf{\textcolor{blue}{2313.0}} \\
assault & 17653.9 & 19156.2 & \textbf{\textcolor{blue}{25826.8}} \\
asterix & 306055.9 & 366152.1 & \textbf{\textcolor{blue}{434743.6}} \\
asteroids & 3484.4 & 3250.9 & \textbf{\textcolor{blue}{3793.2}} \\
atlantis & 947995.0 & 939050.0 & \textbf{\textcolor{blue}{974408.3}} \\
bank\_heist & 1185.7 & 1132.5 & \textbf{\textcolor{blue}{1326.5}} \\
battle\_zone & 33987.2 & \textbf{\textcolor{blue}{40805.3}} & 35098.5 \\
beam\_rider & 25095.7 & 29542.5 & \textbf{\textcolor{blue}{48230.1}} \\
berzerk & 2151.2 & 2626.6 & \textbf{\textcolor{blue}{2749.8}} \\
bowling & 58.0 & \textbf{\textcolor{blue}{63.4}} & 53.1 \\
boxing & 99.5 & 99.4 & \textbf{\textcolor{blue}{99.9}} \\
breakout & 505.2 & \textbf{\textcolor{blue}{538.6}} & 509.8 \\
centipede & 11465.1 & 12325.3 & \textbf{\textcolor{blue}{22505.9}} \\
chopper\_command & \textbf{\textcolor{blue}{12767.2}} & 11765.8 & 11886.1 \\
crazy\_climber & 159244.2 & 158369.9 & \textbf{\textcolor{blue}{161040.2}} \\
defender & \textbf{\textcolor{blue}{41098.7}} & 32225.2 & 36473.5 \\
demon\_attack & \textbf{\textcolor{blue}{114530.2}} & 108496.2 & 111921.2 \\
double\_dunk & \textbf{\textcolor{blue}{16.5}} & 4.0 & 16.3 \\
enduro & 2294.1 & 1923.9 & \textbf{\textcolor{blue}{2339.5}} \\
fishing\_derby & \textbf{\textcolor{blue}{21.6}} & 18.4 & 20.2 \\
freeway & 27.2 & \textbf{\textcolor{blue}{34.0}} & 33.9 \\
frostbite & 4068.1 & \textbf{\textcolor{blue}{5408.0}} & 4233.7 \\
gopher & 82060.6 & 86874.1 & \textbf{\textcolor{blue}{115828.3}} \\
gravitar & 937.0 & \textbf{\textcolor{blue}{942.8}} & 680.9 \\
hero & \textbf{\textcolor{blue}{23799.1}} & 21916.6 & 20374.5 \\
ice\_hockey & \textbf{\textcolor{blue}{-1.7}} & -1.9 & -2.7 \\
jamesbond & 5298.5 & \textbf{\textcolor{blue}{5440.4}} & 4113.6 \\
kangaroo & 14827.6 & 15371.1 & \textbf{\textcolor{blue}{15954.4}} \\
krull & 10591.2 & 10738.0 & \textbf{\textcolor{blue}{11318.5}} \\
kung\_fu\_master & 49695.5 & 52080.6 & \textbf{\textcolor{blue}{58802.2}} \\
montezuma\_revenge & \textbf{\textcolor{blue}{0.1}} & 0.0 & 0.0 \\
ms\_pacman & \textbf{\textcolor{blue}{5860.4}} & 4856.1 & 5048.5 \\
name\_this\_game & \textbf{\textcolor{blue}{20509.1}} & 17064.9 & 13090.9 \\
phoenix & 15475.2 & 25177.3 & \textbf{\textcolor{blue}{91189.4}} \\
pitfall & \textbf{\textcolor{blue}{0.0}} & \textbf{\textcolor{blue}{0.0}} & \textbf{\textcolor{blue}{0.0}} \\
pong & 21.0 & \textbf{\textcolor{blue}{21.0}} & 21.0 \\
private\_eye & \textbf{\textcolor{blue}{531.3}} & 388.3 & 176.3 \\
qbert & \textbf{\textcolor{blue}{17573.5}} & 14536.0 & 17418.4 \\
riverraid & 18125.3 & 15726.4 & \textbf{\textcolor{blue}{18472.2}} \\
road\_runner & \textbf{\textcolor{blue}{67084.8}} & 57168.0 & 64577.7 \\
robotank & \textbf{\textcolor{blue}{58.0}} & 56.7 & 54.8 \\
seaquest & 16143.3 & 13501.0 & \textbf{\textcolor{blue}{19401.0}} \\
skiing & -16869.1 & -15085.4 & \textbf{\textcolor{blue}{-10528.6}} \\
solaris & 2615.3 & 2483.3 & \textbf{\textcolor{blue}{2810.6}} \\
space\_invaders & 11873.3 & 10099.6 & \textbf{\textcolor{blue}{14265.7}} \\
star\_gunner & 76556.3 & 75404.8 & \textbf{\textcolor{blue}{88900.3}} \\
surround & 8.4 & 8.2 & \textbf{\textcolor{blue}{8.6}} \\
tennis & \textbf{\textcolor{blue}{22.8}} & 22.7 & 5.8 \\
time\_pilot & 9902.0 & 10009.6 & \textbf{\textcolor{blue}{11675.5}} \\
tutankham & \textbf{\textcolor{blue}{282.8}} & 256.7 & 237.9 \\
up\_n\_down & \textbf{\textcolor{blue}{44893.6}} & 35169.7 & 32083.3 \\
venture & 266.5 & 476.7 & \textbf{\textcolor{blue}{1107.0}} \\
video\_pinball & 570852.7 & 603852.1 & \textbf{\textcolor{blue}{727091.1}} \\
wizard\_of\_wor & 21667.1 & 24397.5 & \textbf{\textcolor{blue}{36049.8}} \\
yars\_revenge & \textbf{\textcolor{blue}{27264.3}} & 26056.7 & 24099.4 \\
zaxxon & 11707.1 & 11120.2 & \textbf{\textcolor{blue}{12264.4}} \\
\end{tabular}
}
\caption{Raw max test scores across all 57 Atari games, starting with 30 no-op actions.}
\label{fig:atari_individ_results}
\end{table}
\newpage
\section{Additional experimental results}\label{sec:examples}

In Section \ref{sec:edrlTabular}, we saw that the expectiles learned by EDRL-Naive on an $N$-Chain with length 15 collapsed, whereas the expectiles learnt by EDRL were reasonable approximations to the true expectiles of the return distribution. This resulted in lower average expectile estimation error with the latter expectiles, as described in Definition \ref{def:approxbc}. In Figure \ref{fig:nchainEDRLCombined}, we supplement this by plotting Wasserstein distance between an imputed distribution for the learnt statistics and the true return distribution. This gives an alternate metric which additionally indicates how well the collection of learnt statistics summarises the full return distribution. Under this metric, we observe that increasing the number of expectiles always leads to improved performance under EDRL, whilst for EDRL-Naive, poor Wasserstein reconstruction error is observed for large numbers of expectiles and/or distance from the goal state.

We also include results for $N$-chain environments with different reward distributions, observing qualitatively similar phenomena as those noted for Figure \ref{fig:nchainEDRLCombined}. Specifically, we use two additional variants of the reward distribution at the goal state: uniform and Gaussian. We plot average expectile error in Figure \ref{fig:unifGaussianExpectiles}, and Wasserstein distance between imputed and true return distributions in Figure \ref{fig:unifGaussianWasserstein}.

\begin{figure}[h]
    \centering
    \null
    \hfill
     \subfloat[Expectile estimation error\label{fig:nchainEDRLCombined1}]{%
       \includegraphics[width=0.38\textwidth]{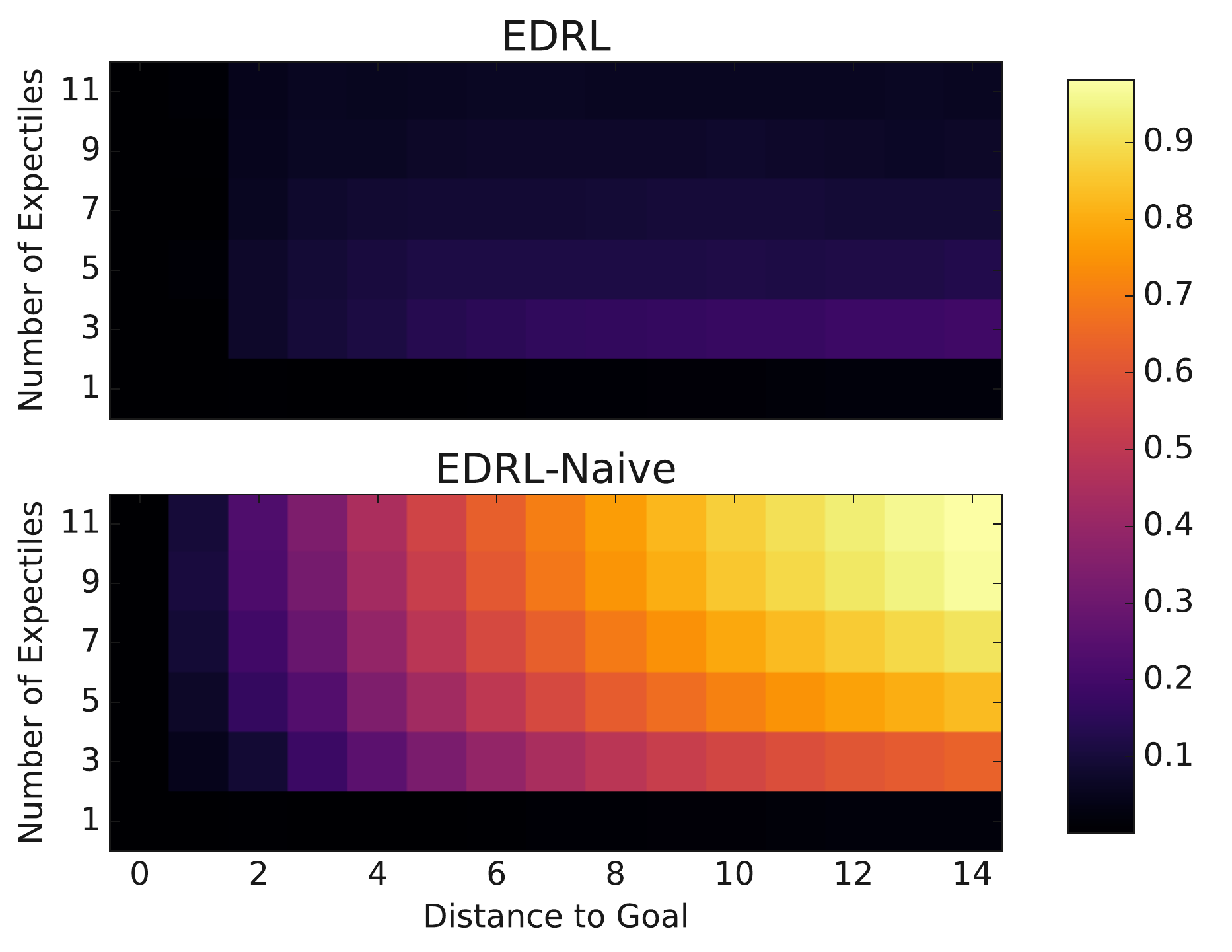}
     }
     \hfill
     \subfloat[1-Wasserstein distance on imputed samples\label{fig:nchainEDRLCombined2}]{%
       \includegraphics[width=0.38\textwidth]{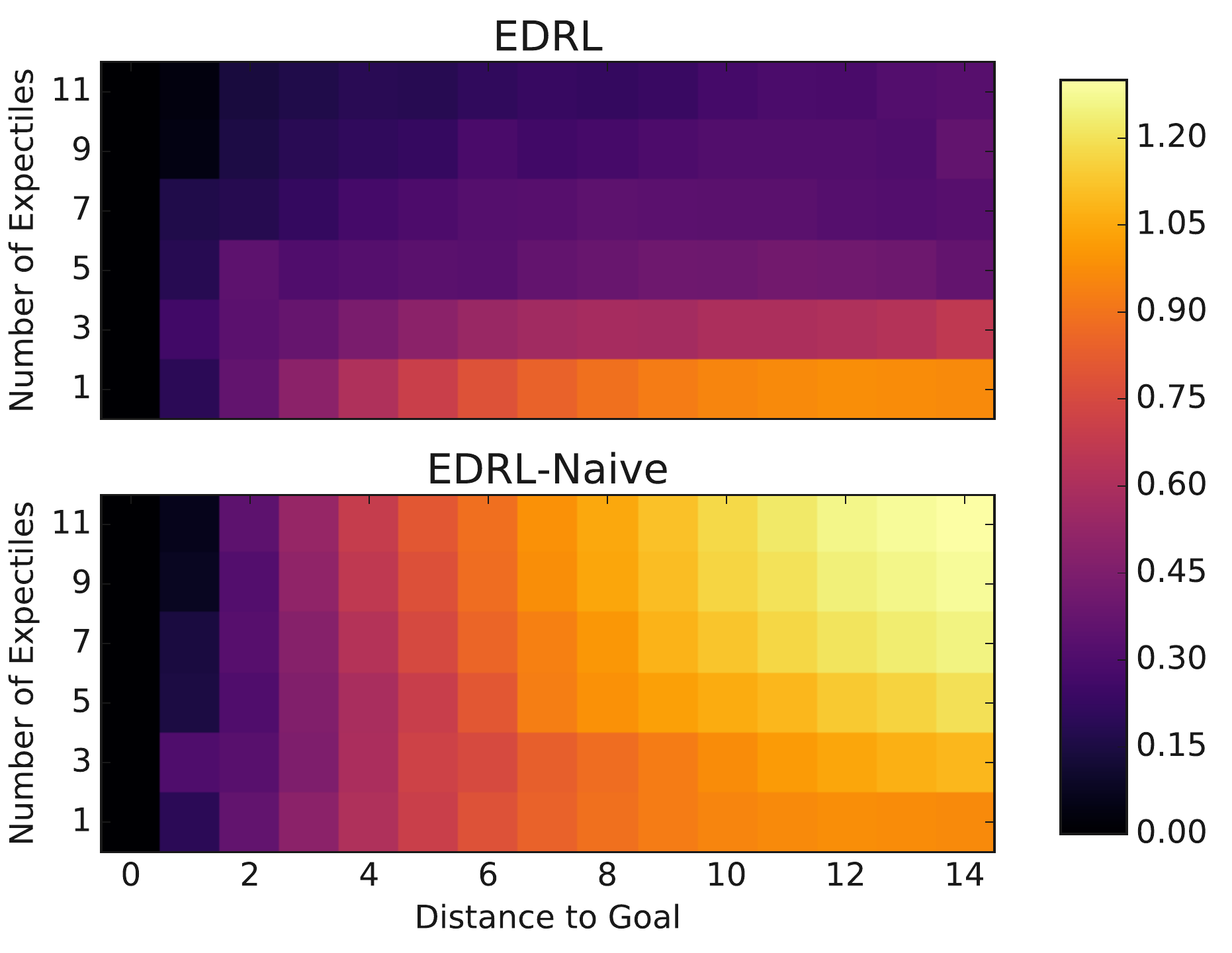}
     }
     \hfill
     \null
     \caption{Expectile estimation error and 1-Wasserstein distance between imputed samples and the true return distribution for varying numbers of learned expectiles and different $N$-Chain lengths.}
     \label{fig:nchainEDRLCombined}
\end{figure}

\begin{figure}[h]
    \centering
    \null
    \hfill
     \subfloat[Uniform (-1, 1) reward distribution.\label{fig:nchainExpectileErrorUniform}]{%
       \includegraphics[width=0.38\textwidth]{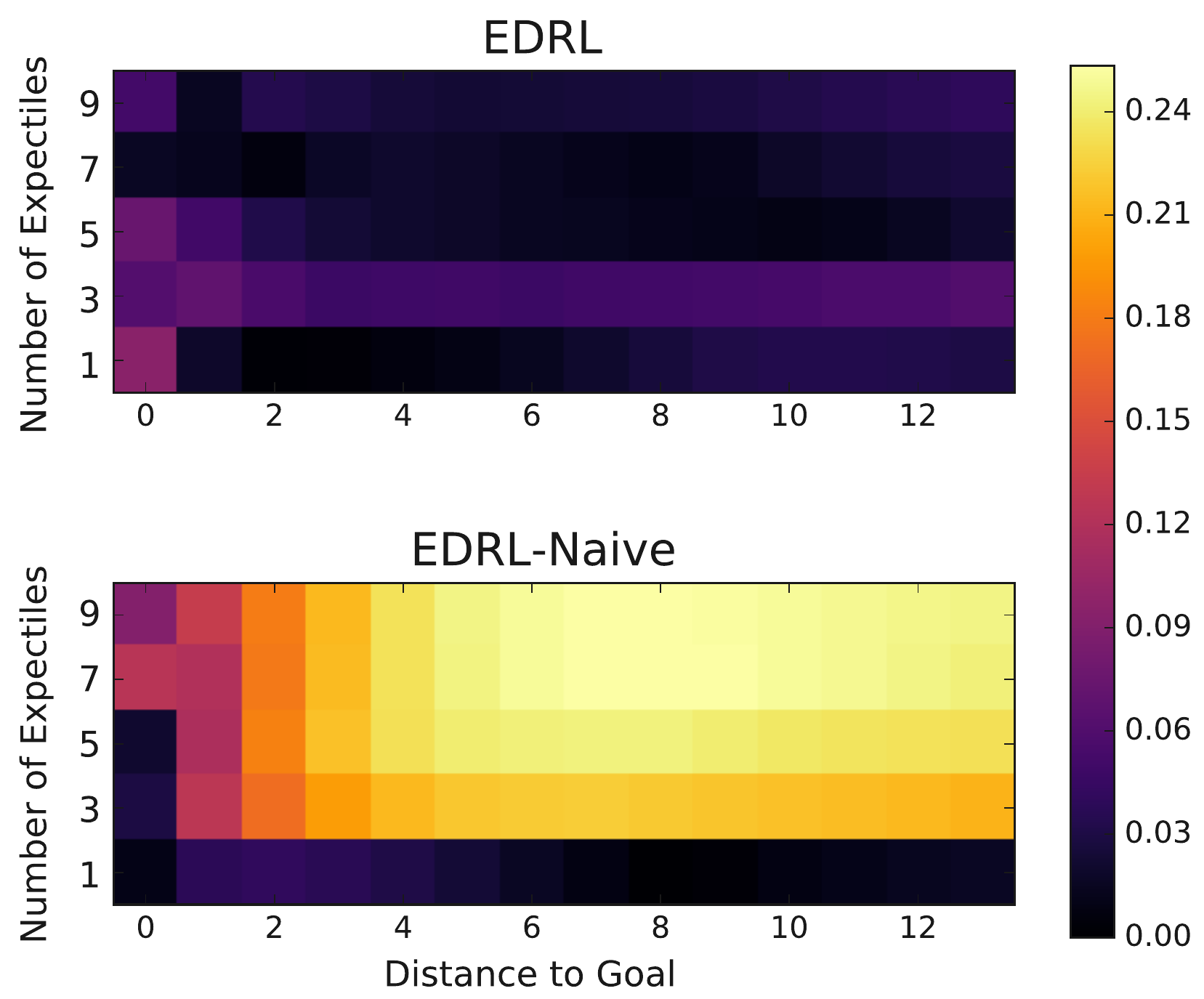}
     }
     \hfill
     \subfloat[Standard Gaussian reward distribution.\label{fig:nchainExpectileErrorGaussian}]{%
       \includegraphics[width=0.38\textwidth]{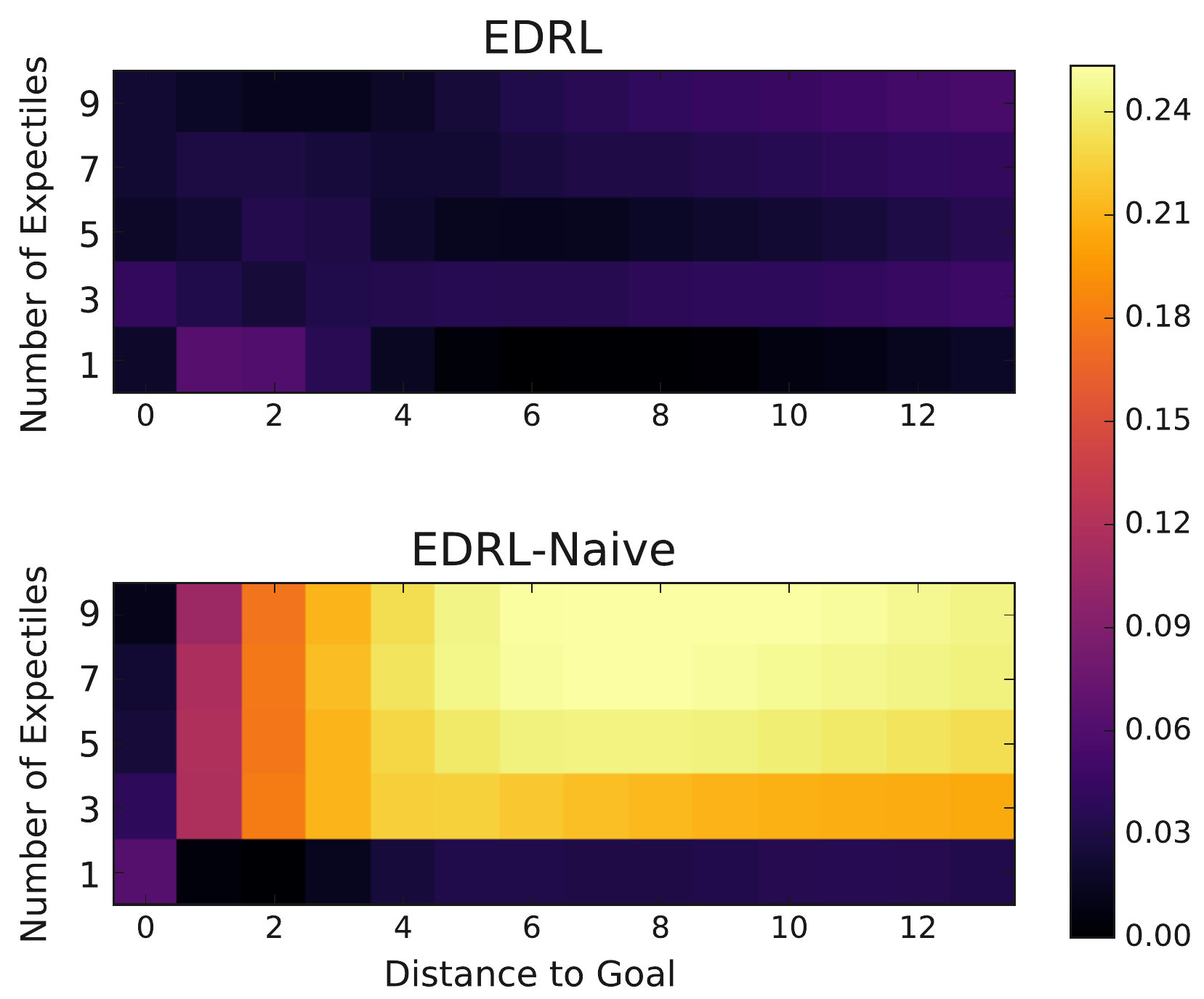}
     }
     \hfill
     \null
     \caption{Expectile estimation error for varying number of expectiles and different chain lengths. Different terminal reward distributions.}
     \label{fig:unifGaussianExpectiles}
\end{figure}

\begin{figure}[h]
    \centering
    \null
    \hfill
     \subfloat[Uniform (-1, 1) reward distribution.\label{fig:nchainW1Uniform}]{%
       \includegraphics[width=0.38\textwidth]{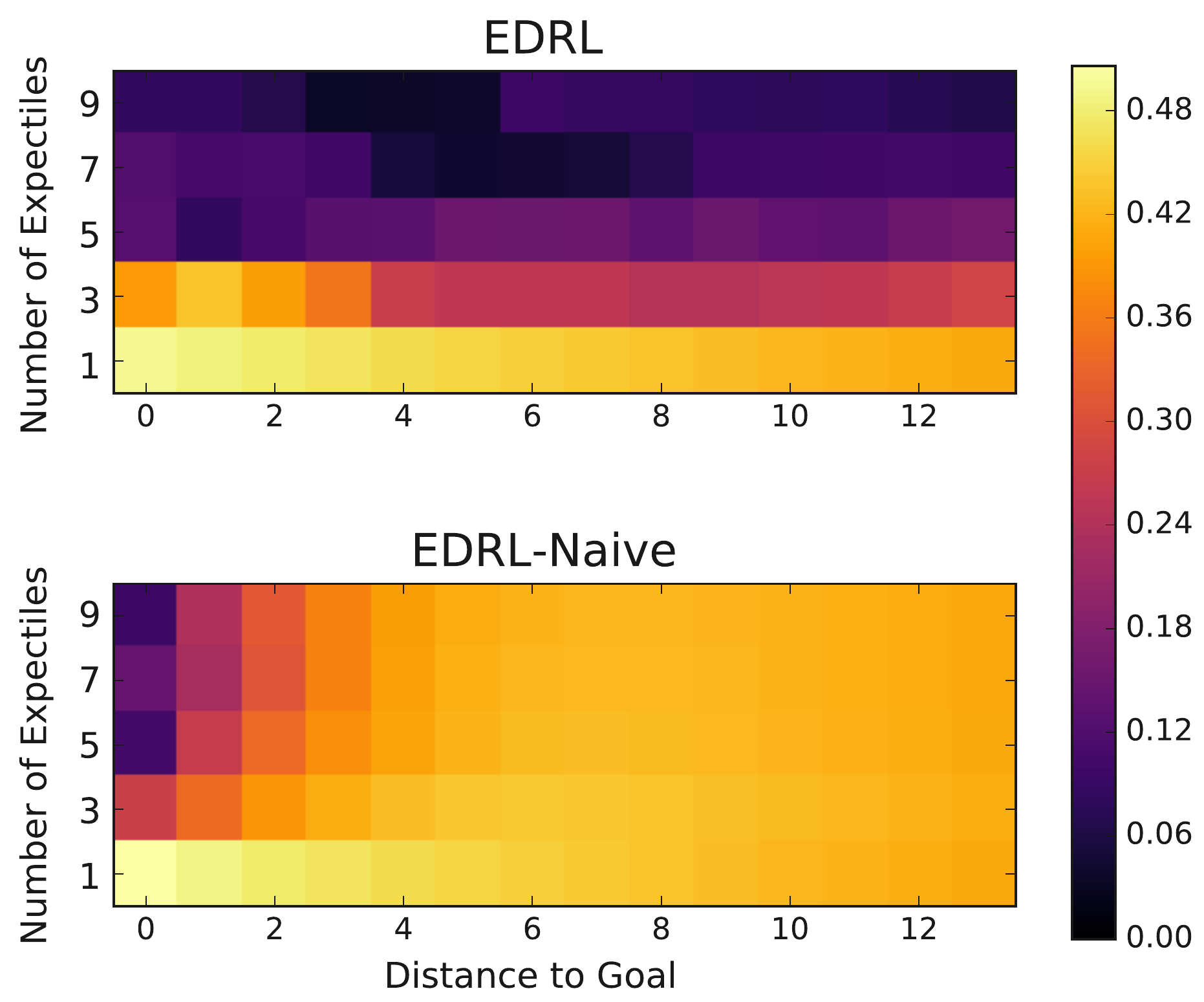}
     }
     \hfill
     \subfloat[Standard Gaussian reward distribution.\label{fig:nchainW1Gaussian}]{%
       \includegraphics[width=0.38\textwidth]{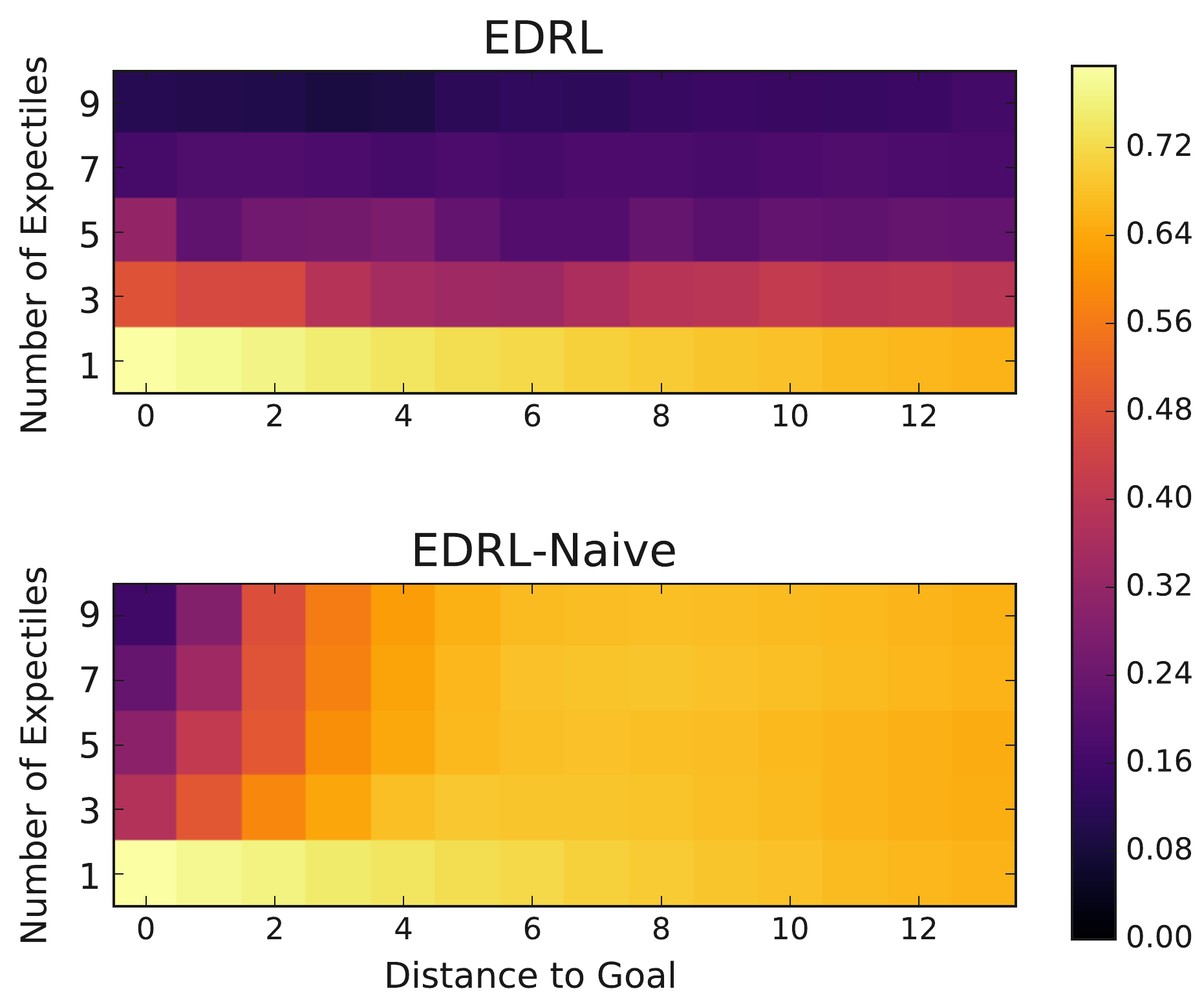}
     }
     \hfill
     \null
     \caption{$1$-Wasserstein distance for varying number of expectiles and different chain lengths. Different terminal reward distributions.}
     \label{fig:unifGaussianWasserstein}
\end{figure}

\end{document}